%%%%%%%% ICML 2021 EXAMPLE LATEX SUBMISSION FILE %%%%%%%%%%%%%%%%%

\documentclass{article}
\usepackage{natbib}
% Recommended, but optional, packages for figures and better typesetting:
\usepackage{microtype}
\usepackage{graphicx}
\usepackage{subfigure}
\usepackage{booktabs} % for professional tables

% hyperref makes hyperlinks in the resulting PDF.
% If your build breaks (sometimes temporarily if a hyperlink spans a page)
% please comment out the followin       g usepackage line and replace
% \usepackage{icml2021} with \usepackage[nohyperref]{icml2021} above.

% Attempt to make hyperref and algorithmic work together better:

% Use the following line for the initial blind version submitted for review:
%\usepackage{icml2022}

% If accepted, instead use the following line for the camera-ready submission:
\usepackage[accepted]{icml2022}
\usepackage{hyperref}
% The \icmltitle you define below issue probably too long as a header.
% Therefore, a short form for the running title is supplied here:
\icmltitlerunning{A Robust Phased Elimination Algorithm for Corruption-Tolerant Gaussian Process Bandits}

% Packages

%\usepackage{cite}
\usepackage{graphicx}
\usepackage{enumitem}
\usepackage{multicol}
\usepackage{amsmath,amsthm,amssymb}
\usepackage{dsfont}
\usepackage{hyperref}
\usepackage[capitalise]{cleveref}
\usepackage{xcolor}
\usepackage{bbm}
\usepackage{thmtools}
\usepackage{thm-restate}

% Others
% \renewcommand{\algorithmicrequire}{\textbf{{I}nput:}}
% \renewcommand{\algorithmicensure}{\textbf{Output:}}

% Operators
\DeclareMathOperator*{\argmax}{arg\,max}

% Commands 

%bb

\newcommand{\E}{\mathbb{E}}

\newcommand{\R}{\mathbb{R}}

% vectors

% cal

\newcommand{\Sc}{\mathcal{S}}
\newcommand{\Xc}{\mathcal{X}}

\newcommand{\Hc}{\mathcal{H}}

\newcommand{\Tc}{\mathcal{T}}

% tilde

\newcommand{\ty}{\tilde{y}}
\newcommand{\tY}{\tilde{Y}}

\newcommand{\tmu}{\tilde{\mu}}

% math
\newcommand{\bone}{\mathds{1}}

% algs

% Theorems 
\newtheorem{theorem}{Theorem}

\newtheorem{lemma}[theorem]{Lemma}
\newtheorem{corollary}[theorem]{Corollary}

% Colors
\definecolor{mightadd}{rgb}{0.53, 0.66, 0.42}
\definecolor{comment}{rgb}{0.88, 0.66, 0.37}

% Random

\newcommand{\hilbert}{\Hc_{k}(\Xc)}

\newcommand{\Rc}{\mathcal{R}}
\newcommand{\xtilde}{\widetilde{x}}
\newcommand{\ytilde}{\widetilde{y}}
\newcommand{\RR}{\mathbb{R}}
\newcommand{\Ac}{\mathcal{A}}
\newcommand{\xbar}{\bar{x}}
\newcommand{\Hbar}{\bar{H}}
\newcommand{\Otilde}{O^*}
\newcommand{\tepsilon}{\widetilde{\epsilon}}
\renewcommand{\tmu}{\widetilde{\mu}}
\renewcommand{\tY}{\widetilde{Y}}
\renewcommand{\ty}{\widetilde{y}}
\usepackage{lipsum}

\begin{document}

\twocolumn[
\icmltitle{A Robust Phased Elimination Algorithm for \\ Corruption-Tolerant Gaussian Process Bandits}

% It is OKAY to include author information, even for blind
% submissions: the style file will automatically remove it for you
% unless you've provided the [accepted] option to the icml2021
% package.

% List of affiliations: The first argument should be a (short)
% identifier you will use later to specify author affiliations
% Academic affiliations should list Department, University, City, Region, Country
% Industry affiliations should list Company, City, Region, Country

% You can specify symbols, otherwise they are numbered in order.
% Ideally, you should not use this facility. Affiliations will be numbered
% in order of appearance and this is the preferred way.
%\icmlsetsymbol{equal}{*}

\begin{icmlauthorlist}
    \icmlauthor{Ilija Bogunovic}{to}
    \icmlauthor{Zihan Li}{goo}
    \icmlauthor{Andreas Krause}{to}
    \icmlauthor{Jonathan Scarlett}{goo}
\end{icmlauthorlist}

\icmlaffiliation{to}{ETH Z\"urich}
\icmlaffiliation{goo}{National University of Singapore}

\icmlcorrespondingauthor{Ilija Bogunovic}{ilijab@ethz.ch}

% You may provide any keywords that you
% find helpful for describing your paper; these are used to populate
% the "keywords" metadata in the PDF but will not be shown in the document
%\icmlkeywords{Machine Learning, ICML}

\vskip 0.3in
]

% this must go after the closing bracket ] following \twocolumn[ ...

% This command actually creates the footnote in the first column
% listing the affiliations and the copyright notice.
% The command takes one argument, which is text to display at the start of the footnote.
% The \icmlEqualContribution command is standard text for equal contribution.
% Remove it (just {}) if you do not need this facility.

\printAffiliationsAndNotice{}  % leave blank if no need to mention equal contribution
%\printAffiliationsAndNotice{\icmlEqualContribution} % otherwise use the standard text.

\begin{abstract}
    We consider the sequential optimization of an unknown, continuous, and expensive to evaluate reward function, from noisy and adversarially corrupted observed rewards. 
    When the corruption attacks are subject to a suitable budget $C$ and the function lives in a Reproducing Kernel Hilbert Space (RKHS), the problem can be posed as {\em corrupted Gaussian process (GP) bandit optimization}. We propose a novel robust elimination-type algorithm that runs in epochs, combines exploration with infrequent switching to select a small subset of actions, and plays each action for multiple time instants. Our algorithm, {\em Robust GP Phased Elimination (RGP-PE)}, successfully balances robustness to corruptions with exploration and exploitation such that its performance degrades minimally in the presence (or absence) of adversarial corruptions.  
    When $T$ is the number of samples and $\gamma_T$ is the maximal information gain, the corruption-dependent term in our regret bound is $O(C \gamma_T^{3/2})$, which is significantly tighter than the existing $O(C \sqrt{T \gamma_T})$ for several commonly-considered kernels. 
    We perform the first empirical study of robustness in the corrupted GP bandit setting, and show that our algorithm is robust against a variety of adversarial attacks. \looseness=-1
\end{abstract}
%, e.g., for squared-exponential kernel $O(C (\log T)^{\tfrac{3(d+1)}{2}})$ in comparison to the state of the art $O(C\sqrt{T} (\log T)^{\tfrac{d+1}{2}})$.
   
\section{Introduction}

Black-box optimization is a fundamental problem with far-reaching applications including hyperparameter tuning \cite{snoek2012practical}, robotics \cite{lizotte2007automatic}, and chemical design \cite{griffiths202constrained}, among others.  To make the problem tractable, a variety of smoothness properties have been adopted, and Reproducing Kernel Hilbert Space (RKHS) functions have proved to provide a versatile framework that can be tackled via Gaussian process (GP) based algorithms \cite{srinivas2009gaussian,chowdhury17kernelized}.  This problem is often referred to as {\em GP bandits} or {\em kernelized bandits}.

While an extensive line of works have established GP bandit algorithms and regret bounds, settings with adversarial corruptions have only arisen relatively recently.  Such corruptions may come in the form of outliers \cite{martinez2018practical}, perturbations of sampled inputs \cite{beland2017bayes,nogueira2016unscented,dai2017stable}, adversarial noise in the rewards \cite{bogunovic2020corruption}, or perturbations of the final recommendation \cite{bogunovic2018adversarially}.  In this work, we are interested in the setting of adversarial noise in the rewards, 
in which the performance of standard non-robust GP bandit algorithms can deteriorate significantly (see \cref{fig:inverted_final}).\looseness=-1

The first work considering this setting \cite{bogunovic2020corruption} established regret bounds for various algorithms depending on the degree of knowledge on the corruption level $C$ (defined formally in Section \ref{sec:setup}).  A key limitation in their regret bound is that the main corruption-dependent term, $C$, and the usual uncorrupted regret term, which is $\sqrt{T}$ or higher (with time horizon $T$), are {\em multiplied together}.  That is, the dependence on $C$ is multiplicative with respect to the uncorrupted bound.   Analogous studies of bandits with independent arms \cite{lykouris2018stochastic,gupta2019better} or linear rewards \cite{bogunovic2021stochastic} suggest that {\em additive} dependence may be possible, but this has remained very much open in the GP bandit setting.

In this paper, we address this fundamental gap in the literature by introducing a novel algorithm in which the uncorrupted term and the $C$-dependent term are clearly decoupled, and the latter is only multiplied by a kernel-dependent function of $T$ that can be much smaller than $\sqrt{T}$. 
    
\begin{figure*}[t]
  \centering\includegraphics[height=0.2\textwidth, width=0.4\textwidth]{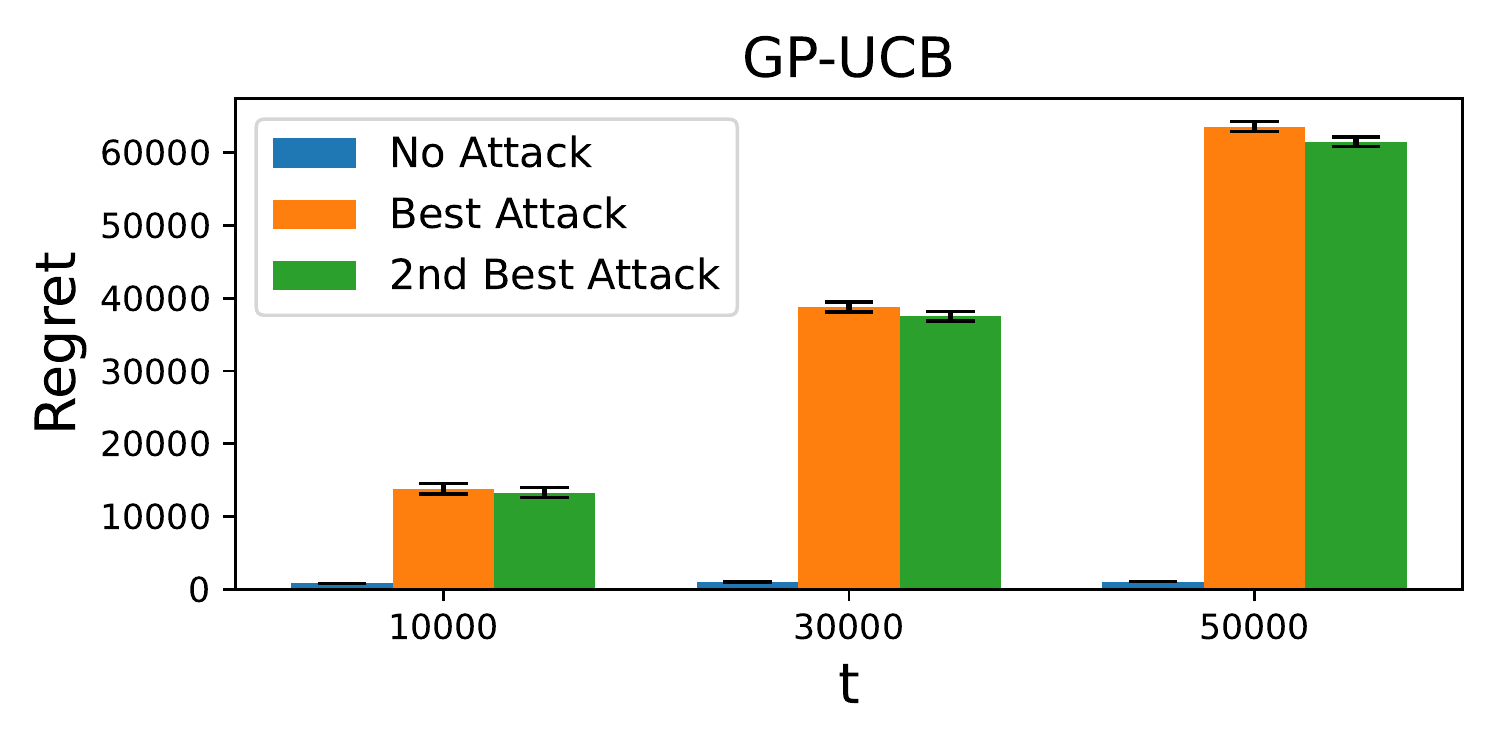} \hspace{15mm}
  \centering\includegraphics[height=0.2\textwidth, width=0.4\textwidth]{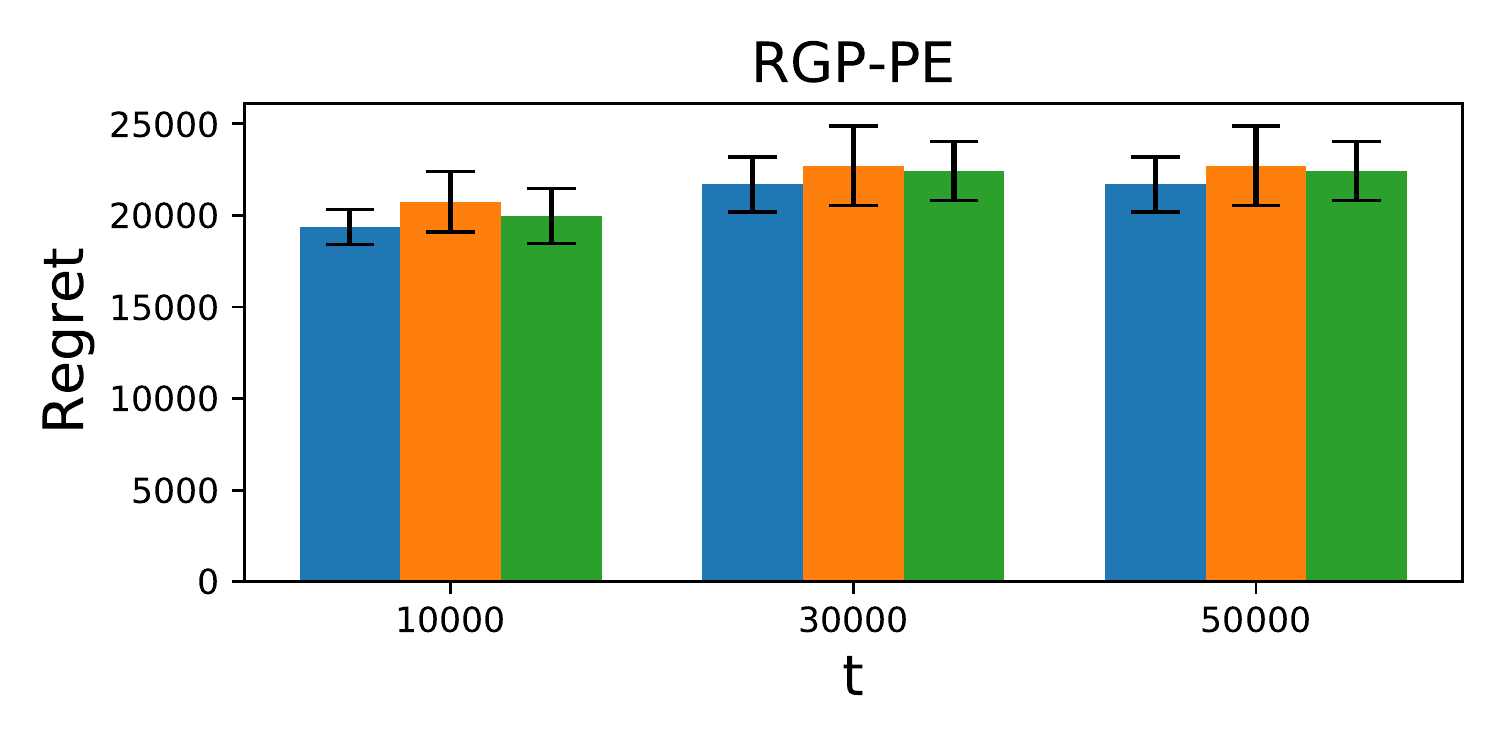}
    \vspace{-1em}

    \caption{Performance of GP-UCB \cite{srinivas2009gaussian} and Robust GP Phased Elimination (RGP-PE, this work) with no attacks and the two most effective corruption attacks on the Robot3D pushing task. As the number of samples $t$ increases, the performance of non-robust GP-UCB deteriorates significantly under both attacking strategies, while the performance of the proposed algorithm remains robust.\looseness=-1}
%   \vspace{-1em}
  \label{fig:inverted_final}
\end{figure*}

% Related work paragraph structure: 
% \begin{itemize}[noitemsep, topsep=0pt]
%     \item Related work in the K-armed bandit and linear bandit setting; distinguish between two adversaries (one that observes the decision and the other one that does not) 
%     \item Paragraph on our Linear bandit paper and our Corrupted GP bandit setting
%     \item Robustness considerations in GP bandits? Also, related work of misspecification (IB: not sure if we need this one.)
%     \item Paragraph on corrupted: Active learning; RL; online learning.
% \end{itemize}

\textbf{Related work.} The closest work to ours is the one of \citet{bogunovic2020corruption}, which also considers the Corruption-Tolerant GP Bandit setting. In that work, the authors propose a confidence-bound-based algorithm with enlarged confidence.  As outlined above, the regret bound therein scales as $O(C\sqrt{T \gamma_T})$, and the possibility of additive $C$ dependence was left as an open problem.

The question of additive vs.~multiplicative dependence first arose in multi-armed bandits with independent arms, with an initial work \cite{lykouris2018stochastic} being multiplicative, and a subsequent work  \cite{gupta2019better} improving to additive.  Closer to our setup (and in fact a special case of it via the linear kernel) is the case of corrupted stochastic linear bandits, in which additive dependence was obtained in \cite{bogunovic2021stochastic}, with the corruption term more precisely being $O(Cd^{3/2} \log T)$ under mild assumptions.  Our main result will achieve a similar bound as a special case, while being much more general due to handling general kernels, and adopting largely distinct GP-based algorithmic and mathematical techniques.  Other less related results for corrupted linear bandits (e.g., contextual or instance-dependent) are given by \citet{li2019stochastic} and \citet{zhao2021linear}.

Recently, \cite{kirschner2021bias} considered a related GP-based optimization setting with adversarially corrupted observations, in which the corruptions are not allowed to depend on the chosen action (similar to \cite{krishnamurthy2018semiparametric}), in contrast to the stronger adversary considered in \cite{bogunovic2020corruption} and our work. Under the assumption that corruptions are bounded by a constant in every round, they provide an algorithm based on a reduction to dueling bandits that attains similar sublinear cumulative regret bounds to the uncorrupted setting.  Since such a result is attained despite the total amount of corruption being linear in $T$, this highlights that the assumed power of the adversary can drastically impact the problem.

In the GP setting, other notions of robustness have included outliers \cite{martinez2018practical}, misspecification \cite{camilleri2021high,bogunovic2021misspecified}, input noise \cite{beland2017bayes,nogueira2016unscented,dai2017stable}, risk-aversion \cite{nguyen2021value,cakmak2020bayesian,makarova2021risk}, and corruptions in the final recommendation \cite{bogunovic2018adversarially, kirschner2020distributionally}.  Moreover, other settings with adversarial corruptions have included online \cite{ito2021optimal}, active \cite{chen2021corruption}, reinforcement learning \cite{lykouris2021corruption,wei2021model,banihashem2021defense}, and multi-agent RL \cite{liu2021cooperative}.  To our knowledge, none of the techniques in the preceding works are directly applicable in our setting.\looseness=-1

\textbf{Contributions.} Our main contributions are as follows:
\begin{itemize}[noitemsep, topsep=0pt]
    \item We provide a novel algorithm for GP bandit optimization with adversarial corruptions, that attains the first regret bound to avoid multiplying the uncorrupted part by the corruption level $C$.  Our algorithm crucially incorporates a {\em rare switching} idea, along with a non-standard robust estimator, enlarged confidence bounds, and a minimal number of plays of each selected action; see Sections \ref{sec:model} and \ref{sec:alg} for details.
    \item We show that our regret bound is {\em provably near-optimal} for the SE kernel, and recovers recently-established bounds for stochastic linear bandits \cite{bogunovic2021stochastic} that are also known to be near-optimal.  For the Mat\'ern kernel, the degree of tightness depends on the dimension and smoothness parameter, but our bound strictly improves on that of \citet{bogunovic2020corruption} in all scaling regimes where the latter is non-trivial (i.e., sub-linear in $T$); see Table \ref{tbl:summary} on Page \pageref{tbl:summary} for a summary.
    \item We demonstrate that our algorithm is able to successfully defend against various attacks, including those recently proposed by \citet{han2021adversarial}.
    \item In Appendix \ref{sec:linear}, we explore an alternative approach based on a reduction from GP bandits to linear bandits, and show that it can reap some, but not all, of the advantages discussed above.
\end{itemize}
    
\section{Problem Setting and Preliminaries} \label{sec:setup}

We consider the Gaussian process bandit (i.e., kernelized bandit) problem, in which the goal of the learner is to maximize the collected rewards by sequentially querying the unknown reward function $f: \Xc \to \R$ over $T$ rounds. In particular, at every time $t$, the learner selects $x_t \in \Xc$ and receives a noisy reward observation\looseness=-1
\begin{equation}
    y_t = f(x_t) + \epsilon_t,
\end{equation}
where $\epsilon_t$ is assumed to be $\sigma$-sub-Gaussian with independence over time steps, and $\sigma$ is also known.% to the learner.\looseness=-1 

We consider the corrupted setting in which, besides the stochastic noise, the observations at every time step are adversarially corrupted, so that the learner observes\looseness=-1
\begin{equation}
    \ty_t = y_t + c_t.
\end{equation}

Following \citet{bogunovic2020corruption}, we make the following assumptions on the adversary:
\begin{itemize}[noitemsep, topsep=0pt, leftmargin=5ex]
    \item The adversary knows the true reward function $f(\cdot)$, and, at every round $t$, it observes $x_t$ before deciding upon the corruption $c_t$.
    \item The total adversarial corruption budget over $T$ rounds is bounded as follows:    
    \begin{equation}
        \sum_{t=1}^T |c_t| \leq C.
    \end{equation}
    In this paper, we focus on the case where $C$ is known to the learner.  We expect unknown-$C$ extensions to be possible in a similar spirit to \cite{bogunovic2021stochastic}, but since the known $C$ case is already challenging, we prefer not to obfuscate our new ideas with the added technical difficulty of addressing unknown $C$.
\end{itemize}

The domain $\Xc$ is assumed to either be finite, or a compact subset of $\RR^d$ for some dimension $d$ (e.g., $\Xc = [0,1]^d$).  In either case, $\Xc$ is endowed with a continuous, positive semidefinite kernel function $k(\cdot,\cdot): \Xc \times \Xc \to \mathbb{R}$ that is normalized to satisfy $k(x, x')\leq 1$ for all $x, x' \in \Xc$. We further assume that $f$ has a bounded norm in the corresponding Reproducing Kernel Hilbert Space (RKHS) $\Hc_k$, i.e., $ \|f\|_k \leq B$ (see \cref{sec:prelim} for more details). This assumption permits the construction of confidence bounds via Gaussian process (GP) models (\cref{sec:specific}).\looseness=-1

The learner's performance is measured using the widely-considered notion of cumulative regret:
\begin{equation} \label{eq:regret}
  R_T =  \sum_{t=1}^T \Big(\max_{x\in \Xc} f(x) - f(x_t)\Big),
\end{equation}
and we are interested in the {\em joint} dependence of $R_T$ on $C$ and $T$. 
As noted by \citet{lykouris2018stochastic} and \citet{bogunovic2020corruption}, one could alternatively define the cumulative regret with respect to the corrupted values
(i.e., $f(x) + c_t$), but the difference between the two is minor since these notions coincide to within an additive term of $2C$.

\subsection{Gaussian Process Model under Corruptions} \label{sec:model}
In the standard (non-corrupted) setting, previous algorithms use
(i) zero-mean GP priors for modeling the uncertainty in
$f$ (i.e., they assume $f \sim GP(0, k)$), and (ii) Gaussian likelihood models for the observations. 
As more data points become available, Bayesian posterior updates are then performed according to a misspecified model in which the noise variables $\epsilon_t = y_t - f(x_t)$ are assumed to be drawn independently across $t$ from $\mathcal{N}(0, \lambda)$, where $\lambda$ is a hyperparameter that may differ from the true noise variance $\sigma^2$. In particular, in the absence of corruptions, given a sequence of points $\lbrace x_1, \dots, x_t \rbrace$ and their noisy observations $\lbrace y_1, \dots, y_t \rbrace$, the posterior mean and variance are given by
\begin{align}
    \mu_{t}(x) &= k_t(x)^T\big(K_t + \lambda I_t \big)^{-1} Y_t,  \label{eq:posterior_mean} \\ 
    \sigma_{t}^2(x) &= k(x,x) - k_t(x)^T \big(K_t + \lambda I_t \big)^{-1} k_t(x), \label{eq:posterior_variance}
\end{align}
where $k_t(x) = \big[k(x_i,x)\big]_{i=1}^t$, $K_t = \big[k(x_t,x_{t'})\big]_{t,t'}$ is the kernel matrix, and $Y_t \in \RR^t$ contains the non-corrupted observations up to time $t$, i.e., $Y_t[i] = y_i$ for $i \in [t]$.

% \textbf{Corrupted model.} 
In the corrupted setting, given a sequence of inputs $\lbrace x_1, \dots , x_t \rbrace$ and their corrupted observations $\lbrace \ty_1, \dots , \ty_t\rbrace$ (with $\ty_i = y_i + c_i$), we propose the following non-standard robust posterior mean estimator:
\begin{equation}
    \tmu_{t}(x) = k_t(x)^T(K_t + \lambda I_t)^{-1} \tY_t, \label{eq:corrupted_mean}
\end{equation}
where $\tY_t \in \R^t$ and $\tY_t[i] = \tfrac{\sum_{j=1}^t \bone \lbrace x_i = x_j \rbrace \ty_j}{\sum_{j=1}^t \bone \lbrace x_i = x_j \rbrace}$ for $i \in [t]$.  
Intuitively, the averaging of terms corresponding to identical actions is done in order to diminish the impact of corruption, and this will be a crucial component of our analysis.
% Intuitively, $\tmu_{t}(\cdot)$ is more robust than $\mu_t(\cdot)$, since rewards that correspond to the same action are averaged which in turn diminishes the impact of corruption. 

In our algorithm, besides $\tmu_{t}(\cdot)$, we will also make use of the standard posterior variance $\sigma_{t}^2(\cdot)$ as given in \cref{eq:posterior_variance}; the use of this quantity is intuitively reasonable because GP posterior variances do not depend on the observations.  % As in the standard case, our regret guarantees in the corrupted setting will also depend on the maximum information gain $\gamma_T$. 

% Next, we explain the main steps of our Robust GP Phased Elimination algorithm (\cref{alg:cpe}).

The main quantity that characterizes the regret bounds in the non-corrupted setting is the \emph{maximum information gain} \cite{srinivas2009gaussian}, defined at time $t$ as 
\begin{equation}
    \label{eq:max_info_gain}
    \gamma_t = \max_{x_1, \dots, x_t} \frac{1}{2} \ln  \det(I_t + \lambda^{-1}K_t),
\end{equation}
and we will also make use of this quantity.

\section{Robust GP Phased Elimination} \label{sec:alg}

\subsection{Algorithm and Confidence Bounds}
    
Our algorithm works in epochs indexed by $h=0,1,\dots, H-1$, each of which consists of sampling a batch of points.  The epoch lengths may be chosen adaptively, and hence $H$ may not be deterministic, but we will ensure with probability one that $H \le \Hbar$ with $\Hbar = \log_2 T$.  The length of epoch $h$ is denoted by $u_h$, so that $\sum_{h=0}^{H-1} u_h=T$. 

The algorithm and analysis are based on the widespread notion of confidence bounds.  While our confidence bounds will be expanded to account for corruptions, it is useful to consider the following generic assumption regarding non-corrupted observations (although the algorithm cannot access these, they will appear in our mathematical analysis).

\begin{restatable}[Regular confidence bounds]{assumption}{rcb}
\label{assumption:regular_confidence_bounds}
	Let $\mu^{(h)}(x)$ and $\sigma^{(h)}(x)$ denote the posterior mean and standard deviation computed (hypothetically) using only the non-corrupted observations $\lbrace (x_i, y_i) \rbrace_{i=1}^{u_h}$ in epoch $h$ using \cref{eq:posterior_mean,eq:posterior_variance}. We assume that given $\delta \in (0,1)$, there exists a sequence of parameters $\beta_h = \beta_h(\delta)$ which is non-decreasing in $h$ and yields with probability at least $1-\delta$ that
    \begin{equation}
        |\mu^{(h)}(x)-f(x)|\leq \beta_h\sigma^{(h)}(x)
    \end{equation}
    simultaneously for all $h \ge 0$ and $x\in\Xc$.
\end{restatable}

Specific choices of $\beta_h$ satisfying this assumption will be considered in Section \ref{sec:specific}.

Similarly to previous kernelized algorithms (e.g., \citet{bogunovic2020corruption,bogunovic2021misspecified}), our proposed algorithm makes use of enlarged confidence bounds. Hence, our first result concerns concentration of an RKHS member under corrupted observations, where we make use of the proposed estimator from \cref{eq:corrupted_mean}.   

\begin{restatable}[Corrupted confidence bounds]{lemma}{ccb}
    \label{lemma:corrupted_confidence_bounds}
    Under \cref{assumption:regular_confidence_bounds}, let $\tmu^{(h)}(x)$ denote the posterior mean based on only the corrupted observations $\lbrace (x_i, \ty_i) \rbrace_{i=1}^{u_h}$ in epoch $h$ using \cref{eq:corrupted_mean}, 
    and let  
    $u_{\min} \geq 1$ denote the minimum number of times any single action from $\lbrace x_i \rbrace_{i=1}^{u_h}$ is played, i.e., 
    $u_{\min} = \min_{x \in \lbrace x_1, \dots, x_{u_h} \rbrace} \sum_{i=1}^{u_h} \bone \lbrace x_i = x \rbrace$. Then, with probability at least $1-\delta$, it holds for all $x \in \Xc$ and $h\geq 0$ that\looseness=-1
    \begin{equation}
      |\tmu^{(h)} (x) - f (x)| \leq \Big( \beta_h+ \tfrac{C\sqrt{u_h}}{u_{\min} \lambda} \Big) \sigma^{(h)}(x).
    \end{equation} 
\end{restatable}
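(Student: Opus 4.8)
The plan is to compare $\tmu^{(h)}$ against the hypothetical non-corrupted posterior mean $\mu^{(h)}$ of \cref{assumption:regular_confidence_bounds} through the triangle inequality, $|\tmu^{(h)}(x)-f(x)| \le |\tmu^{(h)}(x)-\mu^{(h)}(x)| + |\mu^{(h)}(x)-f(x)|$, and bound the second term by $\beta_h\sigma^{(h)}(x)$ directly via \cref{assumption:regular_confidence_bounds}. That assumption already provides a single event of probability at least $1-\delta$ on which its bound holds simultaneously for all $h$ and $x$; since the bound I will derive for the first term is deterministic (it uses only the corruption-budget constraint, no further randomness), no additional union bound is needed and the stated probability is retained. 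Everything thus reduces to controlling the corruption-induced shift $\tmu^{(h)}(x)-\mu^{(h)}(x)$.

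Write $k^{(h)}(x)$, $K^{(h)}$ for the epoch-$h$ kernel vector and kernel matrix (formed over the $u_h$ sampled points, with repetitions), and $\tY^{(h)}$, $Y^{(h)}$ for the corrupted resp.\ hypothetical non-corrupted observation vectors; then $\tmu^{(h)}(x)-\mu^{(h)}(x) = k^{(h)}(x)^\top(K^{(h)}+\lambda I)^{-1}(\tY^{(h)}-Y^{(h)})$, whose $i$-th coordinate is $\tfrac{1}{n_i}\sum_{j:x_j=x_i}\ty_j - y_i$, with $n_i = \sum_j \bone\{x_j = x_i\}\ge u_{\min}$ the size of the block of times sharing action $x_i$. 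The key structural observation is that the ``equivalent weight'' vector $w := (K^{(h)}+\lambda I)^{-1}k^{(h)}(x)$ is constant on each such block: $K^{(h)}$ and $k^{(h)}(x)$ are invariant under the permutation swapping two times with the same action, hence so is $(K^{(h)}+\lambda I)^{-1}$, and therefore $w$ is fixed by it. Consequently $w^\top(\tY^{(h)}-Y^{(h)})$ depends on $\tY^{(h)}-Y^{(h)}$ only through its block sums, and a one-line computation shows that the sum of $\tfrac{1}{n_i}\sum_j \ty_j - y_i$ over a block equals $\sum_{j\in\text{block}} c_j$ — the stochastic-noise discrepancies cancel \emph{exactly}. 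This cancellation is precisely what the averaging in \cref{eq:corrupted_mean} is designed for, and it yields the clean identity $\tmu^{(h)}(x)-\mu^{(h)}(x) = k^{(h)}(x)^\top(K^{(h)}+\lambda I)^{-1}\bar C^{(h)}$, where $\bar C^{(h)}$ has $i$-th coordinate $\bar c_i := \tfrac{1}{n_i}\sum_{j:x_j=x_i}c_j$.

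It then remains to bound $|k^{(h)}(x)^\top(K^{(h)}+\lambda I)^{-1}\bar C^{(h)}|$. The budget constraint $\sum_t|c_t|\le C$ together with $n_i\ge u_{\min}$ gives $\|\bar C^{(h)}\|_1\le C$ and $\|\bar C^{(h)}\|_\infty\le C/u_{\min}$. By H\"older and then Cauchy--Schwarz, $|k^{(h)}(x)^\top(K^{(h)}+\lambda I)^{-1}\bar C^{(h)}| \le \|(K^{(h)}+\lambda I)^{-1}k^{(h)}(x)\|_1\,\|\bar C^{(h)}\|_\infty \le \sqrt{u_h}\,\|(K^{(h)}+\lambda I)^{-1}k^{(h)}(x)\|_2\,\tfrac{C}{u_{\min}}$. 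The factor $\sigma^{(h)}(x)$ re-enters via the standard identity $\|(K^{(h)}+\lambda I)^{-1}k^{(h)}(x)\|_2 \le \sigma^{(h)}(x)/\sqrt{\lambda}$, which follows from the feature-space forms $k^{(h)}(x)^\top(K^{(h)}+\lambda I)^{-1} = \phi(x)^\top V^{-1}\Phi^\top$ and $\sigma^{(h)}(x)^2 = \lambda\,\phi(x)^\top V^{-1}\phi(x)$ with $V = \Phi^\top\Phi+\lambda I$ (alternatively one can use a lower bound such as $\sigma^{(h)}(x)^2 \ge \lambda k(x,x)/(\lambda+u_h)$ to convert $k^{(h)}(x)^\top(K^{(h)}+\lambda I)^{-1}k^{(h)}(x) = k(x,x)-\sigma^{(h)}(x)^2$ into a multiple of $\sigma^{(h)}(x)^2$). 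Assembling these, and using $\|(K^{(h)}+\lambda I)^{-1}\|_{\mathrm{op}}\le 1/\lambda$ to land on the stated power of $\lambda$, gives $|\tmu^{(h)}(x)-\mu^{(h)}(x)| \le \tfrac{C\sqrt{u_h}}{u_{\min}\lambda}\sigma^{(h)}(x)$; combined with the triangle inequality and \cref{assumption:regular_confidence_bounds}, this is the claim.

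I expect the structural identity of the second paragraph to be the real content and the step most prone to error: one must verify that the stochastic-noise terms in $\tY^{(h)}-Y^{(h)}$ genuinely vanish against the equivalent-weight vector — not merely that they are small — which is exactly the payoff of the repeated-action averaging baked into $\tmu^{(h)}$. Once $\tmu^{(h)}(x)-\mu^{(h)}(x)$ is rewritten in terms of $\bar C^{(h)}$, the remainder is routine norm bookkeeping.
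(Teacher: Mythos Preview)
Your proposal is correct and follows essentially the same route as the paper: triangle inequality against the uncorrupted $\mu^{(h)}$, then the key identity that $\tmu^{(h)}-\mu^{(h)}$ reduces to a corruption-only term (the paper establishes this via explicit feature-space algebra, and your permutation-invariance argument that $w=(K^{(h)}+\lambda I)^{-1}k^{(h)}(x)$ is constant on blocks is a clean equivalent), followed by a Cauchy--Schwarz-type bound amounting to $\|w\|_1\le\sqrt{u_h}\,\|w\|_2\le\sqrt{u_h}\,\sigma^{(h)}(x)/\sqrt{\lambda}$. One small note on your last sentence: the ``$\|(K^{(h)}+\lambda I)^{-1}\|_{\mathrm{op}}\le 1/\lambda$'' remark is unnecessary and does not actually convert $1/\sqrt{\lambda}$ into $1/\lambda$ while retaining the $\sigma^{(h)}(x)$ factor---your chain already delivers $\tfrac{C\sqrt{u_h}}{u_{\min}\sqrt{\lambda}}\,\sigma^{(h)}(x)$, and in fact the paper's own calculation yields the same $\sqrt{\lambda}$ in the denominator (the $\lambda$ in the lemma statement appears to be a harmless slip, since $\lambda$ is treated as a constant throughout).
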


The confidence-bound enlargement is proportional to the total amount of corruption $C$. While this is similar to the confidence intervals used by \citet{bogunovic2020corruption} (Lemma 2), we note the following two important differences: 
\begin{itemize}[noitemsep, topsep=0pt, leftmargin=5ex]
    \item We make use of a novel kernelized mean estimator (\cref{eq:corrupted_mean}) that takes average over rewards corresponding to the same played action;
    \item Our enlargement term is $O(C \tfrac{\sqrt{u_h}}{u_{\min}})$, as opposed to $O(C)$ used in \cite{bogunovic2020corruption}(Lemma 2).  We will typically apply this lemma with $\tfrac{\sqrt{u_h}}{u_{\min}} \ll 1$, so that our confidence width is much smaller.
\end{itemize}
For the second of these, the intuition is that if the same action is played multiple times, it becomes harder for the adversary to hide the true value (i.e., since the rewards of the same played actions are averaged, the adversary needs to spend more of its budget corrupting the reward).

The Robust GP-Phased Elimination algorithm (\cref{alg:cpe}) proceeds in epochs (indexed by $h$) of exponentially increasing length $u_h$.
At every round $t$ (where $t \in \lbrace 1, \dots, l_h \rbrace$ and $l_h = 2^{h+1}$) within an epoch $h$, the algorithm selects an action maximizing a posterior uncertainty computed at some (possibly strictly earlier) time $t'$:
\begin{equation}
    x_t = \argmax_{x \in \Xc_h} \sigma_{t'}(x), \label{eq:xt_choice}
\end{equation}
where $\Xc_h$ denotes the set of active actions in epoch $h$. The selected action is then added to $\Sc_h$ which is a set 
that contains distinct actions selected in epoch $h$. 

The key idea behind using $t'$ instead of $t$ in Eq.~\eqref{eq:xt_choice} is to ensure that our algorithm \emph{rarely switches}, based on a condition relating to the information gain (Line 6), meaning that the same action $x_t$ is typically selected multiple times.  Whenever there are ties, they are resolved arbitrarily but consistently over rounds (i.e., if $\sigma_{t'}(\cdot)$ does not change, the same points are selected).  Based on Lines 6 to 9, we update $t'$ and recompute $\sigma_{t'}(x)$ only when $\det(I_t + \lambda^{-1} K_t)$ increases by a constant factor $\eta$.  % We also note that $\sigma_{t'}(x)$ is computed by using all the previously selected points $\lbrace x_i \rbrace_{i=1}^{t'}$. 

We note that related ideas of rare switching have appeared in the literature \citep[e.g.,][]{abbasi2011improved, wang2021provably}, but to our knowledge we are the first to use this idea in the kernelized bandit problem, and more importantly, the first to use it for the purpose of improving robustness.  Intuitively, by rarely switching, we obtain more samples of the same point, allowing us to average more of them together and making the ``averaged'' observation harder to corrupt.\looseness=-1

\begin{algorithm}[t!]
    \caption{Robust GP Phased Elimination (RGP-PE)}
    \label{alg:cpe}
    \begin{algorithmic}[1]
        \INPUT Domain $ \Xc \subset \R^d$, truncation parameter $\psi > 0$, corruption budget $C$, switching parameter $\eta > 1$, regularization parameter $\lambda > 0$
        \STATE Initialize $l_0 = 2$, and $h=0$ and $\Xc_h = \Xc$
        \STATE Set  $\Sc_h=\emptyset$, $t' = 0$, $\sigma_0(x) = 1$ for all $x \in \Xc_h$
        \FOR{$t = 1,2, \dots, l_h$} 
%            \STATE \todo{It might make sense replacing $t$ and $t'$ with $i$ and $i'$ here; since we use $t$ later on; The only issue is how this is used in the proofs;}
            \STATE Select $x_t = \argmax_{x \in \Xc_h} \sigma_{t'}(x)$ %\Comment{ties are broken arbitrarily but consistently \todo{Explain this.}}
            \STATE Update $\Sc_h \leftarrow \Sc_h \cup \lbrace x_t \rbrace$
            \IF{$\det(I_t + \lambda^{-1} K_t) > \eta \det(I_{t'} + \lambda^{-1} K_{t'})$}
                \STATE Set $t' \leftarrow t$
                \STATE Compute $\sigma_{t'}(\cdot)$ via \cref{eq:posterior_variance} by using $\lbrace x_i \rbrace_{i=1}^{t'}$
            \ENDIF
        \ENDFOR 
        \STATE Set $\xi_h(x)= \tfrac{\sum_{i=1}^{l_h} \bone \lbrace  x = x_i \rbrace}{l_h}$ for every $x \in \Sc_h$ %if $\zeta_h(a) = 0$, $u_h(a)=\lceil m_{h} \max\lbrace\zeta_h(a), \nu\rbrace \rceil$.
        \STATE Set $u_h(x)=\lceil l_{h} \max\lbrace \xi_h(x), \psi \rbrace \rceil$ for every $x \in \Sc_h$
        \STATE Take each action $x \in \Sc_{h}$ exactly $u_h(x)$ times with corresponding rewards $(\ty_j)_{j=1}^{u_h}$ where $u_h = \sum_{x \in \Sc_{h}} u_h(x)$
        \STATE Estimate $\tmu^{(h)}(\cdot)$ and $\sigma^{(h)}(\cdot)$ according to \cref{eq:corrupted_mean} and \cref{eq:posterior_variance} using only the $u_h$ points from the current epoch.
        \STATE Update the active set of actions to:
        \begin{align} 
            \Xc_{h+1} \leftarrow& \Big\lbrace x \in \Xc_{h}: \tmu^{(h)}(x) + \big(\beta_h + \tfrac{C\sqrt{u_h}}{l_h \psi \lambda}\big)\sigma^{(h)}(x) \geq \nonumber \\
            & \max_{x \in \Xc_h} \; \tmu^{(h)}(x) - \big(\beta_h + \tfrac{C\sqrt{u_h}}{l_h \psi \lambda}\big)\sigma^{(h)}(x) \Big\rbrace \nonumber
        \end{align}
        \STATE Set $l_{h+1} \leftarrow 2l_{h}$, $h \leftarrow h+1$ and return to Step 2 (terminating after $T$ total actions are played).
    \end{algorithmic}
\end{algorithm}

After the set $\Sc_h$ is constructed, we define $\xi_h(x)= \tfrac{\sum_{i=1}^{l_h} \bone \lbrace  x = x_i \rbrace}{l_h}$ for every $x \in \Sc_h$, representing the empirical frequency of selecting $x_t \in \Xc_h$ in $l_h$ rounds. The algorithm then plays actions from $\Sc_h$ only, where the number of times each action $x$ from $\Sc_h$ is played is denoted by $u_h(x) = \lceil l_{h} \max\lbrace \xi_h(x), \psi \rbrace \rceil$. Here, the \emph{truncation parameter} $\psi$ ensures that each action from $\Sc_h$ is played sufficiently many times; this idea was used for corrupted linear bandits by \citet{bogunovic2021stochastic}. Our theory suggests a particular choice of $\psi$; see \cref{theorem:main}. Each action $x \in \Sc_h$ is played for $u_h(x)$ times in an arbitrary order, leading to the total epoch length $u_h = \sum_{x \in \Sc_h} u_h(x)$.  

Based on the received noisy and potentially corrupted rewards $\lbrace x_j, \ty_j  \rbrace_{j=1}^{u_h}$, the algorithm updates its estimates $\tmu^{(h)}(\cdot)$ and $\sigma^{(h)}(\cdot)$ according to \cref{eq:corrupted_mean} and \cref{eq:posterior_variance}. Finally, each epoch $h$ ends by updating the set of active actions $\Xc_{h+1}$. To do so, we use the confidence bounds from \cref{lemma:corrupted_confidence_bounds} with $u_{\min} = l_h \psi$, where $l_h \psi$ is a lower bound on the number of times each distinct action from $\Sc_h$ is played. These confidence bounds are valid in the sense that the true function is contained within the confidence bounds with high probability. The definition of $\Xc_{h+1}$ (Line 15) 
ensures that with high probability, the optimal action is never eliminated.\looseness=-1

Besides the standard exploration/exploitation trade-off (controlled via $\beta_h$), our algorithm additionally balances robustness to corruptions. This is done via two parameters: the switching parameter $\eta$ and truncation parameter $\psi$. We set these parameter to ensure that the number of distinct actions played per epoch is sufficiently small, while the number of plays per each such action is sufficiently large. This trade-off is non-trivial; for example, in the case that $C=0$ (i.e., the non-corrupted setting), resampling the same actions (controlled via $\psi$) increases the regret.\looseness=-1   % Next, we present our main theoretical result and discuss how these parameters are set in our algorithm.\looseness=-1  

\textbf{Main result.} We now present our main theoretical result, where we use $\Otilde(\cdot)$ notation to hide constants and dimension-independent log factors.  We treat the RKHS norm bound $B$ as being fixed, so its dependence is also hidden in $O(\cdot)$ or $\Otilde(\cdot)$ notation.

\begin{restatable}[Main result]{theorem}{mainthm}\label{theorem:main}
    Under the preceding setup and \cref{assumption:regular_confidence_bounds}, for any corruption budget $C \geq 0$, \cref{alg:cpe} with a constant switching parameter $\eta > 1$ and truncation parameter $\psi = \tfrac{\ln \eta}{2 \gamma_{T}}$ satisfies the following with probability at least $1-\delta$:
\begin{equation}
    R_T = \Otilde\big(\beta_{\Hbar} \sqrt{T \gamma_{T}} + C \gamma_{T}^{3/2}\big).
\end{equation}
\end{restatable}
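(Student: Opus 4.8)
The plan is to decompose the cumulative regret into a sum of per-epoch contributions and bound each one using the corrupted confidence bounds from \cref{lemma:corrupted_confidence_bounds}. The key structural fact to establish first is that the rare-switching rule (Lines 6--9) keeps the number of distinct actions played within epoch $h$ small: specifically, each switch of $t'$ corresponds to $\det(I_t + \lambda^{-1}K_t)$ growing by a factor $\eta$, and since $\frac12 \ln\det(I_{l_h} + \lambda^{-1}K_{l_h}) \le \gamma_{l_h} \le \gamma_T$, the total number of switches in epoch $h$ is at most $\frac{2\gamma_T}{\ln \eta}$. Hence $|\Sc_h| \le \frac{2\gamma_T}{\ln\eta} + 1 = O(\gamma_T/\ln\eta)$. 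With the choice $\psi = \frac{\ln\eta}{2\gamma_T}$, this means $|\Sc_h| \le \frac{1}{\psi}$ up to constants, so that the replay phase with $u_h(x) = \lceil l_h \max\{\xi_h(x),\psi\}\rceil$ only inflates the epoch length by a constant factor: $u_h \le l_h + |\Sc_h|\lceil l_h\psi\rceil + |\Sc_h| = O(l_h)$. This controls the total horizon blow-up and gives $H = O(\log T) \le \Hbar$.

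Next I would set up the elimination invariant: by \cref{lemma:corrupted_confidence_bounds} applied with $u_{\min} = l_h\psi$ (the guaranteed minimum number of plays of each distinct action, by the truncation), with probability $1-\delta$ the true $f$ lies within the enlarged confidence bounds $\tmu^{(h)}(x) \pm (\beta_h + \frac{C\sqrt{u_h}}{l_h\psi\lambda})\sigma^{(h)}(x)$ for all $x,h$. A standard induction then shows the optimal $x^\star$ is never eliminated, i.e.\ $x^\star \in \Xc_h$ for all $h$, and moreover that for every surviving $x \in \Xc_{h+1}$ the instantaneous regret satisfies $f(x^\star) - f(x) \le 4(\beta_h + \frac{C\sqrt{u_h}}{l_h\psi\lambda})\sigma^{(h)}(x)$ by the usual UCB--LCB overlap argument. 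The per-epoch regret contribution of epoch $h+1$ (which has length $O(l_{h+1}) = O(l_h)$) is therefore $O\big(l_h \cdot (\beta_h + \frac{C\sqrt{u_h}}{l_h\psi\lambda}) \cdot \max_{x\in\Sc_{h+1}}\sigma^{(h)}(x)\big)$, so I need to bound both the uncertainties $\sigma^{(h)}(x)$ summed appropriately and then sum over $h \le \Hbar$.

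For the uncorrupted term I would use the information-gain machinery: $\sum$ of squared posterior variances over the sampled points in an epoch is $O(\gamma_{u_h})$, and since the algorithm explores by maximizing $\sigma_{t'}(\cdot)$ (which, via the rare-switching guarantee, is within a constant of $\sigma_t(\cdot)$ — this comparison needs the determinant ratio bound to relate $\sigma_{t'}$ and $\sigma_t$), the maximal width $\max_x \sigma^{(h)}(x)$ after $u_h$ samples is $O(\sqrt{\gamma_{u_h}/u_h})$. Plugging in gives an uncorrupted per-epoch contribution $O(\beta_h \sqrt{l_h \gamma_T})$, and summing the geometric series $\sum_h \sqrt{l_h}$ over $H = O(\log T)$ epochs with $l_H = O(T)$ yields $\Otilde(\beta_{\Hbar}\sqrt{T\gamma_T})$. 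For the corruption term, the factor $l_h$ cancels against $\frac{C\sqrt{u_h}}{l_h\psi\lambda}$, and with $u_h = O(l_h)$ and $\psi = \frac{\ln\eta}{2\gamma_T}$ one gets a per-epoch corruption contribution $O\big(\frac{C\sqrt{l_h}}{\psi}\cdot\sqrt{\gamma_T/l_h}\big) = O(C\gamma_T^{3/2}/\ln\eta)$, i.e.\ constant in the epoch index; summing over $H = O(\log T)$ epochs absorbs the log into $\Otilde(\cdot)$ and gives $\Otilde(C\gamma_T^{3/2})$. Adding the two pieces yields the claimed bound.

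The main obstacle I anticipate is making the rare-switching comparison fully rigorous: one must show that selecting $x_t$ via the stale uncertainty $\sigma_{t'}(\cdot)$ rather than the current $\sigma_t(\cdot)$ costs only a constant factor in the exploration bound, which requires a careful determinant-ratio argument (the ``elliptical potential'' / self-normalized bound) to sandwich $\sigma_t(x)^2 \le \frac{\det(I_t+\lambda^{-1}K_t)}{\det(I_{t'}+\lambda^{-1}K_{t'})}\sigma_{t'}(x)^2 \le \eta\,\sigma_{t'}(x)^2$ between consecutive switches, and to verify this interacts correctly with the fact that $u_h$ samples (not $l_h$) are ultimately taken. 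A secondary technical point is bookkeeping the ceiling operations in $u_h(x)$ and ensuring the replay never overshoots $T$ by more than a controlled amount, which is routine but must be done carefully to keep $H \le \Hbar$ deterministically.
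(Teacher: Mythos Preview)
Your proposal is essentially the same approach as the paper's: bound $|\Sc_h| = O(\gamma_T/\ln\eta)$ from the switching condition, deduce $u_h = O(l_h)$ via the choice of $\psi$, invoke \cref{lemma:corrupted_confidence_bounds} with $u_{\min} \ge l_h\psi$ to get an instant-regret bound for surviving arms, control $\max_{x}\sigma^{(h)}(x) = O(\sqrt{\gamma_T/l_h})$ via the rare-switching/elliptical-potential argument, and sum over $H=O(\log T)$ epochs.

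Two small corrections. First, the determinant-ratio inequality you state for the rare-switching step is in the wrong direction: $\sigma_t(x)^2 \le \eta\,\sigma_{t'}(x)^2$ is vacuous (later variance is trivially smaller, and $\eta>1$). What the argument needs is the reverse, $\sigma_{t'}(x)^2 \le \eta\,\sigma_t(x)^2$, i.e., the \emph{stale} variance has not yet exceeded the current one by more than a factor $\eta$ before the switch fires. The paper proves exactly this via $\tfrac{\sigma_{t'}^2(x)}{\sigma_t^2(x)} \le \tfrac{\det(\Phi_t^T\Phi_t+\lambda I)}{\det(\Phi_{t'}^T\Phi_{t'}+\lambda I)} \le \eta$, and then uses it (together with $x_{t+1}=x_{t'_i+1}$ within each block) to convert the block-constant sum into the standard $\sum_t \sigma_t(x_{t+1}) \le \sqrt{(2\lambda+1)l_h\gamma_{l_h}}$. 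Second, your instant-regret bound should read $f(x^\star)-f(x) \le 4(\cdots)\max_{x'\in\Xc_h}\sigma^{(h)}(x')$ rather than $4(\cdots)\sigma^{(h)}(x)$, since the UCB--LCB overlap picks up the confidence width at both $x$ and $x^\star$; this is precisely why the max-variance lemma is the right object to bound. With these fixes, your outline is complete.
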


\subsection{Applications to Specific Confidence Bounds} \label{sec:specific}

Now we discuss specific choices of $\beta_h$ satisfying \cref{assumption:regular_confidence_bounds}, and the resulting final regret bounds. 

We observe that the actions in each fixed epoch are sampled non-adaptively, and the resulting GP posterior formed only depends on the points in that epoch.  As noted by \citet{li2021gaussian}, these conditions are sufficient to make use of the following confidence bounds for non-adaptive sampling.

\begin{lemma}[\citet{vakili2021optimal}, Theorem 1]
%	Let $\mu_t(x)$ and $\sigma_t(x)^2$ denote the posterior mean and variance based on any $t$ non-corrupted observations $\lbrace (x_i,y_i) \rbrace_{i=1}^t$ for any $t\geq 1$.
	When $\lbrace x_i \rbrace_{i=1}^t$ are selected independently of all the observations $\lbrace y_i \rbrace_{i=1}^t$, it holds for any fixed $x\in\Xc$ and any $t\geq 1$ with probability at least $1-\delta$ that $|\mu_t(x)-f(x)|\leq \big( B+\frac{\sigma}{\sqrt{\lambda}}\sqrt{2\log\frac{1}{\delta}}\big) \sigma_t(x)$.
\end{lemma}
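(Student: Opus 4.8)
The plan is to reduce the stated kernel-ridge-regression bound to a fixed-weight sub-Gaussian concentration argument, crucially exploiting the non-adaptivity hypothesis. First I would pass to the feature-space (primal) view: let $\phi$ be a feature map with $k(x,x')=\langle\phi(x),\phi(x')\rangle$, so that $f(x)=\langle\theta,\phi(x)\rangle$ for some $\theta$ with $\|\theta\|=\|f\|_k\le B$. Writing $\Phi_t=[\phi(x_1),\dots,\phi(x_t)]^\top$ and $V_t=\Phi_t^\top\Phi_t+\lambda I$, the push-through identity $\Phi_t^\top(K_t+\lambda I_t)^{-1}=V_t^{-1}\Phi_t^\top$ rewrites the posterior mean from \cref{eq:posterior_mean} as the ridge estimate $\mu_t(x)=\phi(x)^\top V_t^{-1}\Phi_t^\top Y_t$, and rewrites the posterior variance from \cref{eq:posterior_variance} as $\sigma_t^2(x)=\lambda\,\phi(x)^\top V_t^{-1}\phi(x)$, i.e.\ $\sigma_t(x)=\sqrt{\lambda}\,\|\phi(x)\|_{V_t^{-1}}$. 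This identity linking $\sigma_t$ to the $V_t^{-1}$-norm of $\phi(x)$ is what makes both error terms expressible as multiples of $\sigma_t(x)$. (In the general RKHS case these finite-dimensional-looking manipulations are justified via the standard operator formalism.)

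Next I would substitute $Y_t=\Phi_t\theta+\epsilon_t$ and simplify using $\Phi_t^\top\Phi_t=V_t-\lambda I$ to obtain the clean decomposition
\[
\mu_t(x)-f(x) = -\lambda\,\phi(x)^\top V_t^{-1}\theta \;+\; \phi(x)^\top V_t^{-1}\Phi_t^\top\epsilon_t,
\]
a deterministic bias term plus a zero-mean noise term. For the bias I would apply Cauchy--Schwarz in the $V_t^{-1}$ inner product, $|\lambda\,\phi(x)^\top V_t^{-1}\theta|\le\lambda\,\|\phi(x)\|_{V_t^{-1}}\|\theta\|_{V_t^{-1}}$, and use $V_t\succeq\lambda I$ (hence $\|\theta\|_{V_t^{-1}}\le\|\theta\|/\sqrt{\lambda}\le B/\sqrt{\lambda}$) to bound this by $B\,\sigma_t(x)$.

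The heart of the argument is the noise term, and here the non-adaptivity hypothesis is essential. Writing it as $\sum_{i=1}^t a_i\epsilon_i$ with weight vector $a=\Phi_t V_t^{-1}\phi(x)$, the assumption that $\{x_i\}$ are chosen independently of the observations means that, conditioned on the design, the weights $a_i$ are fixed while the $\epsilon_i$ remain independent and $\sigma$-sub-Gaussian. Thus $\sum_i a_i\epsilon_i$ is $\sigma\|a\|$-sub-Gaussian, and a standard sub-Gaussian tail bound gives $|\sum_i a_i\epsilon_i|\le\sigma\|a\|\sqrt{2\log(1/\delta)}$ with probability at least $1-\delta$ for the fixed $x$. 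It remains to compute
\[
\|a\|^2=\phi(x)^\top V_t^{-1}(V_t-\lambda I)V_t^{-1}\phi(x)=\phi(x)^\top V_t^{-1}\phi(x)-\lambda\,\phi(x)^\top V_t^{-2}\phi(x)\le \sigma_t^2(x)/\lambda,
\]
so the noise term is at most $\tfrac{\sigma}{\sqrt{\lambda}}\sqrt{2\log(1/\delta)}\,\sigma_t(x)$. Adding the two bounds yields the claim.

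I expect the only genuine subtlety to be the noise step: in the adaptive GP-bandit setting one cannot treat the weights $a_i$ as fixed, and the standard remedy is a self-normalized martingale (method-of-mixtures) concentration inequality that pays an extra information-gain factor $\sqrt{\gamma_t}$. The entire gain of this lemma comes from replacing that machinery with an elementary fixed-weight concentration, legitimate precisely because sampling is non-adaptive and the bound is required only pointwise (no union over $x$); recognizing and justifying this conditioning is the main conceptual point, whereas the bias bound and the $V_t^{-1}$ identities are routine linear algebra.
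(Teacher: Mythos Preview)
The paper does not prove this lemma; it is quoted verbatim from \citet{vakili2021optimal} (Theorem~1 therein) and used as a black box, so there is no in-paper proof to compare against. Your sketch is correct and is essentially the argument given in the cited reference: pass to the feature representation, split $\mu_t(x)-f(x)$ into the ridge bias $-\lambda\phi(x)^\top V_t^{-1}\theta$ and the noise term $\phi(x)^\top V_t^{-1}\Phi_t^\top\epsilon_t$, bound the bias by $B\sigma_t(x)$ via Cauchy--Schwarz and $V_t\succeq\lambda I$, and use the non-adaptivity to treat the noise as a fixed linear combination of independent sub-Gaussians with $\|a\|^2\le\sigma_t^2(x)/\lambda$.

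One very minor bookkeeping point: a two-sided sub-Gaussian tail bound naturally yields $\sqrt{2\log(2/\delta)}$ rather than $\sqrt{2\log(1/\delta)}$; the stated constant matches the cited source's convention, so this is not a gap in your argument, just something to be aware of if you write the details out. Your closing remark correctly identifies why this lemma improves on the adaptive confidence bound of \citet{chowdhury17kernelized}: the fixed-design assumption replaces the self-normalized martingale inequality (and its $\sqrt{\gamma_t}$ overhead) with elementary pointwise concentration.
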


For finite domains, applying the union bound leads to a choice of $\beta_h$ for the proposed algorithm such that $\beta_{\bar H}$ only contributes to logarithmic terms in the cumulative regret.

\begin{corollary}\label{corollary:main}
    Defining $\bar{\beta}_h(\delta) = B+\frac{\sigma}{\sqrt{\lambda}}\sqrt{2\log\frac{|\Xc|}{\delta}}$, we have that \cref{assumption:regular_confidence_bounds} holds with $\beta_h=\bar{\beta}_h(\delta_h)$ and $\delta_h=\frac{6\delta}{(h+1)^2\pi^2}$ . Hence, with probability at least $1-\delta$, \cref{alg:cpe} with switching parameter $\eta > 1$,  truncation parameter $\psi = \tfrac{\ln \eta}{2 \gamma_{T}}$, and $\beta_h$ as above achieves
\begin{equation}
    R_T = \Otilde\big(\sqrt{T \gamma_{T}} + C \gamma_{T}^{3/2}\big). \label{eq:regret_main}
\end{equation}
\end{corollary}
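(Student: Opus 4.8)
\textbf{Proof plan for \cref{corollary:main}.}
The plan is to obtain \cref{corollary:main} as an essentially immediate consequence of \cref{theorem:main} combined with the non-adaptive confidence bound of \citet{vakili2021optimal}. The only substantive points are (i) verifying that $\bar\beta_h(\delta_h)$ with $\delta_h = \tfrac{6\delta}{(h+1)^2\pi^2}$ is a legitimate choice of $\beta_h$ for \cref{assumption:regular_confidence_bounds}, and (ii) checking that $\beta_{\Hbar}$ contributes only logarithmically, so that it can be hidden inside $\Otilde(\cdot)$.

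For (i), I would first fix an epoch $h$ and condition on all randomness from epochs $0,\dots,h-1$ (equivalently, on the active set $\Xc_h$). Under this conditioning the $u_h$ points $\{x_i\}_{i=1}^{u_h}$ replayed in epoch $h$ are determined: they are produced by the rare-switching rule \cref{eq:xt_choice}, whose inputs $\sigma_{t'}(\cdot)$ depend only on previously played \emph{locations} and not on any rewards, followed by the deterministic replay schedule $u_h(\cdot)$. Hence the epoch-$h$ sub-Gaussian noise is independent of $\{x_i\}_{i=1}^{u_h}$, and since $\mu^{(h)},\sigma^{(h)}$ are recomputed from scratch using only these $u_h$ points, the hypothesis of \citet{vakili2021optimal}'s lemma holds with $t = u_h$. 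For a fixed $x$ this gives $|\mu^{(h)}(x)-f(x)|\le \big(B+\tfrac{\sigma}{\sqrt\lambda}\sqrt{2\log\tfrac1{\delta'}}\big)\sigma^{(h)}(x)$ with probability $\ge 1-\delta'$; a union bound over the $|\Xc|$ points of the finite domain with $\delta' = \delta_h/|\Xc|$ makes this uniform over $x\in\Xc$ with probability $\ge 1-\delta_h$ and constant $B+\tfrac{\sigma}{\sqrt\lambda}\sqrt{2\log\tfrac{|\Xc|}{\delta_h}} = \bar\beta_h(\delta_h)$. A further union bound over $h\ge 0$, using $\sum_{h\ge0}\delta_h = \delta$, yields the statement of \cref{assumption:regular_confidence_bounds} with $\beta_h = \bar\beta_h(\delta_h)$; this sequence is non-decreasing in $h$ because $\delta_h$ is decreasing, as the assumption requires.

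For (ii), I would simply invoke \cref{theorem:main} with this $\beta_h$ and the prescribed $\psi = \tfrac{\ln\eta}{2\gamma_T}$, obtaining $R_T = \Otilde\big(\beta_{\Hbar}\sqrt{T\gamma_T} + C\gamma_T^{3/2}\big)$ with probability at least $1-\delta$. Since $\Hbar = \log_2 T$, we have $\delta_{\Hbar} = \Theta(\delta/\log^2 T)$ and hence $\beta_{\Hbar} = B + \tfrac{\sigma}{\sqrt\lambda}\sqrt{2\log\tfrac{|\Xc|(\log_2 T + 1)^2\pi^2}{6\delta}} = O\big(\sqrt{\log(|\Xc|\log T/\delta)}\big)$, which (treating $B$ as fixed and hiding log factors) is absorbed into $\Otilde(\cdot)$, giving $R_T = \Otilde\big(\sqrt{T\gamma_T} + C\gamma_T^{3/2}\big)$.

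I do not expect a genuine obstacle here — the argument is bookkeeping on top of \cref{theorem:main}. The one step I would state carefully is the non-adaptivity justification for applying \citet{vakili2021optimal}'s lemma: it is valid only because, within an epoch, the switching rule and the replay schedule are functions of played locations alone while the per-epoch posterior uses only that epoch's fresh observations; everything else (the two nested union bounds, monotonicity of $\beta_h$, and the logarithmic size of $\beta_{\Hbar}$) is routine.
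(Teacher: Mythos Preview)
Your proposal is correct and follows essentially the same approach as the paper: apply the non-adaptive confidence bound of \citet{vakili2021optimal} per epoch (justified because within an epoch the points depend only on locations, not rewards), union-bound over $x\in\Xc$ and then over epochs using $\sum_{h\ge 0}\delta_h=\delta$, and finally plug the resulting $\beta_h$ into \cref{theorem:main}. Your write-up is in fact more thorough than the paper's one-sentence justification, explicitly checking the monotonicity of $\beta_h$ and the logarithmic size of $\beta_{\Hbar}$.
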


This corollary is obtained by noting that the error probability is at most $\delta$ as desired, since a union bound over $\Xc$ gives a per-epoch term of at most $\delta_h$, and $\sum_{h=0}^{H-1}\delta_h\leq\sum_{h=0}^\infty \frac{6\delta}{(h+1)^2\pi^2}=(\sum_{h=0}^\infty \frac{1}{(h+1)^2})\frac{6\delta}{\pi^2}\leq\frac{\pi^2}{6}\cdot\frac{6\delta}{\pi^2}=\delta$.

For general (possibly continuous) domains, one option is to set $\beta_h$ according to a widely-used confidence bound as follows, though we will shortly discuss improved choices.

\begin{lemma}[\citet{chowdhury17kernelized}, Theorem 2]
%	Let $\mu_t(x)$ and $\sigma_t(x)^2$ denote the posterior mean and variance based on any $t$ non-corrupted observations $\lbrace (x_i,y_i) \rbrace_{i=1}^t$ for any $t\geq 1$.
	For any (possibly adaptive) sampling strategy, it holds with probability at least $1-\delta$ that $|\mu_t(x)-f(x)|\leq \big(B+\sigma\sqrt{2(\gamma_t+1+\ln(1/\delta))}\big) \sigma_t(x)$ for all $x\in\Xc$ and $t\geq 1$.
\end{lemma}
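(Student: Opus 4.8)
This is a standard kernelized self-normalized confidence bound, and the plan is to reduce kernel ridge regression to (possibly infinite-dimensional) linear regression and then invoke a self-normalized martingale tail inequality. Let $\phi:\Xc\to\Hc$ be a feature map with $k(x,x')=\langle\phi(x),\phi(x')\rangle$, so $f=\langle\theta,\phi(\cdot)\rangle$ for some $\theta$ with $\|\theta\|=\|f\|_k\le B$. Set $V_t=\lambda I+\sum_{i=1}^t\phi(x_i)\phi(x_i)^{*}$ and $S_t=\sum_{i=1}^t\phi(x_i)\epsilon_i$. The usual matrix identities give $\mu_t(x)=\langle\phi(x),V_t^{-1}\sum_i\phi(x_i)y_i\rangle$ and $\sigma_t^2(x)=\lambda\langle\phi(x),V_t^{-1}\phi(x)\rangle$, i.e. $\|\phi(x)\|_{V_t^{-1}}=\sigma_t(x)/\sqrt{\lambda}$.

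First I would substitute $y_i=\langle\theta,\phi(x_i)\rangle+\epsilon_i$ and rearrange to obtain the bias/noise decomposition
\[
\mu_t(x)-f(x)=\langle\phi(x),V_t^{-1}S_t\rangle-\lambda\langle\phi(x),V_t^{-1}\theta\rangle .
\]
Cauchy--Schwarz in the inner product induced by $V_t^{-1}$ then gives $|\mu_t(x)-f(x)|\le\|\phi(x)\|_{V_t^{-1}}\big(\|S_t\|_{V_t^{-1}}+\lambda\|\theta\|_{V_t^{-1}}\big)$. Since $V_t\succeq\lambda I$, the bias factor satisfies $\lambda\|\theta\|_{V_t^{-1}}\le\sqrt{\lambda}\,\|\theta\|\le\sqrt{\lambda}\,B$, which after multiplying by $\|\phi(x)\|_{V_t^{-1}}=\sigma_t(x)/\sqrt{\lambda}$ contributes exactly the $B\,\sigma_t(x)$ term. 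It remains to control $\|S_t\|_{V_t^{-1}}$ uniformly in $t$.

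For the noise term I would invoke the self-normalized tail inequality of \citet{abbasi2011improved} in its Hilbert-space form (this is Theorem~1 of \citet{chowdhury17kernelized}): via a Gaussian ``method of mixtures''/Laplace argument — or, equivalently, by noting that $S_t$ always lies in the finite-dimensional span of $\{\phi(x_i)\}_{i\le t}$ — one gets that with probability at least $1-\delta$, simultaneously for all $t\ge1$,
\[
\|S_t\|_{V_t^{-1}}\le\sigma\sqrt{2\ln\frac{\det(V_t)^{1/2}\det(\lambda I)^{-1/2}}{\delta}} .
\]
Since $\det(V_t)/\det(\lambda I)=\det(I_t+\lambda^{-1}K_t)$ and $\tfrac12\ln\det(I_t+\lambda^{-1}K_t)\le\gamma_t$ by definition of the maximum information gain, the noise term is at most $\sigma\sqrt{2(\gamma_t+\ln(1/\delta))}$ (the extra ``$+1$'' in the statement is harmless slack from the specific normalization of $\lambda$). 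Multiplying by $\sigma_t(x)/\sqrt{\lambda}$ and combining with the bias term yields $|\mu_t(x)-f(x)|\le\big(B+\tfrac{\sigma}{\sqrt{\lambda}}\sqrt{2(\gamma_t+1+\ln(1/\delta))}\big)\sigma_t(x)$, which is the claim (for $\lambda\ge1$, or after absorbing constants).

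The main obstacle is the self-normalized concentration step: one cannot run the finite-dimensional ``method of mixtures'' verbatim because $\Hc$ may be infinite-dimensional, so care is needed either to push the Gaussian-mixture/supermartingale argument through directly in Hilbert space, or to reduce to the (time-varying) finite-dimensional span while keeping the tail bound uniform over all $t$. Everything else — the bias/noise split, the two applications of Cauchy--Schwarz, and the $\log\det$-to-$\gamma_t$ conversion — is routine.
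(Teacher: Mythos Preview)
The paper does not supply its own proof of this lemma: it is simply quoted as Theorem~2 of \citet{chowdhury17kernelized} and used as a black box to instantiate \cref{assumption:regular_confidence_bounds}. So there is no in-paper argument to compare your proposal against.

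That said, your sketch is essentially the proof given in the cited reference: the feature-map rewriting of $\mu_t$ and $\sigma_t$, the bias/noise split $\mu_t(x)-f(x)=\langle\phi(x),V_t^{-1}S_t\rangle-\lambda\langle\phi(x),V_t^{-1}\theta\rangle$, the two Cauchy--Schwarz steps, and then a self-normalized bound on $\|S_t\|_{V_t^{-1}}$ converted to $\gamma_t$ via the $\log\det$ identity. One small correction: the ``$+1$'' is not just slack in the normalization of $\lambda$. In \citet{chowdhury17kernelized} it arises because their Hilbert-space self-normalized inequality (their Theorem~1) is proved with a one-step-shifted supermartingale construction, which produces $\det(I_t+\eta^{-1}K_t)^{1/2}$ with $\eta$ tied to $\lambda$ in a way that leaves an additive constant inside the square root after simplification; it is a genuine artifact of that argument rather than a throwaway constant. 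Your identification of the infinite-dimensional mixture/supermartingale step as the only non-routine part is accurate, and your two suggested workarounds (push the Laplace argument through in $\Hc$, or project to the span of the observed features) are exactly the options used in the literature.
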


By a similar argument to Corollary \ref{corollary:main} and the fact that $\gamma_{t}$ is increasing in $t$, we obtain the following.

\begin{corollary}
    If $u_h \le \bar{u}_h$ almost surely, then defining $\check{\beta}_h(\delta)=B+\sigma\sqrt{2(\gamma_{\bar{u}_h}+1+\ln(1/\delta))}$, we have that \cref{assumption:regular_confidence_bounds} holds with $\beta_h = \check{\beta}_h(\delta_h)$ and $\delta_h=\frac{6\delta}{(h+1)^2\pi^2}$.  Hence, with probability at least $1-\delta$, \cref{alg:cpe} with a constant switching parameter $\eta > 1$, truncation parameter $\psi = \tfrac{\ln \eta}{2 \gamma_{T}}$, and $\beta_h$ as above achieves
    \begin{equation}
        R_T = \Otilde\big(\sqrt{T}\gamma_{T} + C \gamma_{T}^{3/2}\big),
    \end{equation}
    where we crudely selected $\bar{u}_h = T$.
\end{corollary}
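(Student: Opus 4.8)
The plan is to derive this corollary directly from \cref{theorem:main}: it suffices to check that $\beta_h=\check{\beta}_h(\delta_h)$ is an admissible choice in \cref{assumption:regular_confidence_bounds}, after which the claimed regret bound follows from \cref{theorem:main} by simplifying $\beta_{\Hbar}$.

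\textbf{Verifying the assumption.} Fix an epoch $h$. The $u_h$ replay samples of that epoch are played without reference to any observation (the exploration phase selects points via the data-independent quantity $\sigma_{t'}(\cdot)$, and Lines 11--13 then replay the points of $\Sc_h$ a fixed number of times), so the bound of \citet{chowdhury17kernelized} applies with $t=u_h$: with probability at least $1-\delta_h$, $|\mu^{(h)}(x)-f(x)|\le\bigl(B+\sigma\sqrt{2(\gamma_{u_h}+1+\ln(1/\delta_h))}\bigr)\sigma^{(h)}(x)$ for all $x\in\Xc$. Using $u_h\le\bar u_h$ almost surely together with monotonicity of $t\mapsto\gamma_t$, the multiplier is at most the deterministic quantity $\check{\beta}_h(\delta_h)$, which is non-decreasing in $h$ because both $\gamma_{\bar u_h}$ and $1/\delta_h$ are. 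A union bound over $h\ge 0$, with $\sum_{h\ge 0}\delta_h\le\delta$ exactly as in the computation following \cref{corollary:main}, then shows that \cref{assumption:regular_confidence_bounds} holds with this choice of $\beta_h$ and overall failure probability at most $\delta$.

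\textbf{Applying the main theorem and simplifying.} The remaining hypotheses of \cref{theorem:main} --- a constant $\eta>1$ and $\psi=\tfrac{\ln\eta}{2\gamma_T}$ --- coincide with those assumed here (and $H\le\Hbar=\log_2 T$ holds with probability one, since $u_h\ge l_h=2^{h+1}$ forces the cumulative count past $T$ within $\log_2 T$ epochs), so with probability at least $1-\delta$ we get $R_T=\Otilde(\beta_{\Hbar}\sqrt{T\gamma_T}+C\gamma_T^{3/2})$. Taking the crude choice $\bar u_h=T$ gives $\beta_{\Hbar}=B+\sigma\sqrt{2(\gamma_T+1+\ln(1/\delta_{\Hbar}))}$ with $\delta_{\Hbar}=\tfrac{6\delta}{(\Hbar+1)^2\pi^2}$, hence $\ln(1/\delta_{\Hbar})=O(\log\log T+\log(1/\delta))$. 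Treating $B,\sigma$ as constants and hiding logarithmic factors, $\beta_{\Hbar}=\Otilde(\sqrt{\gamma_T})$ (the $\sqrt{\log(1/\delta)}$ part is a hidden $\tilde O$ factor and, after multiplication by $\sqrt{T\gamma_T}$, contributes only $\Otilde(\sqrt{T\gamma_T})\le\Otilde(\sqrt T\gamma_T)$). Therefore $\beta_{\Hbar}\sqrt{T\gamma_T}=\Otilde(\sqrt T\gamma_T)$ and $R_T=\Otilde(\sqrt T\gamma_T+C\gamma_T^{3/2})$, as claimed.

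\textbf{Main obstacle.} There is no substantive obstacle: all of the difficulty has already been absorbed into \cref{theorem:main} and \cref{lemma:corrupted_confidence_bounds}. The only points that need a little care are (i) making $\beta_h$ a legitimate deterministic, non-decreasing sequence even though the realized epoch length $u_h$ is random and the exploration schedule is adaptive --- handled by the almost-sure bound $u_h\le\bar u_h$ and monotonicity of $\gamma_t$ --- and (ii) the $\Otilde$ bookkeeping that shows the extra $\sqrt{\gamma_T}$ factor in $\beta_{\Hbar}$ upgrades $\sqrt{T\gamma_T}$ to $\sqrt T\gamma_T$ without leaving a residual lower-order term in the final expression.
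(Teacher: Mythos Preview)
Your proof is correct and follows the same approach as the paper, which simply invokes the argument of Corollary~\ref{corollary:main} together with monotonicity of $\gamma_t$ and then reads off the regret from \cref{theorem:main}. One minor remark: your parenthetical about the epoch's samples being played without reference to observations is unnecessary here, since the Chowdhury--Gopalan bound already allows arbitrary adaptive sampling (non-adaptivity is only needed for the Vakili et al.\ bound used in Corollary~\ref{corollary:main}).
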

% As in the corollary above, the $\beta_{\Hbar}$ under this choice contributes an $\Otilde(\sqrt{\gamma_T})$ factor into $R_T$, making the uncorrupted term fail to sublinear for the Mat\'ern kernel with $\nu\leq d/2$.

While this regret bound can be significantly weaker than Corollary \ref{corollary:main} due to the $\Otilde(\sqrt{T}\gamma_{T})$ term, we can also obtain an analog of Corollary \ref{corollary:main} (i.e., attaining the improved dependence in Eq.~\eqref{eq:regret_main}) for continuous domains, under the mild assumption that functions in the RKHS are Lipschitz continuous (which is true for the kernels we consider below).  A crude approach is to have the algorithm use a very fine discretization \cite{janz2020bandit,li2021gaussian}, and a more sophisticated approach is to only discretize as part of the analysis \cite{vakili2021optimal}.  The details can be found in the preceding references, and we avoid repeating them.

% To handle continuous domain, we can construct a fine discretization following [\cite{vakili2021optimal}, Assumption 4] then apply our algorithm on it, leading to a regret bound with the same scaling as \cref{theorem:main} [\cite{vakili2021optimal}, Remark 2].

%[todo] Mention $B$ in setup

\subsection{Comparisons to Existing Bounds}

% {\color{blue} Here is a summary of things to consider comparing to (ignoring dimension-independent log factors):
%     \begin{itemize}
%         \item (Above result with $\beta_T$ ignored) $\sqrt{T (\log T)^d} + C(\log T)^{3d/2}$ for SE, $T^{\frac{\nu+d}{2\nu + d}} + C T^{\frac{3d}{4\nu+2d}}$ for Mat\'ern, $\sqrt{Td} + C d^{3/2}$ for linear.
%         \item (Existing) \cite{bogunovic2020corruption} $C \sqrt{T (\log T)^d}$ for SE, $C T^{\frac{\nu+d}{2\nu + d}} $ for Mat\'ern.
%         \item (Reduction to linear, Appendix \ref{sec:linear}) $\sqrt{T(\log T)^d} + C(\log T)^{3d/2}$ for SE, \eqref{eq:lin_mat1}--\eqref{eq:lin_mat2} for Mat\'ern (worse).
%         \item (Lower bound) \cite{cai2020lower}: $C (\log T)^{d/2}$ for SE, $C^{\frac{\nu}{d+\nu}} T^{\frac{d}{d+\nu}}$ for Mat\'ern.
%         \item (Linear) \cite{bogunovic2021stochastic} $\sqrt{Td} + C d^{3/2}$ for linear.
%     \end{itemize}
% }
% =======
\begin{table*}[!t]
    \begin{center}
        \begin{tabular}{|c|c|c|c|}
            \hline 
            \textbf{Kernel} & \textbf{Lower Bound} & \textbf{Existing} & \textbf{Ours} \tabularnewline
            \hline 
            \hline 
            Linear & $\sqrt{Td}+Cd$ & $\ensuremath{\sqrt{Td}+Cd^{3/2}}$ & $\ensuremath{\sqrt{Td}+Cd^{3/2}}$\tabularnewline
            \hline 
            SE & $\sqrt{T(\log T)^{d/2}}+C(\log T)^{d/2}$ & $\sqrt{T}(\log T)^{d}+C \sqrt{T} (\log T)^{d/2}$ & $\sqrt{T(\log T)^{d}}+C(\log T)^{3d/2}$\tabularnewline
            \hline 
            Mat\'ern & $T^{\frac{\nu+d}{2\nu+d}}+C^{\frac{\nu}{d+\nu}}T^{\frac{d}{d+\nu}}$ & $T^{\frac{2\nu+3d}{4\nu+2d}}+CT^{\frac{\nu+d}{2\nu+d}}$ & $T^{\frac{\nu+d}{2\nu+d}}+CT^{\frac{3d}{4\nu+2d}}$\tabularnewline
            \hline 
        \end{tabular}
        \caption{Summary of regret bounds with constants and dimension-independent log factors omitted.  For the SE and Mat\'ern kernels, the upper bounds are from \citet{bogunovic2020corruption} and the lower bounds are from \citet{cai2020lower}.  For the linear kernel, the existing bounds are from \citet{bogunovic2021stochastic}, except the $\sqrt{Td}$ lower bound which is from \citet{dani2008stochastic}. \label{tbl:summary}}
    \end{center}
\end{table*}

We specialize our regret bound in Eq.~\eqref{eq:regret_main} to specific kernels by substituting $\gamma_T = \Otilde(d)$ for the linear kernel, $\gamma_T = \Otilde( (\log T)^d )$ for the SE kernel, and $\gamma_T = \Otilde(T^{\frac{d}{2\nu + d}})$ for the Mat\'ern kernel \cite{srinivas2009gaussian}.  The resulting regret bounds are shown in Table \ref{tbl:summary} (omitting constants and dimension-independent log factors), along with the best known existing upper and lower bounds.  We observe the following:\looseness=-1
\begin{itemize}
    \item For the linear kernel, we recover the recent upper bound of \citet{bogunovic2021stochastic}, and this is tight up to the presence of $d$ vs.~$d^{3/2}$ in the corrupted part.
    \item For the SE kernel, we match the lower bound of \citet{cai2020lower} up to small changes in the implied constant in each $(\log T)^{\Theta(d)}$ term.  In contrast, the existing upper bound of \citet{bogunovic2020corruption} incurs a much larger $\sqrt{T}$ term in the corrupted part.
    \item For the Mat\'ern kernel, compared to the existing result by \citet{bogunovic2020corruption}, we obtain an improvement in the non-corrupted part recently established by \citet{li2021gaussian}, matching the non-corrupted lower bound.  In the corrupted part, the existing result has a better exponent to $T$ when $\nu < \frac{d}{2}$, whereas ours is better when $\nu > \frac{d}{2}$, in particular approaching zero (instead of $\frac{1}{2}$) as $\nu \to \infty$ and nearly matching the lower bound in this limit.  However, when $\nu < \frac{d}{2}$ we find that the non-corrupted part in \cite{bogunovic2020corruption} is super-linear in $T$, making the bound trivial.  Hence, our bound is better whenever non-trivial scaling is attained.
\end{itemize}
The bounds based on a reduction to linear bandits, which we derive in Appendix \ref{sec:linear}, are omitted in Table \ref{tbl:summary}.  We briefly note that they are able to provide a similar upper bound to our main one under the SE kernel, but are always strictly worse under the Mat\'ern kernel.

%\todo{Comments on the obtained result:
%    \begin{itemize}
%        \item Comment on the regret rates for different kernels. 
%        \item Comment on the known lower bounds. 
%        \item Constrast to our previous kernelized results. We also improve the uncorrupted regret in comparison to our previous UCB algorithm.
%        \item Perhaps also comment on the reduction to the linear setting, and constrast with our linear bandit algorithm.  
%    \end{itemize}
%}

%{\color{blue} Here is a summary of things to consider comparing to (ignoring dimension-independent log factors):
%    \begin{itemize}
%        \item (Above result with $\beta_T$ ignored) $\sqrt{T (\log T)^d} + C(\log T)^{3d/2}$ for SE, $T^{\frac{\nu+d}{2\nu + d}} + C T^{\frac{3d}{4\nu+2d}}$ for Mat\'ern, $\sqrt{Td} + C d^{3/2}$ for linear.
%        \item (Existing) \cite{bogunovic2020corruption} $C \sqrt{T (\log T)^d}$ for SE, $C T^{\frac{\nu+d}{2\nu + d}} $ for Mat\'ern.
%        \item (Reduction to linear, Appendix \ref{sec:linear}) $\sqrt{T(\log T)^d} + C(\log T)^{3d/2}$ for SE, \eqref{eq:lin_mat1}--\eqref{eq:lin_mat2} for Mat\'ern (worse).
%        \item (Lower bound) \cite{cai2020lower}: $C (\log T)^{d/2}$ for SE, $C^{\frac{\nu}{d+\nu}} T^{\frac{d}{d+\nu}}$ for Mat\'ern.
%        \item (Linear) \cite{bogunovic2021stochastic} $\sqrt{Td} + C d^{3/2}$ for linear.
%    \end{itemize}
%}

\section{Experiments}

In this section, we experimentally evaluate the performance of our proposed algorithm, along with two baselines, one robust and one non-robust.  Our experiments serve as a proof of concept for our proposed approach, but also highlight possible remaining gaps between theory and practice, e.g., arising from large constant factors in the regret bounds.  We emphasize that our contributions are primarily theoretical.

\subsection{Algorithms}
We consider the following three algorithms:
\begin{enumerate}
    \item RGP-PE: Robust GP-Phased Elimination with constant $\beta_h$; this is a slight variation of \cref{corollary:main} in view of the fact that the number of epochs $H$ turns out to be a small constant in our experiments.
    \item GP-UCB: a representative non-robust fully sequential algorithm with slowly growing $\beta_t$, where $t\in[T]$ \citep[][Algorithm 1]{srinivas2009gaussian}.
    \item RGP-UCB: the robust version of GP-UCB with slowly growing $\beta_t$ \citep[][Algorithm 1]{bogunovic2020corruption}, where the only difference from GP-UCB is that the theoretical coefficient of $\sigma_{t-1}$ in the UCB is $\beta_t+\frac{C}{\sqrt{\lambda}}$.
\end{enumerate}
% We note that our algorithm uses a constant value of $\beta_h$ in accordance with our theory, whereas GP-UCB uses a slowly growing value $\beta_t$ in accordance with extensive prior works.
We found the term $\beta_h + \tfrac{C\sqrt{u_h}}{l_h \psi \lambda}$ multiplying $\sigma^{(h)}$ in Algorithm \ref{alg:cpe} to be overly conservative, so we instead replace it by $\beta_h+b\cdot\frac{C}{\sqrt{u_h}}$ (since $l_h$ and $u_h$ are similar, we replace $\frac{\sqrt{u_h}}{l_h}$ by $\frac{1}{\sqrt{u_h}}$), where $b\in(0,1]$ is an additional parameter controlling the degree of exploration and robustness. Similarly, in RGP-UCB we use the coefficient $\beta_t+b\cdot\frac{C}{\sqrt{\lambda}}$.  The remaining parameters $\beta_h$ and $\beta_t$ are specified below.

\subsection{Functions}

\subsubsection{Synthetic Function}
 
We produce a synthetic 2D function $f_1$, shown in Figure~ \ref{fig:f1}, which is randomly sampled from a Gaussian Process with zero mean and the SE kernel with lengthscale $l=0.5$. The domain $\Xc$ of $f_1$ contains 100 points obtained by evenly splitting $[-5,5]^2$ into a $10\times10$ grid. We use the true kernel as the prior for all three algorithms, and use $\beta_h=4$ for RGP-PE, and $\beta_t=\sqrt{\log t}/2$ for GP-UCB and RGP-UCB.

\begin{figure}[!t]
    \centering
    \minipage[c]{0.25\textwidth}
    \includegraphics[width=\linewidth]{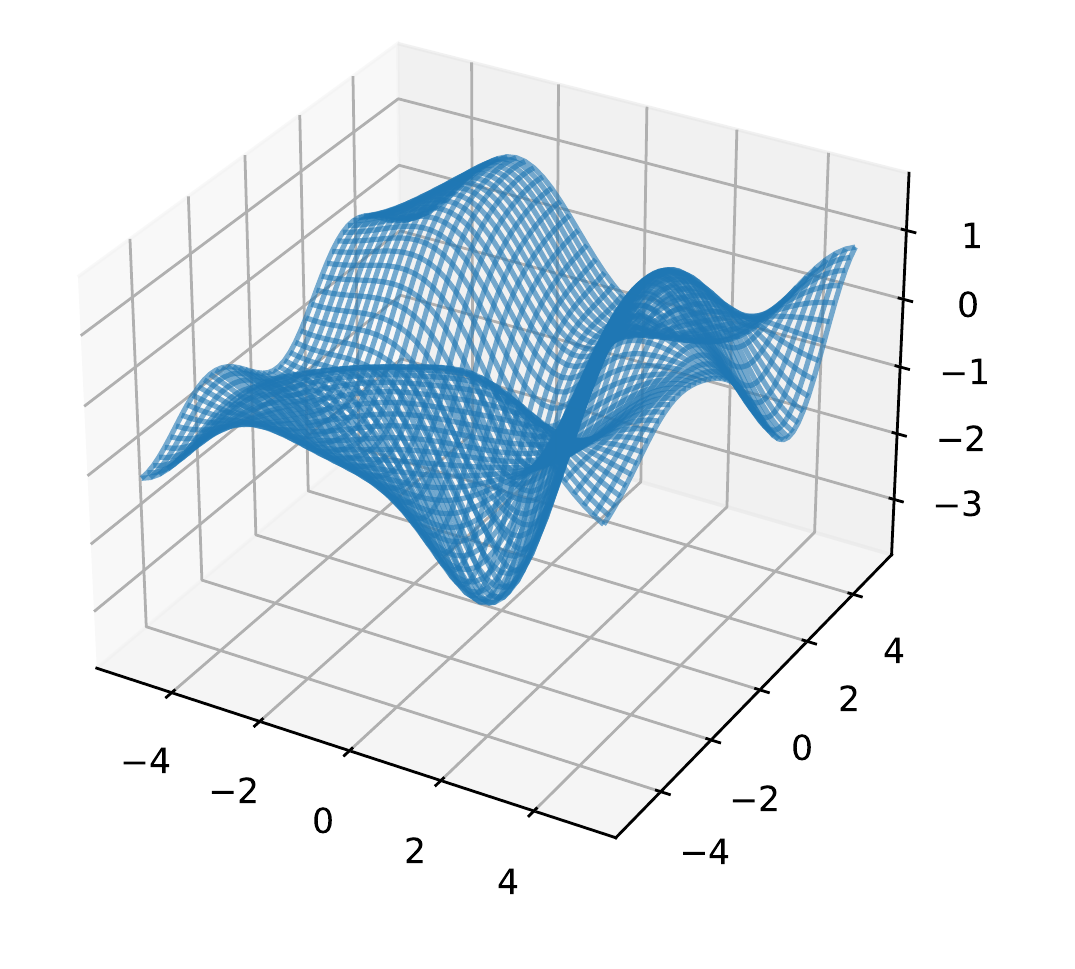}
    \endminipage
    \minipage[c]{0.25\textwidth}
    \includegraphics[width=\linewidth]{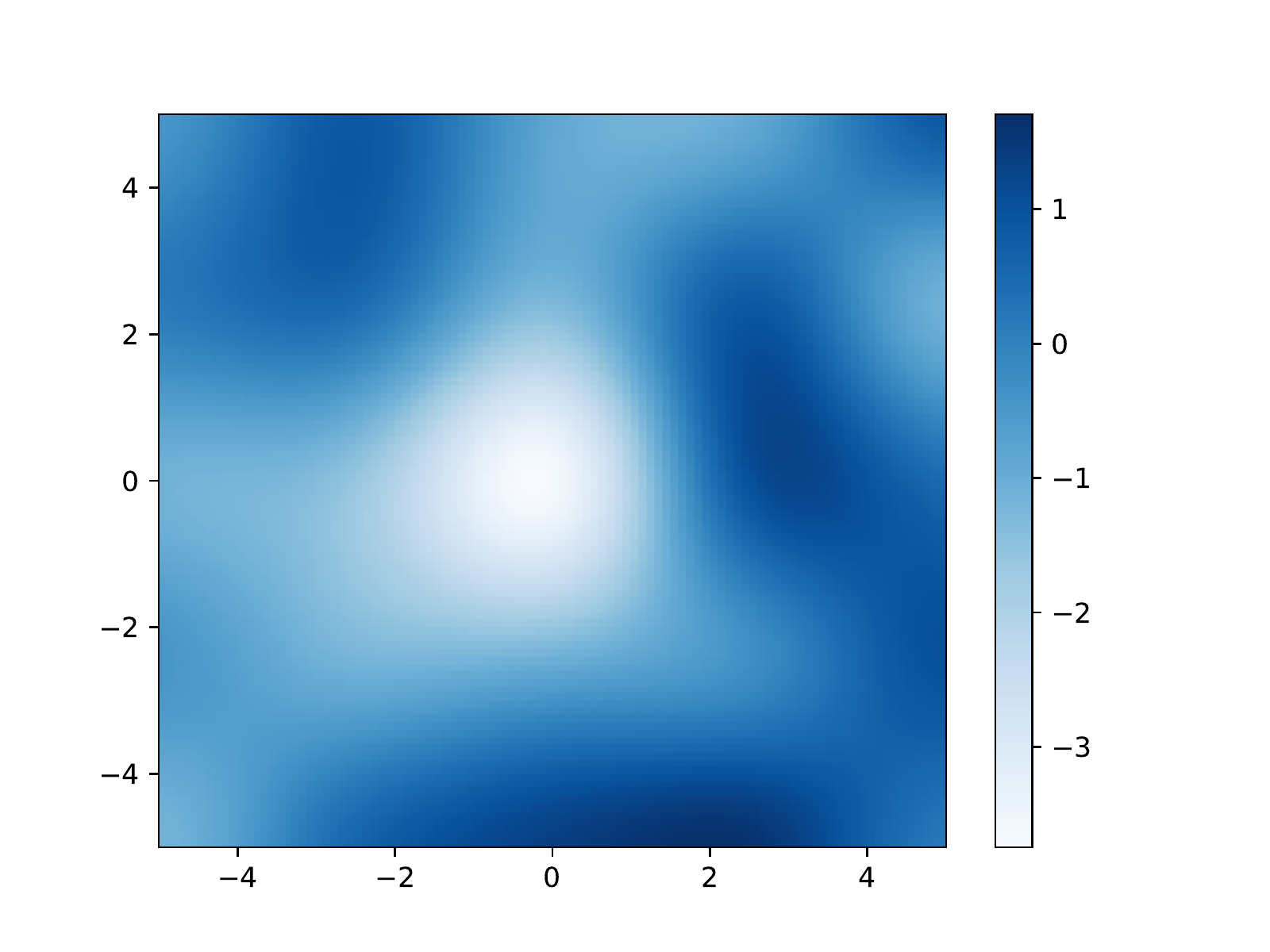}
    \endminipage
    \caption{Illustration of 2D synthetic function.}
    \label{fig:f1}
\end{figure}

\subsubsection{Robot Pushing Objective Function}

We consider the deterministic robot pushing objective function on a 2D plane introduced by \citet{wang2017max}, which aims to find suitable parameters to push an object to the target location $r_g$. We use the Robot3d function, which takes the robot location $(r_x,r_y)$ and pushing duration $t_r$ as a 3D input, and outputs the reversed distance between the pushed robot location and the target location $r_g$, i.e., $$\mathrm{Robot3D}(r_x,r_y,t_r)=5-\|\mathrm{push}(r_x,r_y,t_r)-r_g\|,$$ where $\mathrm{push}(\cdot)$ outputs the pushed robot location.

We let the domain $\Xc$ contain 100 points $(r_x,r_y,t_r)$ randomly sampled from $[-5,5]^2\times[1,30]$, and the target location $r_g$ is set to be $(3,2)$. Since the lengthscale of the SE kernel with maximum likelihood given the noiseless data is $1.94\approx 2$, we use the SE kernel with $l=2$ as prior for all three algorithms.  We found it beneficial for all algorithms to be slightly more explorative for this function, and accordingly use $\beta_h=6$ for KE and $\beta_t=2\sqrt{\log t}$ for GP-UCB and RGP-UCB.

\subsection{Attack Methods}

We consider the following five attack methods, which continue until the corruption budget is exhausted:
\begin{itemize}[topsep=0pt,itemsep=-1ex]
    \item Clipping: This attack proposed by \citet{han2021adversarial} adversarially perturbs $f$ and produces another reward function $\widetilde{f}$ whose optima are in some region $\Rc_{\text{target}}$ that does not contain $x^\ast$ by setting
    \begin{align*}
        \widetilde{f}(x)=\begin{cases}f(x)&x\in\Rc_{\text{target}},\\
            \min\{f(x),f(\widetilde{x}^\ast)-\Delta\}&x\not\in\Rc_{\text{target}},\end{cases}
    \end{align*}
    where $\widetilde{x}^\ast=\argmax_{x\in\Rc_{\text{target}}}f(x)$. We let $\Delta=0.5$ and choose $\Rc_{\text{target}} = \{(x_1,x_2)\in\Xc:x_1\leq x_2\}$ for the function $f_1$, and $\Rc_{\text{target}} = \{(r_x,r_y,t_r)\in\Xc:r_x\geq 0\}$ for the function Robot3D.
    \item Aggressive Subtraction (AggSub): This attack proposed by \citet{han2021adversarial} sets
    \begin{align*}
        \widetilde{f}(x)=\begin{cases}f(x)&x\in\Rc_{\text{target}},\\
            f(x)-h_\text{max}&x\not\in\Rc_{\text{target}},\end{cases}
    \end{align*}
    for some $h_\text{max}>f(x^\ast)-f(\widetilde{x}^\ast)$. We use the same $\Rc_{\text{target}}$ as the Clipping attack, and let $h_\text{max}=1$ for $f_1$ and $h_\text{max}=3$ for Robot3D.
    \item Top-$K$: When $x$ is one of the top $K$ remaining actions, this attack perturbs the reward down to $-1$.  We consider both $K=3$ and $K=5$.
    \item Flip: This attack simply flips the reward from $f(x)$ to $-f(x)$.  Both this attack and the previous one are simple variations of attacks considered for linear bandits by \citet{bogunovic2021stochastic}.
\end{itemize}
For all three algorithms, we consider the attack budgets $C=50$ and $C=100$.  By default, the attack starts at $t=1$, but for the robust algorithms RGP-PE and RGP-UCB, we also conduct experiments with a {\it later} attack, where (i) the attack in RGP-PE starts when at least one action is eliminated from the domain; and (ii) the attack in RGP-UCB starts when at least one action has UCB strictly lower than $\max_{x\in\Xc}\mathrm{LCB}(x)$.

\subsection{Hyperparameters and Trials}

We let $T=50000$, $\sigma=0.02$, and $\lambda=1$ for all three algorithms, $b=0.1$ for RGP-PE and RGP-UCB, and $\psi=0.5,\eta=2$ for RGP-PE.   The results are produced by performing $10$ trials and plotting the average cumulative regret, with error bars indicating one standard deviation.

\subsection{Results}

\begin{figure*}[t!]
    \centering
    \minipage[t]{0.33\textwidth}
    \includegraphics[width=\linewidth]{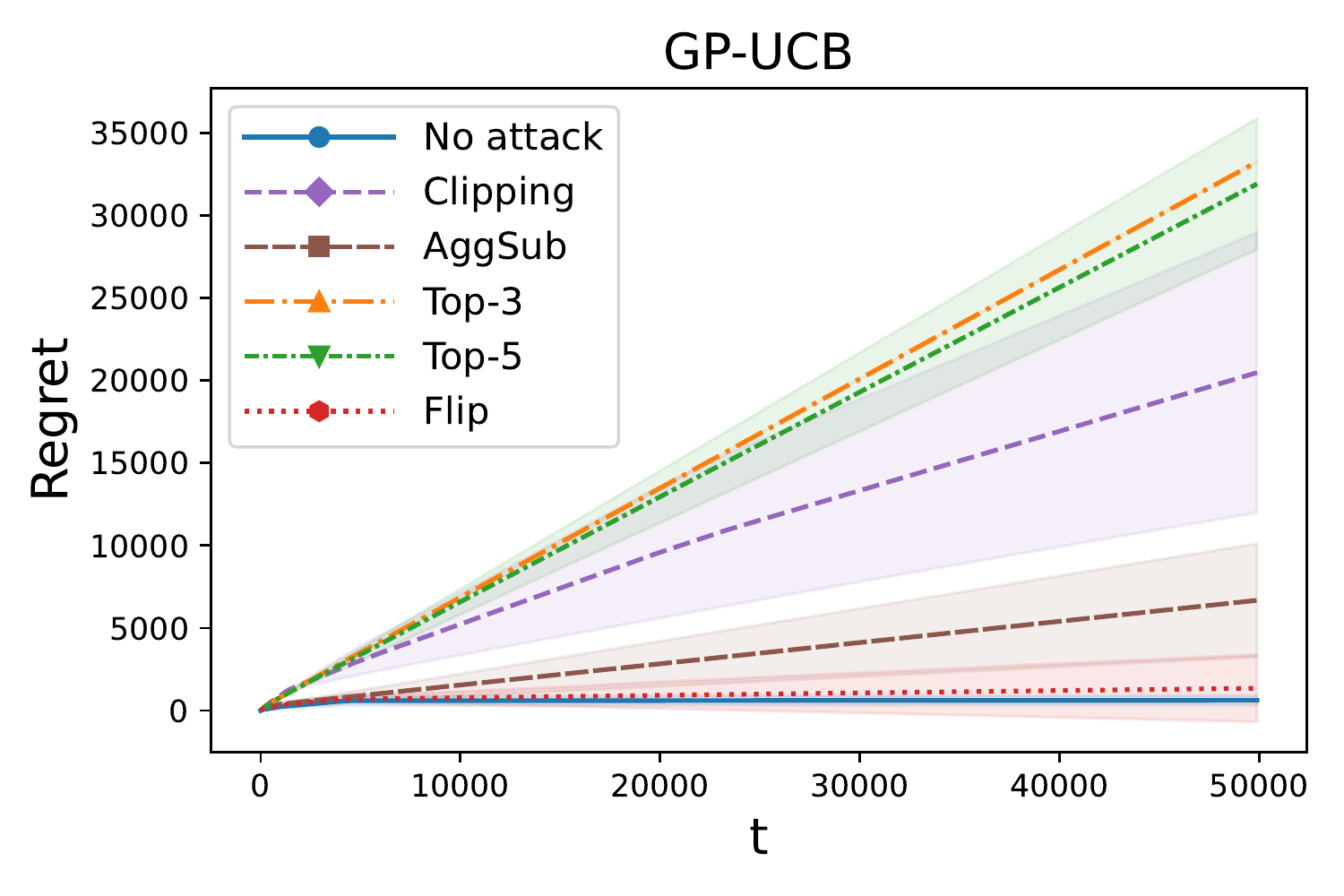}
    \endminipage
    \minipage[t]{0.33\textwidth}
    \includegraphics[width=\linewidth]{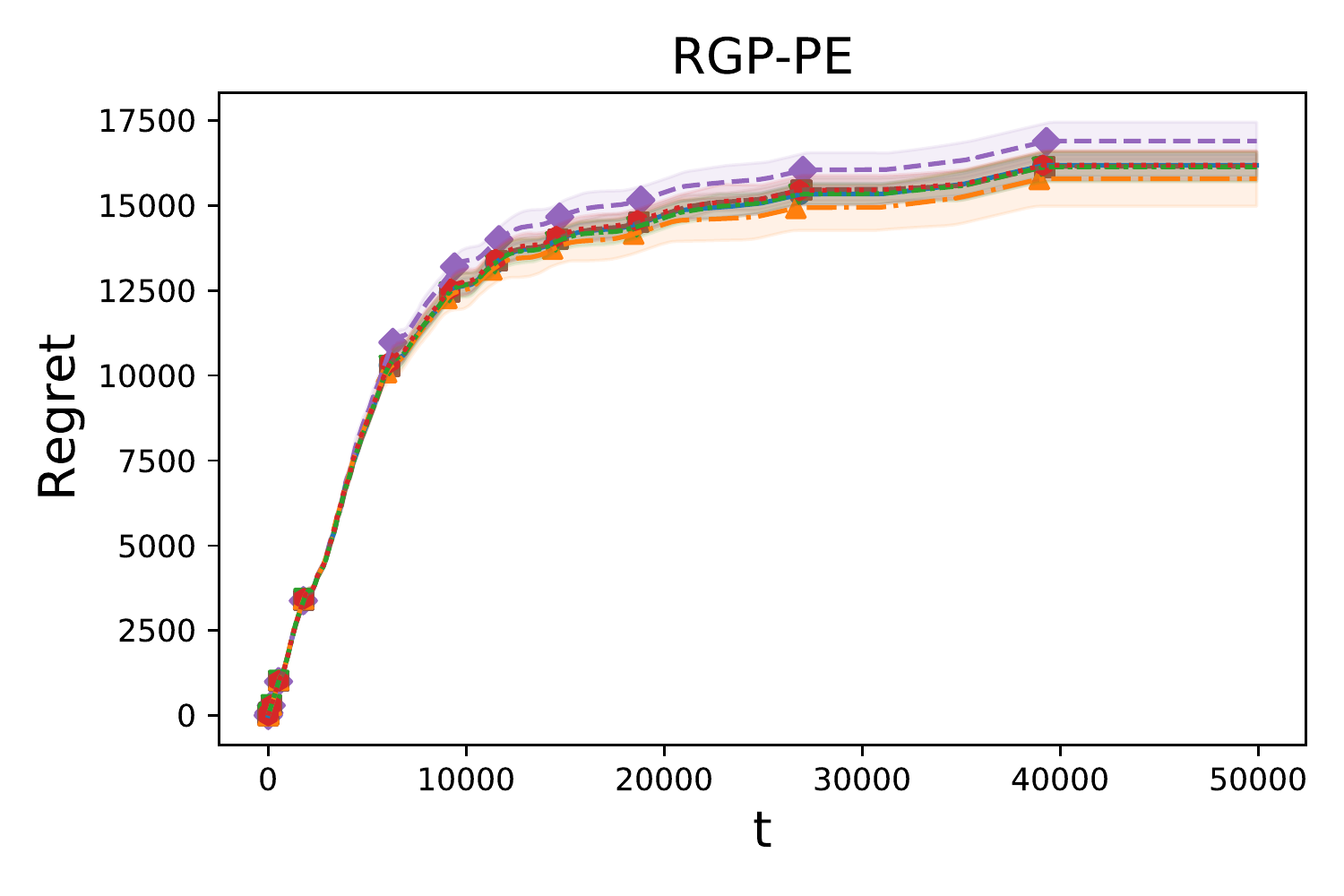}
    \endminipage
    \minipage[t]{0.33\textwidth}
    \includegraphics[width=\linewidth]{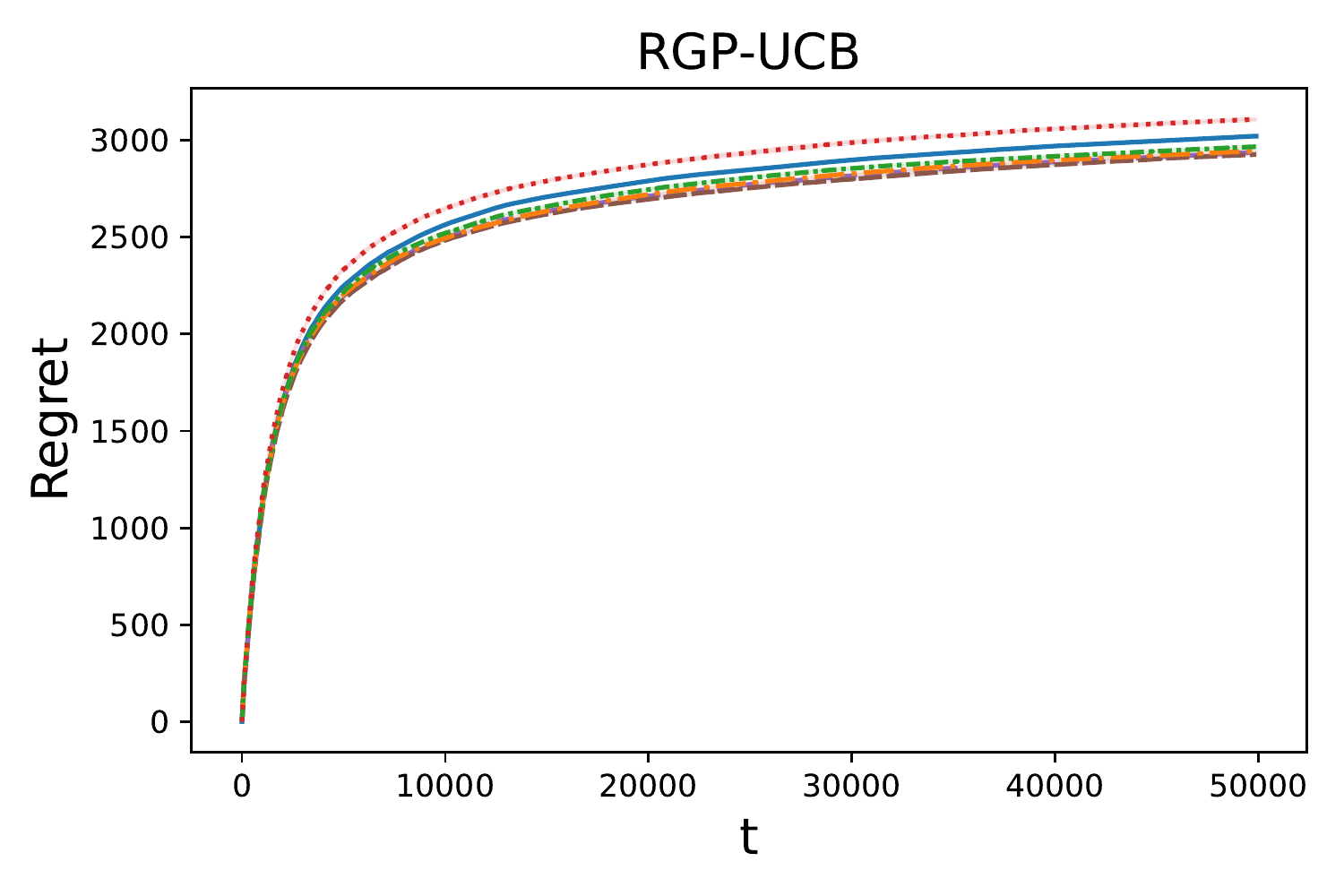}
    \endminipage
    \newline
    \minipage[t]{0.33\textwidth}
    \includegraphics[width=\linewidth]{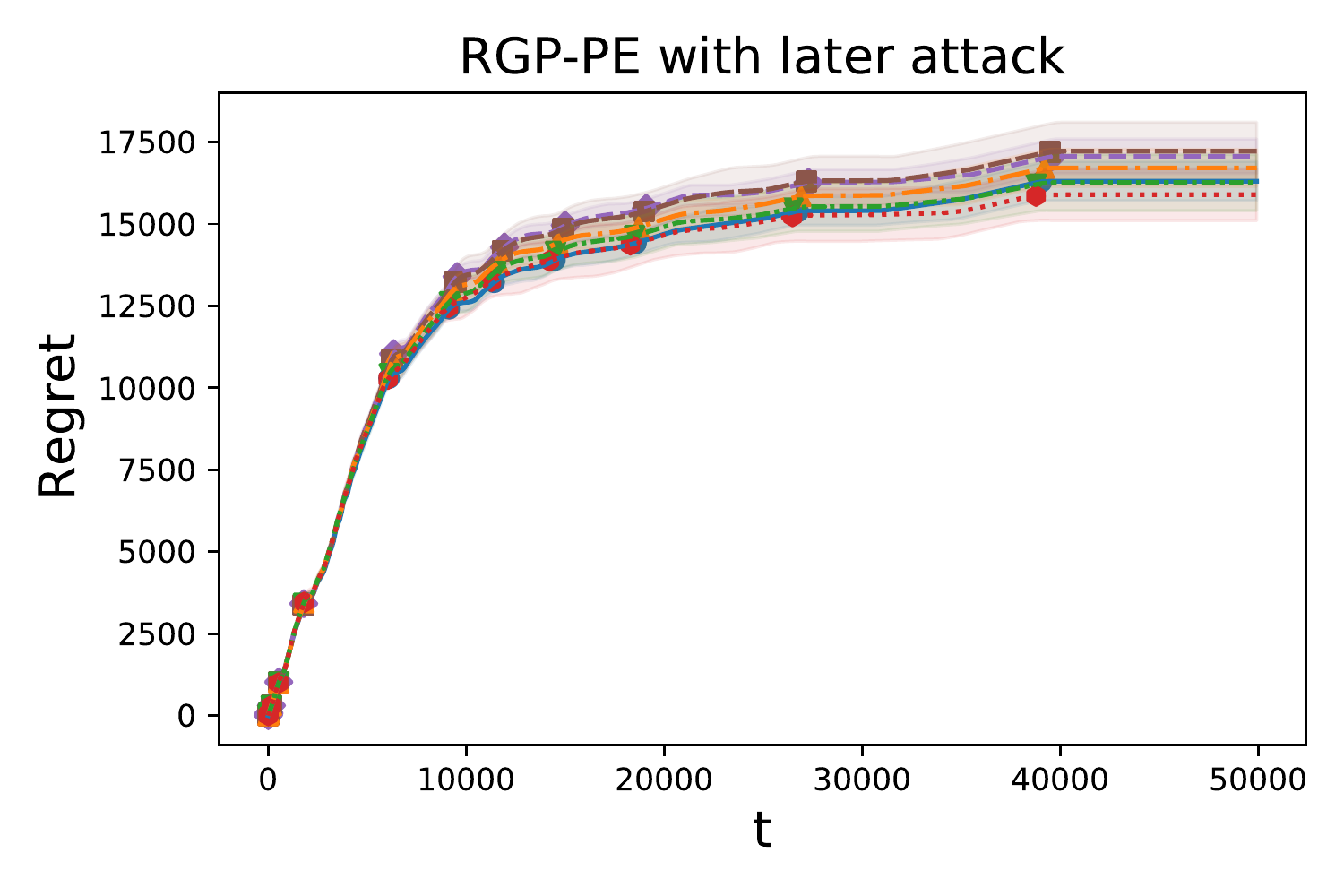}
    \endminipage
    \minipage[t]{0.33\textwidth}
    \includegraphics[width=\linewidth]{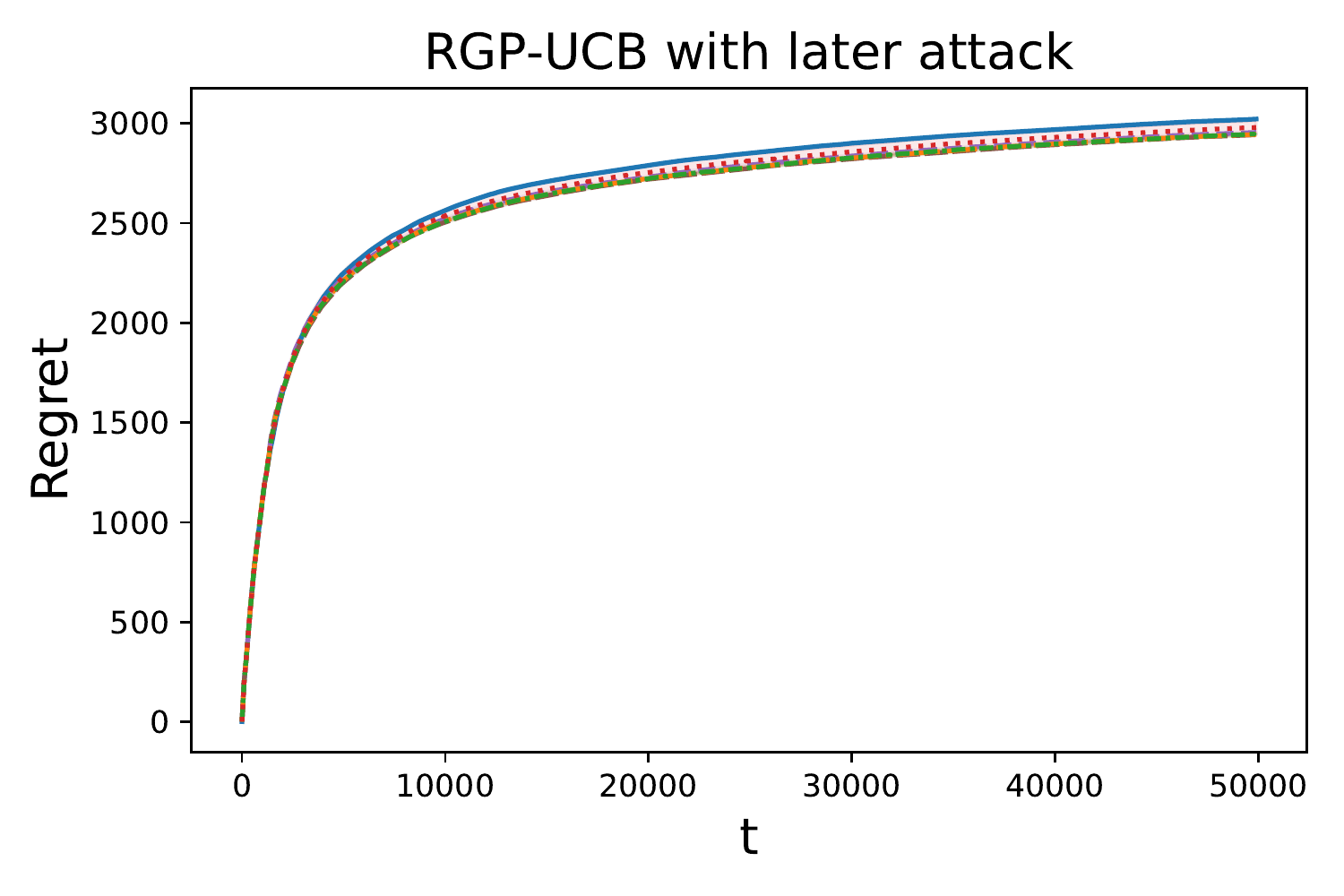}
    \endminipage
    \caption{Performance on $f_1$ with $C=50$.  We observe that GP-UCB incurs linear regret for several attacks, whereas the other algorithms exhibit robustness to all of the attacks.}
    \label{fig:f1_50}
\end{figure*}

\begin{figure*}[t!]
    \centering
    \minipage[t]{0.33\textwidth}
    \includegraphics[width=\linewidth]{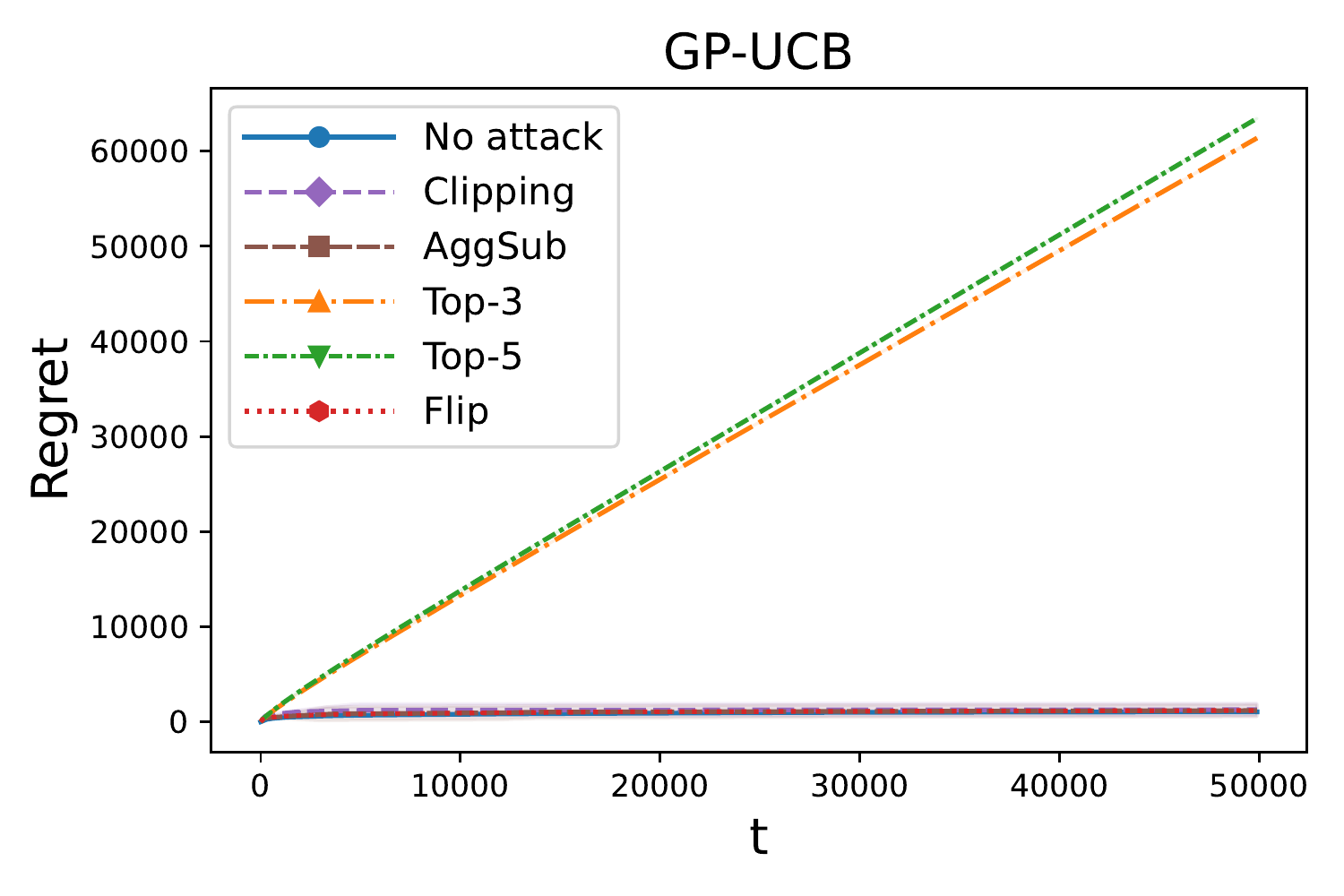}
    \endminipage
    \minipage[t]{0.33\textwidth}
    \includegraphics[width=\linewidth]{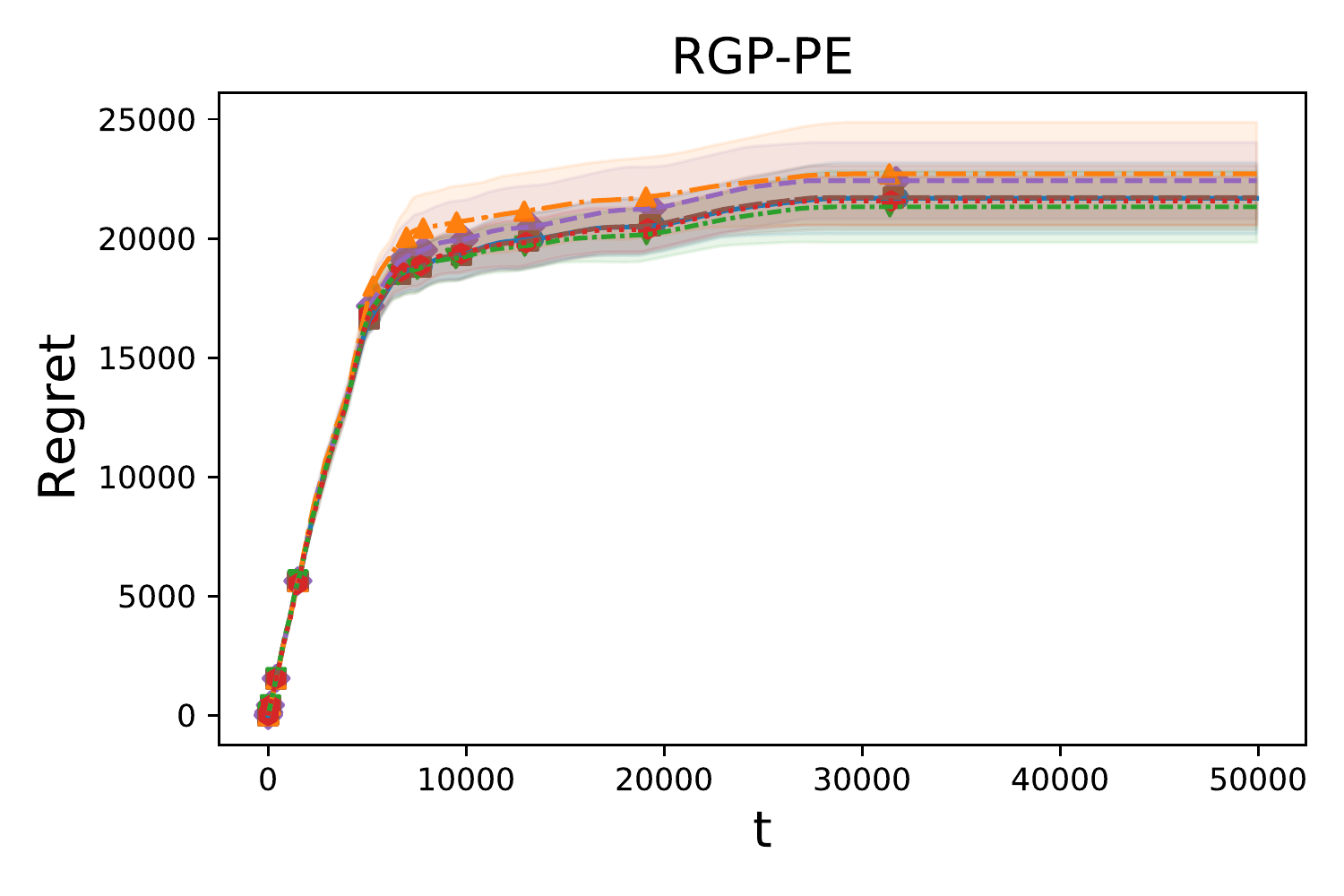}
    \endminipage
    \minipage[t]{0.33\textwidth}
    \includegraphics[width=\linewidth]{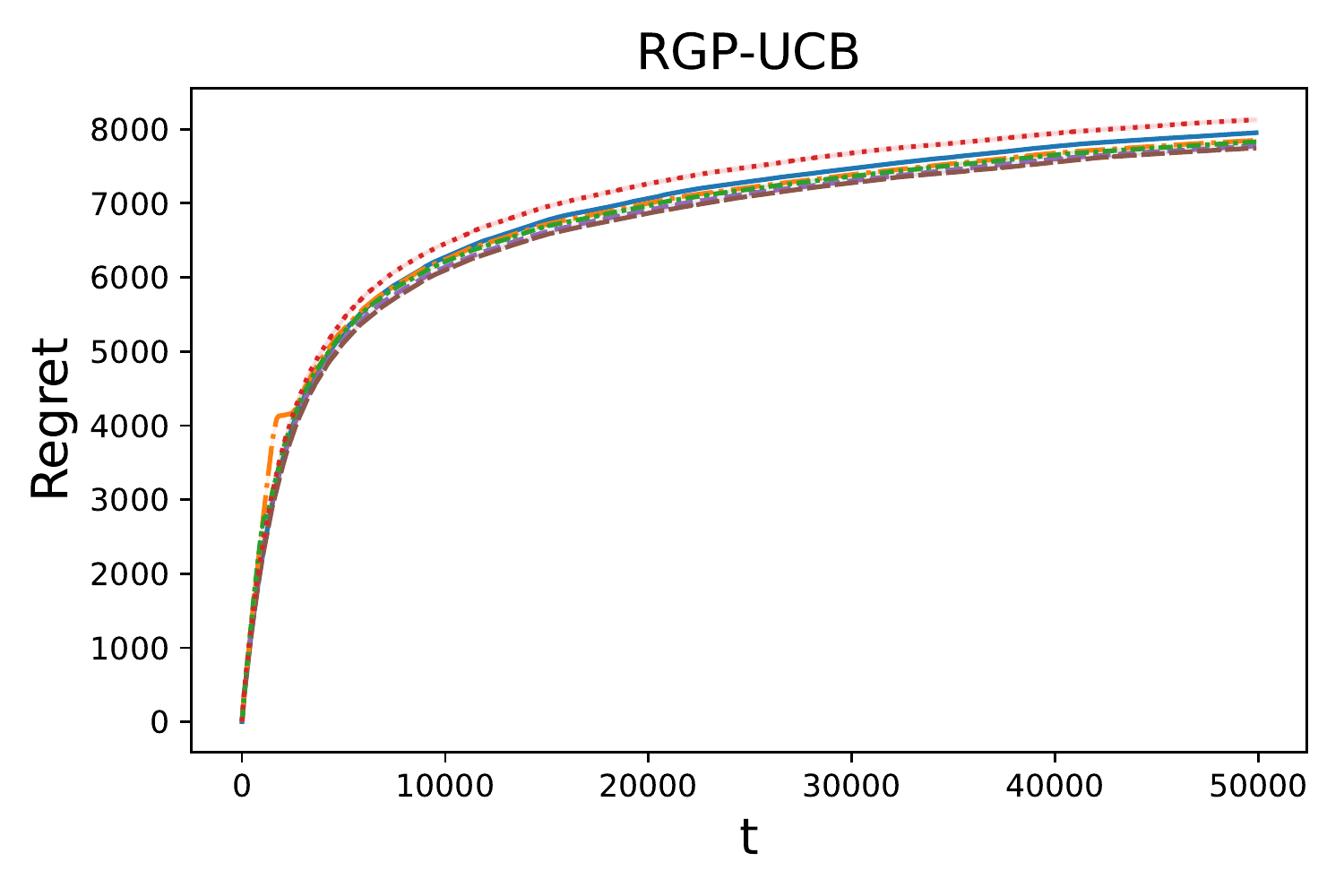}
    \endminipage
    \newline
    \minipage[t]{0.33\textwidth}
    \includegraphics[width=\linewidth]{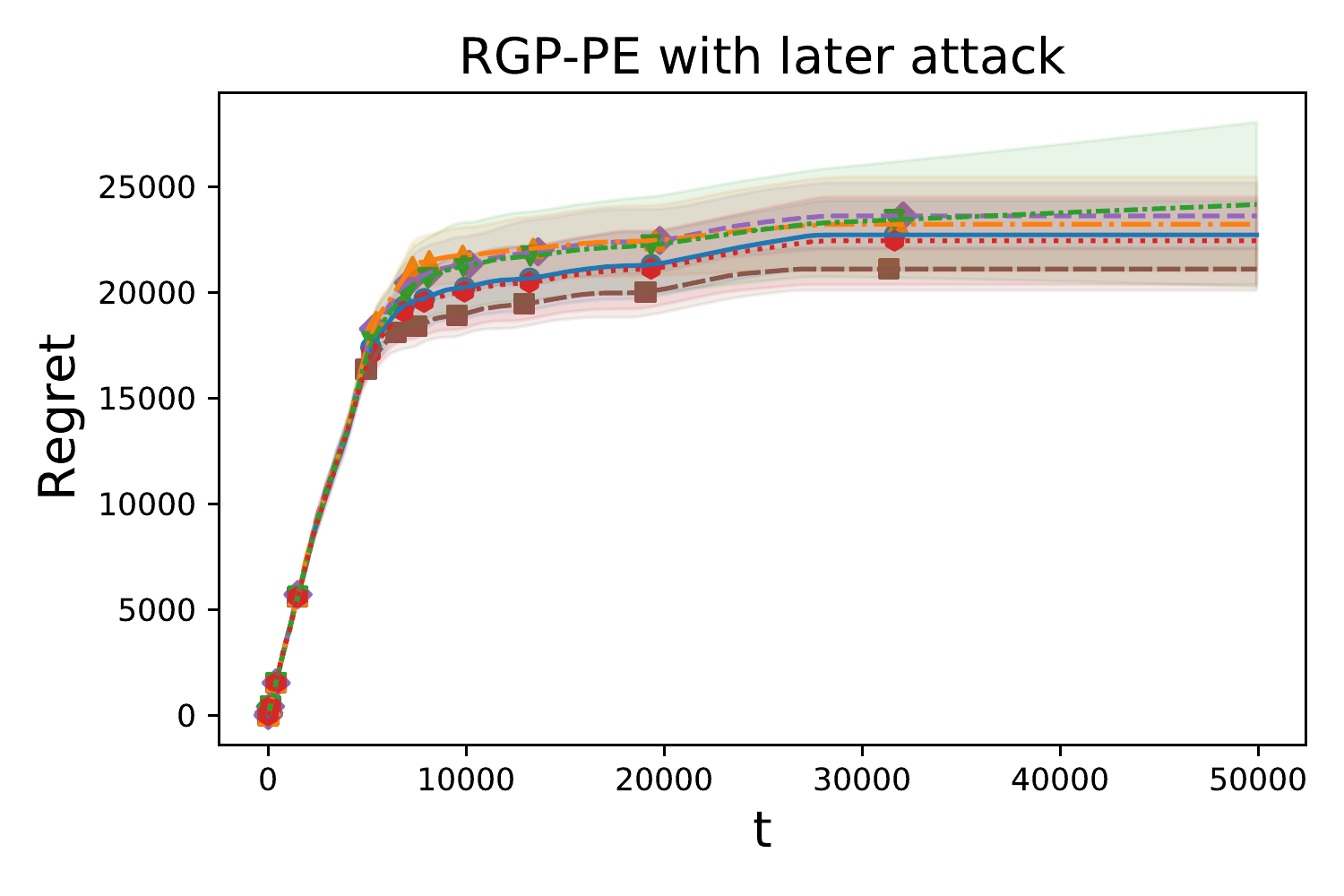}
    \endminipage
    \minipage[t]{0.33\textwidth}
    \includegraphics[width=\linewidth]{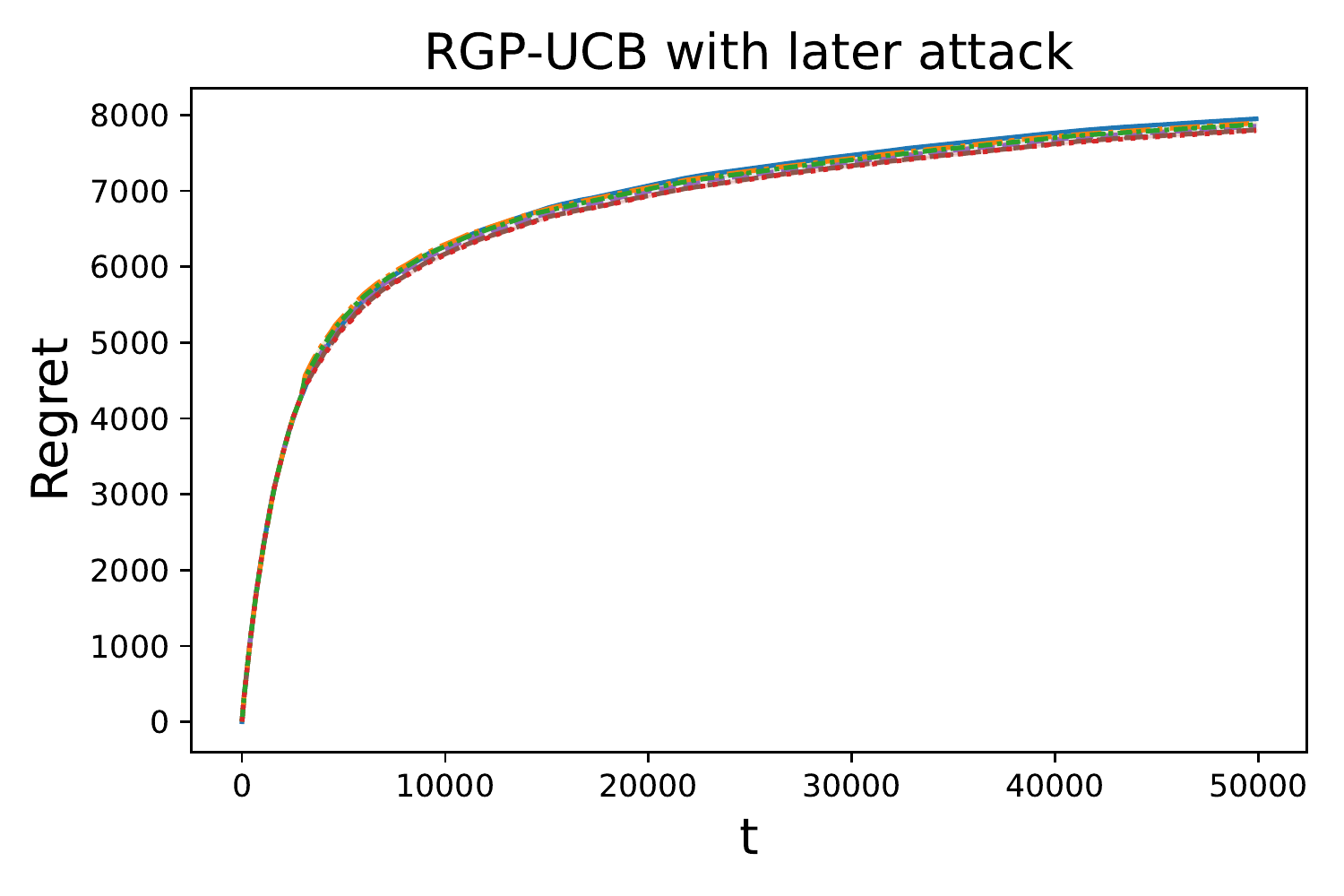}
    \endminipage
    \caption{Performance on Robot3D with $C=100$.  We observe that GP-UCB incurs linear regret for two attacks, whereas the other algorithms exhibit robustness to all of the attacks.}
    \vspace*{-1.5ex}
    \label{fig:robot3d_100}
\end{figure*}

\subsubsection{Comparison of Algorithms}

As shown in Figures \ref{fig:f1_50} and \ref{fig:robot3d_100}, the non-robust algorithm GP-UCB succeeds when no attack is applied. However, the cumulative regret for $f_1$ associated with the Clipping, AggSub, Top-3, and Top-5 attacks grow linearly, indicating that these four attacks succeed in driving GP-UCB towards a suboptimal action.  Similarly, the Top-3 and Top-5 attacks incur linear regret for Robot3D.  In contrast, we find that RGP-PE has only one action remaining at the end of the 13th epoch, and manages to defend against all five attack methods for both functions. 

The baseline robust algorithm RGP-UCB also successfully defends against all the attacks, and generally has lower cumulative regret than RGP-PE, despite RGP-PE having a stronger regret guarantee.  There are at least two possibly reasons for this: (i) The analysis of RGP-UCB of \citet{bogunovic2020corruption} could be loose, with a tighter analysis potentially giving an additive dependence similar to Theorem \ref{theorem:main}, and (ii) the strong scaling laws in our theory may still leave room for significant improvements in the constant factors (or logarithmic, etc.).  Further addressing these findings remains an interesting direction for future work.

We note that even in the more specialized problem of corrupted stochastic linear bandits, analogous practical limitations of a phased elimination algorithm were observed by \citet{bogunovic2021stochastic}.

\subsubsection{Later Attack}

We observe that RGP-PE and RGP-UCB are also able to defend against the later attack, and their performance is similar to when the attack starts from the beginning.  There are only two trials of RGP-PE (budget $C=100$ and Top-5 attack on Robot3D in Figure \ref{fig:robot3d_100}), in which the only action remaining at the end of the 13th epoch is slightly suboptimal. In Appendix \ref{sec:supp_exp} in the supplementary material, we additionally show the experiment results for $f_1$ with $C = 100$, and Robot3D with $C=50$.
% There is only one particular trial of KE with budget $C=100$ and the Flip attack on $f_1$ in Figure \ref{fig:KE_100_1}, that has two actions remaining at the end of the 13th epoch, and proceeds with the last epoch until the time horizon is reached.

% \subsubsection*{Acknowledgements}
% All acknowledgments go at the end of the paper, including thanks to reviewers who gave useful comments, to colleagues who contributed to the ideas, and to funding agencies and corporate sponsors that provided financial support. 
% To preserve the anonymity, please include acknowledgments \emph{only} in the camera-ready papers.

\section{Conclusion}

We have provided a new algorithm for corruption-tolerant Gaussian process bandits based on phased elimination, incorporating a key idea of {\em rare switching} based on a certain condition relating to the information gain, along with a robust estimator, enlarged confidence bounds, and truncation to ensure a minimal number of plays of each selected action.  Our regret bound recovers the best known existing bound under the linear kernel, is provably near-optimal under the SE kernel, and improves on the best existing bound in all cases where the latter is non-trivial.  Perhaps the most immediate direction for future work is to establish to what extent the $C\gamma_T^{3/2}$ dependence can be further improved, particularly in the case of the Mat\'ern kernel.

\section*{Acknowledgment}
This project has received funding from the European Research Council (ERC) under the European Unions Horizon 2020 research and innovation programme grant agreement No 815943. J.~Scarlett was supported by the Singapore National Research Foundation (NRF) under grant number R-252-000-A74-281.

% \newpage

%\bibliographystyle{myIEEEtran}
\bibliographystyle{icml2022}
\bibliography{refs}

%!TEX root = sample_paper.tex
% Supplementary material: To improve readability, you must use a single-column format for the supplementary material.
\appendix
\onecolumn
{\centering
    {\huge \bf Supplementary Material (Appendix) \\ [3mm]}
    {\Large \bf A Robust Phased Elimination Algorithm for \\ Corruption-Tolerant Gaussian Process Bandits\\} %$\\ [2mm] {\normalsize \bf {Submitted to ICML 2022} \par }  
}
\section{Preliminaries}\label{sec:prelim}

Here, we outline some useful and well-known results and definitions typically used in kernelized/GP bandit (Bayesian optimization) algorithms. 

\textbf{RKHS and kernel functions.} We denote by $\Hc_k$ the reproducing kernel Hilbert space (RKHS) corresponding to the kernel $k$, defined as a Hilbert space of functions equipped with an inner product $\langle \cdot,\cdot\rangle_k$, satisfying the reproducing property, i.e., $\langle f(\cdot),k(\cdot,x)\rangle_k=f(x), \forall x\in\Xc,\forall f\in\Hc_k$.

Since we assume that the kernel is bounded (i.e., $k(x,x') \le 1$), continuous, and has a compact domain (namely, $D = [0,1]^d$), the conditions of Mercer's theorem are satisfied \cite{Kan18}, and the kernel admits a countably infinite (or finite) dimensional feature space, i.e., there exists $\lbrace (\lambda_m,\phi_m)\rbrace_{m=1}^\infty$ such that $k(x,x')=\sum_{m=1}^\infty\lambda_m\phi_m(x)\phi_m(x')$ where the $\phi_m(\cdot)$ are eigenfunctions, and the $\lambda_m \ge 0$ are eigenvalues.  We form an infinite-dimensional feature vector as follows:
\begin{equation}
    \phi(x) = (\sqrt{\lambda_1}\phi_1(x), \sqrt{\lambda_2}\phi_2(x), \dotsc),
\end{equation}
which yields $k(x,x') = \phi(x)^T\phi(x')$.  As stated in the main text, we assume that the RKHS norm is upper bounded by some constant $B>0$.

%\todo{Explain/prove the alternative definition of $\sigma(\cdot)$:}
%\begin{equation} \label{eq:sigma_alt_def}
%	\sigma^2_{t}(x) = \lambda \phi(x)^T ( \Phi_t^T \Phi_t + \lambda I)^{-1} \phi(x) 
%\end{equation}
%where $\|x \|_{M} := \sqrt{x^T M x}$.

The following lemma provides a useful expression for $\sigma^2_t(x)$.  This result is fairly standard, but for completeness, we provide a short proof.  Here and subsequently, we use $I$ to denote the infinite-dimensional identity matrix in feature space.

\begin{lemma}
    Defining $\Phi_t = [\phi(x_1),\dots,\phi(x_t)]^T$, we have
    \begin{equation} \label{eq:sigma_alt_def}
        \sigma^2_{t}(x) = \lambda \phi(x)^T ( \Phi_t^T \Phi_t + \lambda I)^{-1} \phi(x).
    \end{equation}
\end{lemma}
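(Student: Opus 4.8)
The plan is to rewrite everything in feature space using $k(x,x') = \phi(x)^T\phi(x')$, and then reduce the claimed identity to the standard ``push-through'' (Woodbury-type) matrix identity. First I would set $\Phi_t = [\phi(x_1),\dots,\phi(x_t)]^T$ as in the statement and record the consequences $k_t(x) = \Phi_t\phi(x)$, $K_t = \Phi_t\Phi_t^T$, and $k(x,x) = \phi(x)^T\phi(x)$. Substituting these into the definition \eqref{eq:posterior_variance} gives
\begin{equation}
  \sigma^2_t(x) = \phi(x)^T\Big[ I - \Phi_t^T\big(\Phi_t\Phi_t^T + \lambda I_t\big)^{-1}\Phi_t \Big]\phi(x),
\end{equation}
so the lemma is equivalent to the operator identity $I - \Phi_t^T(\Phi_t\Phi_t^T + \lambda I_t)^{-1}\Phi_t = \lambda(\Phi_t^T\Phi_t + \lambda I)^{-1}$.

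To prove this, I would first establish the push-through identity $\Phi_t^T(\Phi_t\Phi_t^T + \lambda I_t)^{-1} = (\Phi_t^T\Phi_t + \lambda I)^{-1}\Phi_t^T$. This follows from the elementary equality $(\Phi_t^T\Phi_t + \lambda I)\Phi_t^T = \Phi_t^T\Phi_t\Phi_t^T + \lambda\Phi_t^T = \Phi_t^T(\Phi_t\Phi_t^T + \lambda I_t)$, after left-multiplying by $(\Phi_t^T\Phi_t + \lambda I)^{-1}$ and right-multiplying by $(\Phi_t\Phi_t^T + \lambda I_t)^{-1}$. Then
\begin{equation}
  I - \Phi_t^T(\Phi_t\Phi_t^T + \lambda I_t)^{-1}\Phi_t = I - (\Phi_t^T\Phi_t + \lambda I)^{-1}\Phi_t^T\Phi_t = (\Phi_t^T\Phi_t + \lambda I)^{-1}\big[(\Phi_t^T\Phi_t + \lambda I) - \Phi_t^T\Phi_t\big] = \lambda(\Phi_t^T\Phi_t + \lambda I)^{-1},
\end{equation}
which upon conjugating by $\phi(x)$ yields exactly \eqref{eq:sigma_alt_def}.

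The only point requiring a word of care — and the closest thing to an ``obstacle'' in an otherwise routine computation — is that $I$ and $\Phi_t^T\Phi_t$ act on the (possibly) infinite-dimensional feature space, so I would note that $\Phi_t^T\Phi_t + \lambda I$ is a bounded, self-adjoint, positive operator bounded below by $\lambda I$ and hence boundedly invertible, while $\Phi_t\Phi_t^T + \lambda I_t$ is an ordinary $t\times t$ positive-definite matrix; with these observations all the manipulations above are legitimate. (One may alternatively truncate $\phi$ to its first $M$ coordinates, apply the finite-dimensional Woodbury identity, and let $M\to\infty$, but the direct operator argument is cleaner.)
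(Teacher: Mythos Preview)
Your proof is correct and follows essentially the same approach as the paper: both rewrite $\sigma_t^2(x)$ in feature space and reduce the claim to the push-through identity $(\Phi_t^T\Phi_t + \lambda I)\Phi_t^T = \Phi_t^T(\Phi_t\Phi_t^T + \lambda I_t)$. Your organization is slightly more streamlined (you prove the operator identity $I - \Phi_t^T(\Phi_t\Phi_t^T + \lambda I_t)^{-1}\Phi_t = \lambda(\Phi_t^T\Phi_t + \lambda I)^{-1}$ directly, whereas the paper first decomposes $\phi(x)$ and then cancels), and you address the infinite-dimensional invertibility issue more explicitly, but the mathematical content is the same.
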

\begin{proof}
    We can rewrite $\sigma^2_t(x)$ as follows,
    % \begin{align}
    %     \sigma^2_t(x)&=k(x,x)-k_t(x)^T(K_t+\lambda I_t)^{-1}k_t(x)\\
    %     &=\phi(x)^T\phi(x)-\phi(x)^T\Phi_t^T(\Phi_t\Phi_t^T+\lambda I_t)^{-1}\Phi_t\phi(x)\\
    %     &=\phi(x)^T\big(I-\Phi_t^T(\Phi_t\Phi_t^T+\lambda I_t)^{-1}\Phi_t\big)\phi(x)\\
    %     &=\lambda \phi(x)^T ( \Phi_t^T \Phi_t + \lambda I_\Hc)^{-1} \phi(x),
    % \end{align}
% {\color{purple} [TODO: Should perhaps be revised. Should be more complicated than this.]
%     where the last step can be obtained by substituting $A=I_\Hc, C=\tfrac{1}{\lambda}I_t,$ and $U^T=V=\Phi_t$ into the following Woodbury matrix identity:
%     \begin{align*}
%     	A^{-1}-A^{-1}U(VA^{-1}U+C^{-1})^{-1}VA^{-1}
%     	&= (UCV+A)^{-1}.
% 	\end{align*} 
	
% 	[OR]
	
	\begin{align}
        \sigma^2_t(x)&=k(x,x)-k_t(x)^T(K_t+\lambda I_t)^{-1}k_t(x)\\
        &=\phi(x)^T\phi(x)-\phi(x)^T\Phi_t^T(\Phi_t\Phi_t^T+\lambda I_t)^{-1}\Phi_t\phi(x)\\
		&=\phi(x)^T \Phi_t^T(\Phi_t\Phi_t^T+\lambda I_t)^{-1}\Phi_t\phi(x)+\lambda\phi(x)^T(\Phi_t^T\Phi_t+\lambda I)^{-1}\phi(x)-\phi(x)^T\Phi_t^T(\Phi_t\Phi_t^T+\lambda I_t)^{-1}\Phi_t\phi(x)\label{eq:phi_x}\\       
        &=\lambda \phi(x)^T ( \Phi_t^T \Phi_t + \lambda I)^{-1} \phi(x),
    \end{align}
	where \cref{eq:phi_x} uses $\phi(x)=\Phi_t^T(\Phi_t\Phi_t^T+\lambda I_t)^{-1}\Phi_t\phi(x)+\lambda(\Phi_t^T\Phi_t+\lambda I)^{-1}\phi(x)$, which can be obtained as follows.
	\begin{align}
		(\Phi_t^T\Phi_t+\lambda I)\phi(x)&=\Phi_t^T\Phi_t\phi(x)+\lambda\phi(x) \\
		\phi(x)&=(\Phi_t^T\Phi_t+\lambda I)^{-1}\Phi_t^T\Phi_t\phi(x)+\lambda(\Phi_t^T\Phi_t+\lambda I)^{-1}\phi(x)\\
		&=\Phi_t^T(\Phi_t\Phi_t^T+\lambda I_t)^{-1}\Phi_t\phi(x)+\lambda(\Phi_t^T\Phi_t+\lambda I)^{-1}\phi(x),
	\end{align}
	where the last step follows from the standard push-through identity $(\Phi_t^T\Phi_t+\lambda I)\Phi_t^T = \Phi_t^T(\Phi_t\Phi_t^T+\lambda I_t)$ (e.g., see Eq.~(12) of \cite{chowdhury17kernelized}), which implies $\Phi_t^T(\Phi_t\Phi_t^T+\lambda I_t)^{-1} = (\Phi_t^T\Phi_t+\lambda I)^{-1}\Phi_t^T$.
	
\end{proof}

Some of the most commonly used kernels are: 
\begin{itemize}
  \item Linear kernel: $k_{\text{lin}}(x,x') = x^T x'$,
  \item Squared exponential kernel: $k_{\text{SE}}(x,x') = \exp \big(- \dfrac{\|x - x'\|^2}{2l^2} \big)$,
  \item Mat\'ern kernel: $k_{\text{Mat}}(x,x') = \frac{2^{1-\nu}}{ \Gamma(\nu) } \Big( \frac{\sqrt{2\nu} \|x - x'\| }{l} \Big) J_{\nu}\Big( \frac{\sqrt{2\nu} \|x - x'\| }{l} \Big)$,
\end{itemize}
where $l$ denotes the length-scale hyperparameter, $\nu > 0$ is an additional hyperparameter that dictates the smoothness, and $J(\cdot)$ and $\Gamma(\cdot)$ denote the modified Bessel function and the Gamma function, respectively.

\textbf{Maximum information gain.} 
The maximum information gain is defined as \cite{srinivas2009gaussian}
\begin{align*}
	\gamma_t&:=\max_{A\subseteq \mathcal{X}:|A|=t}I(f_A;y_A)\\
	&=\max_{x_1,\dots,x_t}\frac{1}{2}\log\det(I_t+\lambda^{-1}K_t),
\end{align*}
where $f_A=[f(x_t)]_{x_t\in A}$, $y_A=[y_t]_{x_t\in A}$, and $I(\cdot;\cdot)$ denotes mutual information. The maximum information gain quantifies the maximum reduction in uncertainty about $f$ after $t$ observations. The following upper bounds for specific kernels have been shown previously \cite{srinivas2009gaussian,vakili2020information}:
\begin{itemize}
  \item Linear kernel: $\gamma_t^\text{lin} = \Otilde(d\log t)$,
  \item Squared exponential kernel: $\gamma_t^\text{SE} = \Otilde((\log t)^d)$,
  \item Mat\'ern kernel: $\gamma_t^\text{Mat} = \Otilde\big(t^\frac{d}{2\nu+d}\big)$.
\end{itemize}

The following lemma shows that $\sum_{t=1}^T  \sigma_{t-1}(x_{t})$ can be upper bounded in terms of $\gamma_T$.

\begin{lemma}\label{lemma:sum_of_sd}
With $\sigma_{t-1}(x_t)$ denoting the posterior standard deviation at $x_t$ based on $(x_1,\dots,x_{t-1})$, we have
\begin{align*}
\sum_{t=1}^T  \sigma_{t-1}(x_{t}) \leq \sqrt{T \sum_{t=1}^T  \sigma^2_{t-1}(x_{t})} \leq \sqrt{\frac{2}{\log (1+\lambda^{-1})} T\gamma_T}\leq \sqrt{(2\lambda+1)T\gamma_T}.
\end{align*}
\end{lemma}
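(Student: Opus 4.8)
The plan is to prove the three-term chain from left to right; every step is a standard manipulation from the GP-bandit literature (in the spirit of Srinivas et al.\ and Chowdhury--Gopalan), so I anticipate no real obstacle, only the bookkeeping of a few elementary inequalities and one determinant identity.

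\emph{First inequality.} Apply Cauchy--Schwarz (equivalently Jensen for $x\mapsto x^2$) to the vectors $(1,\dots,1)$ and $\big(\sigma_0(x_1),\dots,\sigma_{T-1}(x_T)\big)$ in $\R^T$, giving $\sum_{t=1}^T \sigma_{t-1}(x_t) \le \sqrt{T}\,\sqrt{\sum_{t=1}^T \sigma_{t-1}^2(x_t)}$.

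\emph{Second inequality.} This is the main content. First I would record that, since $(K_{t-1}+\lambda I_{t-1})^{-1}$ is positive semidefinite, \cref{eq:posterior_variance} gives $\sigma_{t-1}^2(x_t)\le k(x_t,x_t)\le 1$ by the normalization of the kernel; in particular $\lambda^{-1}\sigma_{t-1}^2(x_t)\le\lambda^{-1}$. Next I would use the elementary fact that $s\mapsto \tfrac{\log(1+s)}{s}$ is decreasing on $(0,\infty)$, so for $s\in[0,\lambda^{-1}]$ one has $s\le \tfrac{\lambda^{-1}}{\log(1+\lambda^{-1})}\log(1+s)$; taking $s=\lambda^{-1}\sigma_{t-1}^2(x_t)$ yields
\[
\sigma_{t-1}^2(x_t)\le \frac{1}{\log(1+\lambda^{-1})}\,\log\!\big(1+\lambda^{-1}\sigma_{t-1}^2(x_t)\big).
\]
Summing over $t$ and invoking the telescoping determinant identity $\sum_{t=1}^T\log\big(1+\lambda^{-1}\sigma_{t-1}^2(x_t)\big)=\log\det(I_T+\lambda^{-1}K_T)$ — which follows by induction from the rank-one (Schur complement) update $\det(K_t+\lambda I_t)=(\lambda+\sigma_{t-1}^2(x_t))\det(K_{t-1}+\lambda I_{t-1})$ after dividing both sides by $\lambda^t$ — together with the definition $\gamma_T\ge \tfrac12\log\det(I_T+\lambda^{-1}K_T)$ from \cref{eq:max_info_gain}, gives $\sum_{t=1}^T\sigma_{t-1}^2(x_t)\le \tfrac{2\gamma_T}{\log(1+\lambda^{-1})}$, which is the claim.

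\emph{Third inequality.} It suffices to show $\tfrac{2}{\log(1+\lambda^{-1})}\le 2\lambda+1$, i.e.\ $\log(1+x)\ge \tfrac{2x}{x+2}$ with $x=\lambda^{-1}>0$. Both sides vanish at $x=0$, and comparing derivatives reduces to $(x+2)^2\ge 4(1+x)$, i.e.\ $x^2\ge 0$, which holds. As for the \emph{main obstacle}: there is none of real substance here — the only non-elementary ingredient is the telescoping determinant identity, which is routine, and the rest is the decreasing-$\tfrac{\log(1+s)}{s}$ trick plus a one-line derivative comparison. The step most worth stating carefully is the second inequality, since it is where the normalization $k\le 1$ and the choice of the upper endpoint $\lambda^{-1}$ in the log-inequality enter.
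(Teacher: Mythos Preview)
Your proof is correct and follows the same approach as the paper: Cauchy--Schwarz for the first step, the Srinivas-style $\log(1+s)/s$ bound plus the telescoping determinant identity for the second (which the paper simply attributes to \cite{srinivas2009gaussian}, Lemma 5.4), and the inequality $(2\lambda+1)\log(1+\lambda^{-1})>2$ for the third. The only difference is that you fully expand the details the paper leaves to citation, including a clean derivative argument for the third inequality.
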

\begin{proof}
The first inequality follows by Cauchy-Schwartz inequality; the second inequality follows from (\cite{srinivas2009gaussian}, Lemma 5.4); the last inequality follows since $(2\lambda+1)\log(1+\lambda^{-1})> 2$ for $\lambda>0$.
\end{proof}

%\textbf{RKHS function concentration.} \todo{Add confidence lemma; Definition of $\beta_t$; From the new paper of Satar;}

\section{Corrupted Confidence Bounds} 

For convenience, we first restate our main assumption regarding non-corrupted confidence bounds.

\rcb*

In this appendix, we prove \cref{lemma:corrupted_confidence_bounds}, which is restated as follows.

\ccb*

\begin{proof}
For simplicity, we denote the epoch length $u_h$ by $t$ in this proof, and use $\mu_t(\cdot),\tmu_t(\cdot),$ and $\sigma_t(\cdot)$ to denote $\mu^{(h)}(\cdot),\tmu^{(h)}(\cdot),$ and $\sigma^{(h)}(\cdot)$, respectively.  Thus, here $\sigma_t(\cdot)$ is defined with respect to the $t = u_h$ sampled points, whereas \Cref{alg:cpe} only computes the posterior variance with respect to the points selected in the for loop, of which there are $l_h$ (possibly strictly fewer than $u_h$).  This part of the analysis only requires the former notion, so there should be no confusion between the two.

We first recall the definition of the robust-corrupted mean estimator from \cref{eq:corrupted_mean}, i.e.,
\begin{equation}
    \tmu_{t}(x) = k_t(x)^T(K_t + \lambda I_t)^{-1} \tY_t, 
\end{equation}
where $\tY_t \in \R^t$ and $\tY_t[i] = \tfrac{\sum_{j=1}^t \bone \lbrace x_i = x_j \rbrace \ty_j}{\sum_{j=1}^t \bone \lbrace x_i = x_j \rbrace}$ for $i \in [t]$. We use $z_t(x;\lambda) \in \R^t$ to denote $k_t(x)^T(K_t + \lambda I_t)^{-1}$ which implies $\tmu_{t}(x) = \sum_{i=1}^{t} z_t(x;\lambda)[i] \cdot \tY_t[i]$. 

We will also use the following equivalent feature-based expression: $z_t(x;\lambda) = k_t(x)^T(K_t + \lambda I)^{-1} = \phi(x)^T\Phi_t^T (\Phi_t \Phi_t^T + \lambda I_t)^{-1}$, where $k(x,x') = \phi(x)^T \phi(x')$, $\phi(x) \in \hilbert$ for every $x \in \Xc$, and $\Phi_t = (\phi(x_{t'}))_{t' \leq t}$ denotes the matrix of (potentially infinite-dimensional) features placed in $t$ rows. Finally, recalling that $I$ denotes the infinite-dimensional identity matrix in feature space, we also have
\begin{equation} \label{eq:z_t_alt}
	z_t(x;\lambda) = \phi(x)^T (\Phi_t^T \Phi_t + \lambda I)^{-1}\Phi_t^T,
\end{equation}
which follows from the standard push-through identity $\Phi_t^T\big(\Phi_t \Phi_t^T + \lambda I_{t}\big)^{-1} = \big(\Phi_t^T \Phi_t + \lambda I \big)^{-1}\Phi_t^T$ (e.g., see Eq.~(12) of \cite{chowdhury17kernelized}).

We proceed to analyze the corrupted estimator $\tmu_{t}(x)$:
\begin{align}
	\tmu_{t}(x) &= \sum_{i=1}^{t} z_t(x;\lambda)[i]\; \tY_t[i] \\
	&= \sum_{i=1}^{t}  \tfrac{\sum_{j=1}^t \bone \lbrace x_i = x_j \rbrace \ty_j}{\sum_{j=1}^t \bone \lbrace x_i = x_j \rbrace} z_t(x;\lambda)[i]  \label{eq:corr_mean_1}\\
	&= \sum_{i=1}^{t}   \tfrac{\sum_{j=1}^t \bone \lbrace x_i = x_j \rbrace (f(x_i) + \epsilon_j + c_j)}{\sum_{j=1}^t \bone \lbrace x_i = x_j \rbrace} z_t(x;\lambda)[i] \label{eq:corr_mean_2}\\
	&= \sum_{i=1}^{t}   f(x_i) z_t(x;\lambda)[i] + \sum_{i=1}^{t} \tfrac{\sum_{j=1}^t \bone \lbrace x_i = x_j \rbrace \epsilon_j}{\sum_{j=1}^t \bone \lbrace x_i = x_j \rbrace} z_t(x;\lambda)[i]  + \sum_{i=1}^{t} \tfrac{\sum_{j=1}^t \bone \lbrace x_i = x_j \rbrace  c_j}{\sum_{j=1}^t \bone \lbrace x_i = x_j \rbrace}z_t(x;\lambda)[i] \label{eq:corr_mean_3}  \\
	&= \sum_{i=1}^{t}  f(x_i)z_t(x;\lambda)[i]  + \sum_{i=1}^{t}\epsilon_i  z_t(x;\lambda)[i] + \sum_{i=1}^{t} \tfrac{\sum_{j=1}^t \bone \lbrace x_i = x_j \rbrace  c_j}{\sum_{j=1}^t \bone \lbrace x_i = x_j \rbrace} z_t(x;\lambda)[i] \label{eq:corr_mean_4} \\
	&= \sum_{i=1}^{t}  (f(x_i)  + \epsilon_i)z_t(x;\lambda)[i] + \sum_{i=1}^{t} \tfrac{\sum_{j=1}^t \bone \lbrace x_i = x_j \rbrace  c_j}{\sum_{j=1}^t \bone \lbrace x_i = x_j \rbrace}z_t(x;\lambda)[i]  \\
	&= \mu_t(x) + \sum_{i=1}^{t}  \tfrac{\sum_{j=1}^t \bone \lbrace x_i = x_j \rbrace  c_j}{\sum_{j=1}^t \bone \lbrace x_i = x_j \rbrace} z_t(x;\lambda)[i]. \label{eq:corr_mean_5}
\end{align}  
Here, we used the definition of $\tY_t[i]$ in \cref{eq:corr_mean_1} and the corrupted observation $\ty_j$ corresponding to $x_j = x_i$ at time $j$ in \cref{eq:corr_mean_2}, while \cref{eq:corr_mean_3} follows from rearranging.  The proof of \cref{eq:corr_mean_4} is deferred to the next paragraph.  Finally, \cref{eq:corr_mean_5} follows from the definition of the noisy stochastic observation $y_i = f(x_i) + \epsilon_i$ and the definition of the standard (non-corrupted) mean estimator from \cref{eq:posterior_mean}.% that depends on the non-corrupted observations $\lbrace y_i\rbrace_{i=1}^t$.

To prove \cref{eq:corr_mean_4}, we define $\tepsilon_t \in \R^t$ such that $\tepsilon_t[i] = \tfrac{\sum_{j=1}^t \bone \lbrace x_i = x_j \rbrace \epsilon_j}{\sum_{j=1}^t \bone \lbrace x_i = x_j \rbrace}$ for $i \in [t]$, and use $u_t(x)$ to denote $\sum_{j=1}^t \bone \lbrace x = x_j \rbrace$, i.e., the number of times action $x$ was played during the $t$ rounds. Then,
\begin{align} 
	\sum_{i=1}^{t} \tfrac{\sum_{j=1}^t \bone \lbrace x_i = x_j \rbrace \epsilon_j}{\sum_{j=1}^t \bone \lbrace x_i = x_j \rbrace} z_t(x;\lambda)[i] &=  z_t(x;\lambda) \tepsilon_t \label{eq:inter_0} \\
	&= \phi(x)^T (\Phi_t^T \Phi_t + \lambda I)^{-1}\Phi_t^T \tepsilon_t \label{eq:inter_1}\\
	&= \phi(x)^T (\Phi_t^T \Phi_t + \lambda I)^{-1} \sum_{i=1}^t \phi(x_i) \tepsilon_t[i] \label{eq:inter_1a} \\
	&= \phi(x)^T (\Phi_t^T \Phi_t + \lambda I)^{-1} \sum_{x \in \Xc, u_t(x) \neq 0} u_t(x) \phi(x) \tfrac{\sum_{j=1}^t \bone \lbrace x = x_j \rbrace \epsilon_j}{u_t(x)}  \label{eq:inter_2} \\
	&= \phi(x)^T (\Phi_t^T \Phi_t + \lambda I)^{-1} \sum_{x \in \Xc, u_t(x) \neq 0} \phi(x) \sum_{j=1}^t \bone \lbrace x = x_j \rbrace \epsilon_j \\
	&= \phi(x)^T (\Phi_t^T \Phi_t + \lambda I)^{-1}  \sum_{j=1}^t \phi(x_j)  \epsilon_j   \label{eq:inter_4} \\
	%&= \phi(x)^T (\Phi_t^T \Phi_t + \lambda I_{\Hc})^{-1}  \sum_{j=1}^t \phi(x_j)  \epsilon_j \\
	&= \phi(x)^T (\Phi_t^T \Phi_t + \lambda I)^{-1} \Phi_t^T \epsilon_t  \label{eq:inter_5} \\
	&= z_t(x;\lambda) \epsilon_t = \sum_{i=1}^t \epsilon_i z_t(x;\lambda)[i],  \label{eq:inter_6}
\end{align} 
where \cref{eq:inter_1} holds due to \cref{eq:z_t_alt}, and \cref{eq:inter_2} uses the definitions of $ \tepsilon_t$ and $u_t(x)$, and \eqref{eq:inter_4}--\eqref{eq:inter_6} are analogous to \eqref{eq:inter_0}--\eqref{eq:inter_1a} in the opposite order.

By rearranging \cref{eq:corr_mean_5}, it follows that we can bound the absolute difference between the corrupted mean estimator and the standard one as follows:
\begin{equation} \label{eq:corr_mean_diff}
| \tmu_{t}(x) - \mu_t(x)| \leq \Big| \sum_{i=1}^{t} \tfrac{\sum_{j=1}^t \bone \lbrace x_i = x_j \rbrace  c_j}{\sum_{j=1}^t \bone \lbrace x_i = x_j \rbrace} z_t(x;\lambda)[i] \Big|.
\end{equation}

Next, we proceed to analyze the right hand side term.  We use $C_t$ to denote a vector in $\R^t$ such that $C_t[i] = \tfrac{\sum_{j=1}^t \bone \lbrace x_i = x_j \rbrace  c_j}{\sum_{j=1}^t \bone \lbrace x_i = x_j \rbrace}$ for every $i \in [t]$.  Then, continuing from \cref{eq:corr_mean_diff}, we have
\begin{align}
	\Big| \sum_{i=1}^{t} \tfrac{\sum_{j=1}^t \bone \lbrace x_i = x_j \rbrace  c_j}{\sum_{j=1}^t \bone \lbrace x_i = x_j \rbrace} z_t(x;\lambda)[i] \Big| % &= \Big| \phi(x)^T\Phi_t^T (\Phi_t \Phi_t^T + \lambda I_t)^{-1} C_t   \Big|\\
	&= \Big| \phi(x)^T \underbrace{(\Phi_t^T \Phi_t + \lambda I)^{-1}}_{:=\Gamma_t^{-1}}\Phi_t^T C_t \Big|  \label{eq:standard_identity} \\
	%&= \Big| \langle \phi(x)^T \Gamma_t^{-1}, \Phi_t c_t  \rangle_{\Hc} \Big| 
	&= \Big| \sum_{i=1}^t C_t[i]  \phi(x)^T \Gamma_t^{-1} \phi(x_i) \Big|, \label{eq:abs_bound}
\end{align}  
where we again used the form of $z_t$ given in \cref{eq:z_t_alt}.

Let $C_t(x) = \sum_{j=1}^t \bone \lbrace x = x_j \rbrace  c_j$ for $x \in \Xc$. Then, we can rewrite \eqref{eq:abs_bound} as
\begin{align}
	&\Big| \sum_{i=1}^{t} \tfrac{\sum_{j=1}^t \bone \lbrace x_i = x_j \rbrace  c_j}{\sum_{j=1}^t \bone \lbrace x_i = x_j \rbrace} z_t(x;\lambda)[i] \Big| = \Big| \sum_{x' \in \Xc, u_t(x') \neq 0} \frac{C_t(x')}{u_t(x')} u_t(x')  \phi(x)^T \Gamma_t^{-1} \phi(x') \Big| \\
	&\quad \quad \leq \sum_{x' \in \Xc, u_t(x') \neq 0} \frac{C}{u_t(x')} u_t(x') \Big| \phi(x)^T \Gamma_t^{-1} \phi(x') \Big| \label{eq:enter_C} \\
	&\quad \quad \leq \frac{C}{u_{\min}}\sum_{x' \in \Xc, u_t(x') \neq 0} u_t(x') \Big| \phi(x)^T \Gamma_t^{-1} \phi(x') \Big| \label{eq:enter_umin} \\
	&\quad \quad \leq \frac{C}{u_{\min}} \sqrt{ \Big(\sum_{x' \in \Xc, u_t(x') \neq 0} u_t(x') \Big) \phi(x)^T \sum_{x' \in \Xc, u_t(x') \neq 0} u_t(x')  \Gamma_t^{-1} \phi(x') \phi(x')^T \Gamma_t^{-1}  \phi(x)} \label{eq:apply_jensen} \\ 
	&\quad \quad \leq \frac{C}{u_{\min}} \sqrt{ \Big(\sum_{x' \in \Xc, u_t(x') \neq 0} u_t(x') \Big) \phi(x)^T \sum_{x' \in \Xc, u_t(x') \neq 0} u_t(x')  \Gamma_t^{-1} \big(\phi(x') \phi(x')^T + \tfrac{\lambda}{t} I \big) \Gamma_t^{-1}  \phi(x)} \\ 
	&\quad \quad = \frac{C}{u_{\min}} \sqrt{ \sum_{x' \in \Xc, u_t(x') \neq 0} u_t(x') \| \phi(x) \|^2_{\Gamma_t^{-1}}} \label{eq:alt_def_of_gamma_matrix} \\ 
	&\quad \quad = \frac{C}{u_{\min}} \sqrt{t} \| \phi(x) \|_{\Gamma_t^{-1}} =  \frac{C \sqrt{t}}{ \lambda u_{\min}} \sigma_{t}(x), \label{eq:final_term_cb}
\end{align}
where:
\begin{itemize}
    \item \cref{eq:enter_C} holds since $C \geq |C_t(x)|$ for every $x \in \Xc$.
    \item \cref{eq:enter_umin} follows from the definition of $u_{\min}$ in the lemma statement.
    \item To obtain \cref{eq:apply_jensen}, we multiply and divide by $\sum_{x \in \Xc, u_t(x) \neq 0} u_t(x)$ and apply $\E[|X|] \leq \sqrt{\E[X^2]} $ considering the distribution  $\tfrac{u_t(x')}{\sum_{x \in \Xc, u_t(x) \neq 0} u_t(x)}$.  (Note also that, in generic vector-matrix notation, $(a^T M b)^2 = a^T M b b^T M a$ when $M$ is a symmetric matrix. )
    \item To obtain \cref{eq:alt_def_of_gamma_matrix}, we use $ \sum_{x' \in \Xc, u_t(x') \neq 0} u_t(x') \tfrac{\lambda}{t} I = \lambda I$ (i.e., $\sum_{x' \in \Xc, u_t(x') \neq 0} u_t(x') = t$), and note that $\Gamma_t = \big(\sum_{x' \in \Xc, u_t(x') \neq 0} u_t(x') \phi(x') \phi(x')^T\big) + \lambda I$.  Combining these facts gives $ \sum_{x' \in \Xc, u_t(x') \neq 0} u_t(x') \big(\phi(x') \phi(x')^T + \tfrac{\lambda}{t} I \big) = \Gamma_t$, which cancels with one of the $\Gamma_t^{-1}$ terms.  The remaining quantity $\phi(x)^T \Gamma_t^{-1} \phi(x)$ is precisely the definition of $\| \phi(x) \|_{\Gamma_t^{-1}}^2$.
    \item Finally, \cref{eq:final_term_cb} holds since
    \begin{equation}
        \| \phi(x) \|^2_{\Gamma_t^{-1}} = \phi(x)^T \Gamma_t^{-1} \phi(x) = \lambda^{-1} \sigma^2_{t}(x),
    \end{equation}	
    which holds due to \cref{eq:sigma_alt_def}.
\end{itemize}
Conditioned on the event in \cref{assumption:regular_confidence_bounds}, the final result then follows since 
\begin{equation}
	|\tmu_t (x) - f (x)| \leq | \mu_t (x) - f (x) | + |\tmu_t (x) - \mu_t(x)| \le  \Big(\beta_h +  \tfrac{C \sqrt{t}}{ \lambda u_{\min}}\Big) \sigma_t(x),
\end{equation}
where we apply \cref{assumption:regular_confidence_bounds} and \cref{eq:final_term_cb} to upper bound $| \mu_t (x) - f (x) |$ and $|\tmu_t (x) - \mu_t(x)|$, respectively. 

\end{proof}
%\newpage
\section{Auxiliary Results}

In the following, we recall the notation in \Cref{alg:cpe}, particularly the truncation parameter $\psi > 0$.  In addition, in accordance with the algorithm statement, quantities such as $\sigma_{t}(\cdot)$ and $K_t$ implicitly depend on $h$, and are defined with respect to the $t \le l_h$ points chosen up to time $t$ in the for loop (as opposed to the $u_h \ge l_h$ points sampled {\em after} the for loop).

We first formalize the claim that the number of epochs is at most $\Hbar = \log_2 T$.

\begin{lemma} \label{lemma:Hbar}
    For any time horizon $T$, \Cref{alg:cpe} terminates after at most $\log_2 T$ epochs.
\end{lemma}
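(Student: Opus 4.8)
The plan is to exploit the geometric growth of the per-epoch lengths: although the epoch lengths are chosen in a data-dependent way, each epoch is guaranteed to be long enough that their cumulative length grows exponentially in the epoch index, which caps the number of epochs at $\log_2 T$.

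The first step is to prove the deterministic lower bound $u_h \ge l_h = 2^{h+1}$ on the length of epoch $h$. By the definition in Line 11, $\xi_h(x) = \tfrac{1}{l_h}\sum_{i=1}^{l_h}\bone\{x = x_i\}$, and since $\Sc_h = \{x_1,\dots,x_{l_h}\}$ (built in Line 5), summing over $x \in \Sc_h$ yields $\sum_{x\in\Sc_h}\xi_h(x) = \tfrac{1}{l_h}\sum_{i=1}^{l_h}1 = 1$. The ceiling in Line 12 gives $u_h(x) = \lceil l_h\max\{\xi_h(x),\psi\}\rceil \ge l_h\,\xi_h(x)$ for each $x\in\Sc_h$, so $u_h = \sum_{x\in\Sc_h}u_h(x) \ge l_h\sum_{x\in\Sc_h}\xi_h(x) = l_h$; combined with $l_0 = 2$ and $l_{h+1} = 2l_h$ this is exactly $u_h \ge 2^{h+1}$. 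This bound holds with probability one, irrespective of the (random) active sets $\Xc_h$ and hence the (random) sets $\Sc_h$, which is why the claim $H \le \Hbar$ can be guaranteed surely.

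The second step sums this over epochs. A new epoch is begun only while strictly fewer than $T$ actions have been played, so after epochs $0,\dots,H-2$ have all completed we still have $\sum_{h=0}^{H-2}u_h < T$; combined with the first step this gives $2^{H}-2 = \sum_{h=0}^{H-2}2^{h+1} < T$, hence $2^H < T+2$ and $H \le \log_2(T+2)$, which matches $\Hbar = \log_2 T$ up to rounding. (If one instead assumes that a partially-fitting final epoch is not started, so that all $H$ epochs run to completion with $\sum_{h=0}^{H-1}u_h \le T$, then $2^H \le 2^{H+1}-2 = \sum_{h=0}^{H-1}2^{h+1}\le T$ and the bound $H\le\log_2 T$ holds exactly.)

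I do not expect a genuine obstacle here: the substantive content is the inequality $u_h \ge l_h$, which reduces to the identity $\sum_{x\in\Sc_h}\xi_h(x) = 1$ together with the ceiling in the definition of $u_h(x)$, followed by a one-line geometric-series estimate. The only point requiring mild care is the bookkeeping of the possibly-truncated final epoch, which at worst shifts the bound inside the logarithm by an additive constant and is immaterial given that all regret statements are $\Otilde(\cdot)$.
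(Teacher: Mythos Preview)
Your proposal is correct and follows essentially the same approach as the paper: both arguments hinge on the inequality $u_h \ge l_h = 2^{h+1}$, obtained from $\sum_{x\in\Sc_h}\xi_h(x)=1$ together with the definition of $u_h(x)$ in Line~12, and then conclude via the doubling $l_{h+1}=2l_h$. The paper compresses all of this into a single sentence, whereas you spell out the geometric-series step and the final-epoch bookkeeping explicitly (correctly noting the harmless additive constant inside the logarithm), but the underlying idea is identical.
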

\begin{proof}
    This follows immediately from the fact that we initialize $l_0 = 2$, double $l_h$ after each epoch, and take at least $l_h$ actions in epoch $h$ (see Line 12 with $\sum_{x} \xi_h(x) = 1$) until $T$ actions have been played.
\end{proof}

Next, we state a simple result regarding the epoch lengths.

\begin{lemma} \label{lemma:length_of_epoch}
    The length $u_h$ of epoch $h$ in~\Cref{alg:cpe} satisfies $u_h \le l_h(2 + |\Sc_h|\psi)$.  
\end{lemma}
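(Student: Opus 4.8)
The plan is a direct computation starting from the definition of $u_h$ in Lines 11--13 of \Cref{alg:cpe}, and I do not anticipate any genuine obstacle here; the only points requiring a moment's care are the two elementary facts $\sum_{x\in\Sc_h}\xi_h(x)=1$ and $|\Sc_h|\le l_h$.

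First I would recall that $u_h=\sum_{x\in\Sc_h}u_h(x)$ with $u_h(x)=\lceil l_h\max\{\xi_h(x),\psi\}\rceil$. Applying the bound $\lceil z\rceil\le z+1$ termwise and then $\max\{a,b\}\le a+b$ (valid since both arguments are nonnegative), I get
\begin{equation}
u_h \;\le\; \sum_{x\in\Sc_h}\big(l_h\max\{\xi_h(x),\psi\}+1\big) \;\le\; \sum_{x\in\Sc_h}\big(l_h\xi_h(x)+l_h\psi+1\big) \;=\; l_h\sum_{x\in\Sc_h}\xi_h(x) + l_h\psi|\Sc_h| + |\Sc_h|. \nonumber
\end{equation}
Next I would observe that each of the $l_h$ rounds of the for loop selects exactly one action, which lies in $\Sc_h$ by construction, so the empirical frequencies sum to $\sum_{x\in\Sc_h}\xi_h(x)=\tfrac{1}{l_h}\sum_{x\in\Sc_h}\sum_{i=1}^{l_h}\bone\{x=x_i\}=\tfrac{1}{l_h}\cdot l_h=1$. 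Likewise, since $\Sc_h$ consists of distinct actions drawn from $l_h$ rounds, $|\Sc_h|\le l_h$.

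Substituting these two facts into the display yields $u_h\le l_h + l_h\psi|\Sc_h| + l_h = l_h(2+|\Sc_h|\psi)$, which is the claimed bound. The argument is purely arithmetic, so the ``hard part'' is merely making sure the ceiling is handled correctly and that the bound $|\Sc_h|\le l_h$ is invoked to absorb the stray $|\Sc_h|$ term into the constant $2$.
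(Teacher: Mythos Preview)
Your proof is correct and follows essentially the same approach as the paper's own proof: both bound the ceiling by $z+1$, split the $\max$ into a sum, use $\sum_{x\in\Sc_h}\xi_h(x)=1$, and absorb the stray $|\Sc_h|$ term via $|\Sc_h|\le l_h$. If anything, you spell out the justifications for those two elementary facts slightly more explicitly than the paper does.
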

\begin{proof}
	The number of times each action from $\Sc_h$ is played is $u_h(x)$, and is given in \Cref{alg:cpe} (Line 12). Hence, we have
	\begin{align}
		u_h  &= \sum_{x \in \Sc_h} \lceil l_h \max \lbrace \xi_h(x), \psi \rbrace  \rceil \\
			 &\leq  \sum_{x \in \Sc_h} (l_h \max \lbrace \xi_h(x), \psi \rbrace + 1) \\
			 &\leq |\Sc_h| + \sum_{x \in \Sc_h} (l_h \xi_h(x) + l_h \psi) \\
			 &\leq 2l_h +  l_h \psi |\Sc_h| = l_h(2 + \psi |\Sc_h|),
	\end{align}
	where in the last inequality, we use $|\Sc_h| \leq l_h$ and $\sum_{x \in \Sc_h} \xi_h(x) = 1$.
\end{proof}

% The following lemma extends \cite[Lemma 12]{abbasi2011linear} to non-parametric kernel matrices (and potentially infinite feature spaces) and kernelized variance estimators (\todo{Not quite. The previous one is more related to the exploration parameter than the ratio of variances.}). 

The following result characterizes the posterior uncertainty of points sampled in between the switching events in \Cref{alg:cpe}, and may be of independent interest for problems in RKHS function spaces, particularly in settings where infrequent action switching is desirable.

\begin{lemma}\label{lemma:sigma_in_between}
Consider any epoch $h$, the corresponding set of actions $\Xc_h$, and the regularization parameter $\lambda > 0$. Let $t, t' \in [l_h]$ denote two rounds in epoch $h$ such that $t \geq t'$, and for which 
\begin{equation}
	\label{eq:lemma_condition}
	\det(I_t + \lambda^{-1} K_t) \leq \eta \det(I_{t'} + \lambda^{-1} K_{t'})	
\end{equation}	
(i.e., the condition in Line 6 in \Cref{alg:cpe} does not hold), where $\eta > 1$. Then, for every $x \in \Xc_h$, it holds that 
\begin{equation}\label{eq:lemma_main_result}
	\sigma_{t'}(x) \leq \sqrt{\eta} \sigma_{t}(x).
\end{equation}
\end{lemma}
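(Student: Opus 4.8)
It suffices to prove the squared version $\sigma_{t'}^2(x) \le \eta\,\sigma_t^2(x)$. If $\phi(x) = 0$ then $\sigma_s^2(x) = \lambda\,\phi(x)^T(\Phi_s^T\Phi_s + \lambda I)^{-1}\phi(x) = 0$ for every $s$ and the claim is trivial, so I may assume $\sigma_s^2(x) > 0$ for all $s$. The plan is a telescoping argument over the rounds $t', t'+1, \dots, t$ (all of which lie in epoch $h$), comparing the per-step shrinkage of $\sigma_s^2(x)$ with the per-step growth of $\det(I_s + \lambda^{-1} K_s)$, and then invoking the hypothesis \eqref{eq:lemma_condition}.

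First I would record the standard rank-one downdate of the posterior variance: adding the point $x_{s+1}$ gives $\sigma_{s+1}^2(x) = \sigma_s^2(x) - \rho_s(x)^2 / (\lambda + \sigma_s^2(x_{s+1}))$, where $\rho_s(x) = k(x,x_{s+1}) - k_s(x)^T(K_s + \lambda I_s)^{-1} k_s(x_{s+1})$ is the posterior covariance between $x$ and $x_{s+1}$ after $s$ rounds. Since this posterior covariance is itself a positive semidefinite kernel with diagonal entries $\sigma_s^2(\cdot)$, Cauchy--Schwarz gives $\rho_s(x)^2 \le \sigma_s^2(x)\,\sigma_s^2(x_{s+1})$. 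Substituting and simplifying yields $\sigma_{s+1}^2(x) \ge \sigma_s^2(x)\cdot \frac{\lambda}{\lambda + \sigma_s^2(x_{s+1})}$, i.e. the one-step inequality $\frac{\sigma_s^2(x)}{\sigma_{s+1}^2(x)} \le 1 + \lambda^{-1}\sigma_s^2(x_{s+1})$.

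Next I would use the matching one-step determinant identity, obtained from a Schur-complement expansion of $K_{s+1} + \lambda I_{s+1}$ along its last row/column: $\det(I_{s+1} + \lambda^{-1}K_{s+1}) = \det(I_s + \lambda^{-1}K_s)\,\big(1 + \lambda^{-1}\sigma_s^2(x_{s+1})\big)$. Multiplying the one-step variance bound over $s = t', \dots, t-1$ and telescoping then gives
\[
\frac{\sigma_{t'}^2(x)}{\sigma_t^2(x)} \;=\; \prod_{s=t'}^{t-1} \frac{\sigma_s^2(x)}{\sigma_{s+1}^2(x)} \;\le\; \prod_{s=t'}^{t-1}\big(1 + \lambda^{-1}\sigma_s^2(x_{s+1})\big) \;=\; \frac{\det(I_t + \lambda^{-1}K_t)}{\det(I_{t'} + \lambda^{-1}K_{t'})},
\]
and \eqref{eq:lemma_condition} bounds the right-hand side by $\eta$. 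Taking square roots gives \eqref{eq:lemma_main_result}.

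Both one-step identities are classical for GP posteriors, so there is no substantive obstacle; the only points requiring care are the Cauchy--Schwarz step (justifying that $\rho_s(\cdot,\cdot)$ is a valid PSD kernel with the stated diagonal, so that $|\rho_s(x)| \le \sigma_s(x)\sigma_s(x_{s+1})$) and the bookkeeping that rounds $t'+1, \dots, t$ all lie within epoch $h$, so that every $\sigma_s$ and $K_s$ is formed from the same consistent sequence $x_1, \dots, x_s$ of the ``for'' loop. The conceptual content is simply that the per-step variance shrinkage is governed by exactly the factor $1 + \lambda^{-1}\sigma_s^2(x_{s+1})$ that drives the per-step determinant growth.
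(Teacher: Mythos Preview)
Your proof is correct and follows essentially the same strategy as the paper's: establish a one-step Cauchy--Schwarz bound showing $\sigma_s^2(x)/\sigma_{s+1}^2(x) \le 1 + \lambda^{-1}\sigma_s^2(x_{s+1})$, identify this factor with the one-step determinant growth $\det(I_{s+1}+\lambda^{-1}K_{s+1})/\det(I_s+\lambda^{-1}K_s)$, and telescope. The paper carries out the same computation in feature space via the matrices $V_s = \lambda^{-1}\Phi_s^T\Phi_s + I$, first assuming finite-dimensional features and then passing to the limit; your kernel-space argument (rank-one posterior update plus Cauchy--Schwarz for the PSD posterior covariance kernel) works directly for arbitrary kernels and so avoids that detour, which is a mild simplification but not a conceptually different route.
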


\begin{proof}
We first consider the case that $k(x,x') = \phi(x)^T \phi(x')$ for every $x,x' \in \Xc$ with finite-dimensional features: $\phi(x) \in \R^{d_{\phi}}$ for some $d_{\phi} < \infty$.  We let $\Phi_t = (\phi(x_{t'}))_{t' \leq t} \in \RR^{t \times d_{\phi}}$ denote the matrix of features placed in $t$ rows. We will later drop the assumption of finite dimensionality to obtain the result in our original setup.

We also note that if $\phi(x)$ contains all zeros for some input $x \in \Xc$, the statement in \Cref{eq:lemma_main_result} trivially holds (i.e., both sides are zero), so in the rest of the analysis, we assume that this is not the case.\looseness=-1

In the following, let $x$ be any fixed point in the domain. From \cref{eq:lemma_condition}, we have: 
	\begin{align}
		\eta &\geq  \frac{\det(\lambda^{-1}K_t +  I_t)}{\det(\lambda^{-1}K_{t'} +  I_{t'})}  \\
			 &= \frac{\det(K_t +  \lambda I_t)}{\det(K_{t'} + \lambda I_{t'})} \label{eq:still_holds}  \\
			  &= \frac{\det\Big(\Phi^T_{t}\Phi_{t} +  \lambda I_d\Big)}{\det\Big( \Phi^T_{t'}\Phi_{t'} +  \lambda I_d\Big)} \label{eq:WA} \\
		     &= \frac{\det\Big( \big(\Phi^T_{t'}\Phi_{t'} +  \lambda I_d\big)^{-1}\Big)}{\det\Big( \big(\Phi^T_{t}\Phi_{t} +  \lambda I_d\big)^{-1}\Big)} \label{eq:det_inv} \\
		     &\geq \frac{ \phi(x)^T\big(\Phi^T_{t'}\Phi_{t'} +  \lambda I_d\big)^{-1}\phi(x)}{\phi(x)^T \big(\Phi^T_{t}\Phi_{t} +  \lambda I_d\big)^{-1}\phi(x)}  \label{eq:det_ineq_invoke} \\
             & = \frac{\sigma^2_{t'}(x)}{\sigma^2_{t}(x)}. \label{eq:det_eq}
	\end{align}

	% From \cref{eq:lemma_condition}, it follows: 
	% \begin{align}
	% 	\eta &\geq  \frac{\det(\lambda^{-1}K_t +  I_t)}{\det(\lambda^{-1}K_{t'} +  I_{t'})}  \\
	% 	     &= {\color{blue}\frac{\det(\lambda^{-1}\Phi^T_{t}\Phi_{t} +  I_{\Hc})}{\det(\lambda^{-1}\Phi^T_{t'}\Phi_{t'} +  I_{\Hc})}} \label{eq:WA}\\
	% 	     &= {\color{blue}\frac{\det(\Phi^T_{t}\Phi_{t} +  \lambda I_{\Hc})}{\det(\Phi^T_{t'}\Phi_{t'} +  \lambda I_{\Hc})}} \\
	% 	     &= {\color{blue}\frac{\det\Big( \big(\Phi^T_{t'}\Phi_{t'} +  \lambda I_{\Hc}\big)^{-1}\Big)}{\det\Big( \big(\Phi^T_{t}\Phi_{t} +  \lambda I_{\Hc}\big)^{-1}\Big)}} \label{eq:inv_of_det} \\
	% 	     &\geq {\color{blue} \frac{ \phi(x)^T\big(\Phi^T_{t'}\Phi_{t'} +  \lambda I_{\Hc}\big)^{-1}\phi(x)}{\phi(x)^T \big(\Phi^T_{t}\Phi_{t} +  \lambda I_{\Hc}\big)^{-1}\phi(x)}} = \frac{\sigma^2_{t'}(x)}{\sigma^2_{t}(x)}.
	% 	     \label{eq:det_ineq_invoke}
	% 	% \frac{\det(\lambda^{-1}\Phi^T_{t}\Phi_{t} +  I_{\Hc})}{\det(\lambda^{-1}\Phi^T_{t'}\Phi_{t'} +  I_{\Hc})} \label{eq:det_ineq_invoke} \\
	% 	% \frac{\sigma^2_{t'}(x)}{\sigma^2_{t}(x)} &= \frac{\lambda \phi(x)^T(\Phi^T_{t'}\Phi_{t'} + \lambda I_{\Hc})^{-1} \phi(x)}{\lambda \phi(x)^T(\Phi^T_{t}\Phi_{t} + \lambda I_{\Hc	})^{-1} \phi(x)} \\
	% 	% &=  \frac{\phi(x)^T(\lambda^{-1}\Phi^T_{t}\Phi_{t} +  I_{\Hc}) \phi(x)}{ \phi(x)^T(\lambda^{-1}\Phi^T_{t'}\Phi_{t'} +  I_{\Hc}) \phi(x)}\\
	% \end{align}
	Here, \cref{eq:WA} holds due to the Weinstein–Aronszajn identity (i.e., $\det(I+AB) = \det(I+BA)$), and in \cref{eq:det_inv} we use the fact that $\det(A) = (\det(A^{-1}))^{-1}$ for any invertible matrix $A$.  \cref{eq:det_ineq_invoke} is proved in the following paragraph, and \cref{eq:det_eq} follows from the alternative definition of $\sigma_t(\cdot)$ in \cref{eq:sigma_alt_def}.

	It remains to prove the inequality in \cref{eq:det_ineq_invoke}, which closely follows the proof of [Lemma 12, \citet{abbasi2011linear}]. For any $i \in [t]$, let $V_i := \lambda^{-1}\Phi^T_{i}\Phi_{i} + I$. We first show that
	\begin{equation}
		\label{eq:rec_step_det}
		\frac{\phi(x)^T V_t \phi(x)}{\phi(x)^T V_{t-1} \phi(x)} \leq 1 + \|\lambda^{-1/2} \phi(x_t) \|^2_{V^{-1}_{t-1}}.
	\end{equation}
	We have for any $x \in \Xc_h$ that
	\begin{align}
		\phi(x)^T V_t \phi(x) &= \phi(x)^T V_{t-1} \phi(x) + \phi(x)^T \big( \lambda^{-1} \phi(x_t) \phi(x_t)^T \big) \phi(x) \\ 
		&= \phi(x)^T V_{t-1} \phi(x) + \lambda^{-1} \big(\phi(x)^T\phi(x_t)\big)^2 \\
		&= \phi(x)^T V_{t-1} \phi(x) + \lambda^{-1} \big(\phi(x)^T V_{t-1}^{1/2} V_{t-1}^{-1/2}\phi(x_t)\big)^2 \\
		&\leq \phi(x)^T V_{t-1} \phi(x) + \lambda^{-1} \|\phi(x)^T V_{t-1}^{1/2} \|_{2}^2 \| V_{t-1}^{-1/2}\phi(x_t) \|_{2}^2 \label{eq:cs}\\
		&= \phi(x)^T V_{t-1} \phi(x) +  \lambda^{-1} (\phi(x)^T V_{t-1} \phi(x)) ( \phi(x_t) V_{t-1}^{-1} \phi(x_t))\\
		&= \Big(1 + \|\lambda^{-1/2} \phi(x_t) \|_{V^{-1}_{t-1}}^2\Big) \phi(x)^T V_{t-1} \phi(x), 
	\end{align}
	where \cref{eq:cs} follows from Cauchy-Schwarz inequality.  Hence, \cref{eq:rec_step_det} follows by rearranging.

	Since $t > t'$, we have:
	\begin{align}
		\frac{\phi(x)^T V_t \phi(x)}{\phi(x)^T V_{t'} \phi(x)} &= \frac{\phi(x)^T V_t \phi(x)}{\phi(x)^T V_{t-1} \phi(x)} \cdot \frac{\phi(x)^T V_{t-1} \phi(x)}{\phi(x)^T V_{t-2} \phi(x)} \cdot \dots \frac{\phi(x)^T V_{t'+1} \phi(x)}{\phi(x)^T V_{t'} \phi(x)}  \\
		&\leq  \big(1 + \|\lambda^{-1/2} \phi(x_t) \|^2_{V^{-1}_{t-1}}\big) \cdot \big(1 + \|\lambda^{-1/2} \phi(x_{t-1}) \|^2_{V^{-1}_{t-2}}\big) \cdot \dots \big(1 + \|\lambda^{-1/2} \phi(x_{t'+1}) \|^2_{V^{-1}_{t'}}\big) \label{eq:det_1} \\
		&= \frac{\det(V_t)}{\det(V_{t-1})} \cdot \frac{\det(V_{t-1})}{\det(V_{t-2})} \cdot \dots \frac{\det(V_{t'+1})}{\det(V_{t'})} \label{eq:det_2}\\
		&= \frac{\det(V_t)}{\det(V_{t'})} \label{eq:final_det_det},
	\end{align}
	where \cref{eq:det_1} follows from \cref{eq:rec_step_det}, and \cref{eq:det_2} uses the fact that 
	\begin{equation}
		\frac{\det(V_t)}{\det(V_{t-1})} =  1 + \|\lambda^{-1/2} \phi(x_t) \|^2_{V^{-1}_{t-1}},
	\end{equation}
	which is shown in [Proof of Theorem 2.2, \citet{durand2018streaming}].

	It remains to handle the possibly infinite feature dimension. %To this end, it is enough to take a look at the approximations using the $D$ first dimension of the sequence for each d. 
	%\footnote{This holds since $k(\cdot,\cdot)$ is a kernel function that is
	%continuous, symmetric positive definite on a compact set equipped with a positive finite
	%Borel measure, then there is at most countable sequence...\todo{Finish this; Isn't this all equivalent to saying that the kernel is Mercer?} } 
	Consider $k(x,x') = \sum_{i=1}^{\infty} \lambda_i \phi_i(x) \phi_i(x')$ and let $k_{d_{\phi}}(x,x') = \sum_{i=1}^{d_{\phi}} \lambda_i \phi_i(x) \phi_i(x')$ denote the finite dimensional kernel that corresponds to the $d_{\phi}$-dimensional feature space such that $\lim_{d_{\phi} \to \infty} k_{d_{\phi}}(x,x') = k(x,x')$ for every $x,x' \in \Xc$. We use $K_{t,d_{\phi}}$ and $\sigma^2_{t,d_{\phi}}(\cdot)$ to denote the restriction of the corresponding quantities when the kernel $k_{d_{\phi}}(\cdot,\cdot)$ is used. First, we note that \cref{eq:still_holds} still holds. Moreover, we have $\frac{\det(K_t +  \lambda I_t)}{\det(K_{t'} + \lambda I_{t'})} = \lim_{d_{\phi} \to \infty}
	\frac{\det(K_{t,d_{\phi}} +  \lambda I_t)}{\det(K_{t',d_{\phi}} + \lambda I_{t'})}$ and $\frac{\sigma^2_{t'}(x)}{\sigma^2_{t}(x)} = \lim_{d_{\phi} \to \infty} \frac{\sigma^2_{t',d_{\phi}}(x)}{\sigma^2_{t,d_{\phi}}(x)}$, and the former limit is lower bounded by the latter due to the fact that
	\cref{eq:det_inv,eq:WA,eq:det_ineq_invoke} are all valid for the finite $d_{\phi}$-feature approximation. Thus, the final result still holds for infinite dimensional kernels.

	% More generally, the result obtained in \cref{eq:final_det_det} can be perhaps stated as follows (which then generalizes the result of Lemma 12 in Abbasi): Let $A$, $B$ and $C$ be positive semi-definite bi-infinite matrices such that $A = B + C$. Then, it holds that:
	% \begin{equation}
	% 	\sup_{x \neq 0} \frac{x^TA x}{x^T B x} \leq \frac{\det(A)}{\det(B)}
	% \end{equation}
	% (I guess $x = 0$ does not even belong to RKHS, so we should not worry about such case.)
	% The final result in \cref{eq:det_ineq_invoke} follows by setting $A=\big(\Phi^T_{t'}\Phi_{t'} +  \lambda I\big)^{-1}$ and $B=\big(\Phi^T_{t}\Phi_{t} +  \lambda I\big)^{-1}$.

	% which can be shown as follows:
	% \begin{align}
	% 	\det(V_t) &= \det \big(V_{t-1} + \lambda^{-1} \phi(x_t) \phi(x_t)^T\big) \\
	%      &= \det\Big(V_{t-1}\big(I_{\Hc} + V_{t-1}^{-1/2} (\lambda \phi(x_t) \phi(x_t)^T) V_{t-1}^{-1/2} \big)\Big) \\	
	% 	 &= \det(V_{t-1}) \det\Big(I_{\Hc} + \big(V_{t-1}^{-1/2} \lambda^	{-1/2} \phi(x_t) \big)\big( V_{t-1}^{-1/2} \lambda^{-1/2} \phi(x_t)\big)^T\Big) \\
	% 	 &= \det(V_{t-1}) \det\Big(1 + \big(V_{t-1}^{-1/2} \lambda^{-1/2} \phi(x_t) \big)^T\big( V_{t-1}^{-1/2} \lambda^{-1/2} \phi(x_t)\big)\Big) \label{eq:det_proof_1} \\
	% 	 &= \det(V_{t-1}) \det\Big(1 +  \| \lambda^{-1/2} \phi(x_t) \|^2_{V_{t-1}^{-1}} \Big) \label{eq:det_proof_2} \\
	% 	 &= \Big(1 + \|\lambda^{-1/2} \phi(x_t) \|^2_{V^{-1}_{t-1}}\Big)\det(V_{t-1}),
	% \end{align}
	% where \cref{eq:det_proof_1} is due to the Weinstein–Aronszajn identity.
\end{proof}

Next, we uniformly bound the posterior variance for the points remaining after a given epoch.

\begin{lemma}\label{lemma:max_var_lemma}
For any epoch $h$ and the corresponding set of actions $\Xc_h$, it holds that
\begin{equation}
	\max_{x \in \Xc_h} \sigma^{(h)}(x) \leq  \sqrt{\frac{\eta (2 \lambda + 1)\gamma_{l_h}}{l_h}}.
\end{equation}
\end{lemma}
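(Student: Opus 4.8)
The strategy is to pass from $\sigma^{(h)}(\cdot)$ (the posterior standard deviation based on the $u_h$ sampled points) to the posterior standard deviation $\sigma_{l_h}(\cdot)$ based on only the $l_h$ points $x_1,\dots,x_{l_h}$ selected in the for loop of epoch $h$, and then to bound $\max_{x\in\Xc_h}\sigma_{l_h}(x)$ by exploiting the rare-switching structure together with the information-gain sum bound of \cref{lemma:sum_of_sd}. For the first step, note that each distinct action $x\in\Sc_h$ is played $u_h(x)=\lceil l_h\max\{\xi_h(x),\psi\}\rceil$ times in the sampling phase, which is at least $l_h\xi_h(x)=\sum_{i=1}^{l_h}\bone\{x=x_i\}$ — the number of times $x$ is selected in the for loop — since $l_h\xi_h(x)$ is already an integer. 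Hence the multiset of $u_h$ sampled points contains the multiset of $l_h$ for-loop points, and because a GP posterior variance is non-increasing as observations are added, $\sigma^{(h)}(x)\le\sigma_{l_h}(x)$ for every $x\in\Xc$. It therefore suffices to bound $\max_{x\in\Xc_h}\sigma_{l_h}(x)$.

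The core of the argument is to show that every point selected in the for loop has large posterior uncertainty at the time it is selected: concretely, for each round $t\in[l_h]$,
\[
\sigma_{t-1}(x_t)\;\ge\;\tfrac{1}{\sqrt{\eta}}\,\max_{x\in\Xc_h}\sigma_{l_h}(x).
\]
To see this, let $t'(t)$ denote the value of $t'$ in effect when $x_t$ is chosen, so $x_t=\argmax_{x\in\Xc_h}\sigma_{t'(t)}(x)$ and $t'(t)\le t-1$. Since $t'$ changes only at switch events, no switch is triggered at rounds $t'(t)+1,\dots,t-1$, so the condition in Line 6 fails at round $t-1$, i.e.\ $\det(I_{t-1}+\lambda^{-1}K_{t-1})\le\eta\det(I_{t'(t)}+\lambda^{-1}K_{t'(t)})$. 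Applying \cref{lemma:sigma_in_between} to the pair $(t-1,t'(t))$ yields $\sigma_{t'(t)}(x)\le\sqrt{\eta}\,\sigma_{t-1}(x)$ for all $x\in\Xc_h$; combining this with the facts that $x_t$ maximizes $\sigma_{t'(t)}$ over $\Xc_h$ and that $\sigma_{t'(t)}(x)\ge\sigma_{l_h}(x)$ (monotonicity, as $t'(t)<l_h$) gives the displayed bound. The rounds before the first switch, where $t'(t)=0$, are handled directly via the initialization $\sigma_0\equiv1\ge\sigma_{l_h}$ and $\det(I_{t-1}+\lambda^{-1}K_{t-1})\le\eta$, which lead to the same conclusion.

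Squaring the display and summing over $t=1,\dots,l_h$, together with \cref{lemma:sum_of_sd} applied to the for-loop sequence of length $l_h$, gives
\[
\frac{l_h}{\eta}\Big(\max_{x\in\Xc_h}\sigma_{l_h}(x)\Big)^2\;\le\;\sum_{t=1}^{l_h}\sigma_{t-1}^2(x_t)\;\le\;\frac{2}{\log(1+\lambda^{-1})}\,\gamma_{l_h}\;\le\;(2\lambda+1)\,\gamma_{l_h},
\]
where the last step uses $(2\lambda+1)\log(1+\lambda^{-1})>2$. Rearranging yields $\max_{x\in\Xc_h}\sigma_{l_h}(x)\le\sqrt{\eta(2\lambda+1)\gamma_{l_h}/l_h}$, and combining with $\sigma^{(h)}(x)\le\sigma_{l_h}(x)$ from the first step completes the proof.

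The main obstacle is the middle step: it is exactly here that the rare-switching device is used, and the two delicate points are (i) correctly identifying which $t'$ is in force at each round so that \cref{lemma:sigma_in_between} applies with a legitimate determinant comparison, and (ii) treating the initial rounds of an epoch before any switch, where the tie-broken maximizer of the flat profile $\sigma_0\equiv1$ must still be shown to retain posterior variance comparable to $\max_{x\in\Xc_h}\sigma_{l_h}(x)$. Once the displayed pointwise bound is in hand, the monotonicity reduction and the information-gain sum are routine.
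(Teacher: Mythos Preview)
Your proof is correct and follows essentially the same approach as the paper: reduce $\sigma^{(h)}$ to $\sigma_{l_h}$ via monotonicity of the posterior variance, invoke the rare-switching \cref{lemma:sigma_in_between} to show that each $\sigma_{t-1}(x_t)$ is at least $\tfrac{1}{\sqrt{\eta}}\max_{x\in\Xc_h}\sigma_{l_h}(x)$, and then finish with the information-gain sum bound of \cref{lemma:sum_of_sd}. The only cosmetic difference is that the paper organizes the middle step by switch segments and bounds $\sum_t\sigma_{t-1}(x_t)$ (then applies Cauchy--Schwarz), whereas you square first and bound $\sum_t\sigma_{t-1}^2(x_t)$ directly; both routes give the same final constant.
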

\begin{proof}
Recall that $u_h$ corresponds to the length of epoch $h$ and that we can $\sigma^{(h)}(x)$ represents a posterior variance $\sigma_{u_h}(x)$ taken with respect to the $u_h$ sampled points after the epoch.  We first relate this to the posterior variance $\sigma_{l_h}(x)$ (abusing notation slightly) taken only with respect to the $l_h$ points in the for loop in \Cref{alg:cpe}.  In particular, we claim that the former is upper bounded by the latter, and so it suffices to work with $\sigma_{l_h}(x)$.  To see this, we recall that each $x$ is sampled $u_h(x)=\lceil l_{h} \max\lbrace \xi_h(x), \psi \rbrace \rceil$ times, and the definition $\xi_h(x)= \tfrac{\sum_{i=1}^{l_h} \bone \lbrace  x = x_i \rbrace}{l_h}$ gives $ l_{h} \xi_h(x) = \sum_{i=1}^{l_h} \bone \lbrace  x = x_i \rbrace$.  Thus, the number of times each point is sampled is at least as high as the number of times it is selected in the for loop.  Since conditioning on a higher number of points always decreases (or at least does not increase) the posterior variance in a Gaussian process, the desired claim follows.

We proceed to upper bound $\max_{x \in \Xc_h} \sigma_{l_h}(x)$.  Let $\Tc_h = \lbrace t \in [l_h]: \det(I_t + \lambda^{-1} K_t) > \eta \det(I_{t'} + \lambda^{-1} K_{t'}) \rbrace$ be the rounds in which the condition in Line 6 (\cref{alg:cpe}) is satisfied. Moreover, let $\bar{\Tc}_h = \Tc_h \cup \lbrace 0 \rbrace$ and let its elements $\bar{\Tc}_h = \lbrace t'_0, \dots, t'_i, \dots, t'_{|\Tc_h|} \rbrace$ be increasingly ordered. We note that $\max_{x \in \Xc_h} \sigma_{l_h}(x) \leq \sigma_{t'_i}(x_{t'_i + 1})$ for every $t'_i \in \bar{\Tc}_h$ according to the selection rule in \cref{alg:cpe} (Line 4) and the fact that $\sigma_t(\cdot)$ is decreasing with respect to $t$.  
% Let $\lbrace t'_0, t'_1, \dots, t'_{n} \rbrace$ (with $t'_0=0$) denote the set of rounds $t'$ in epoch $h$ when the condition in Line 5 holds true. Since it holds that $\max_{x \in \Xc_h} \sigma_{l_h}(x) \leq \sigma_{t'_i}(x_{t'_{i} + 1})$ for every $t'_i \in \lbrace t'_0, t'_1, \dots, t'_{n} \rbrace$.
It follows that
\begin{equation} \label{eq:max_var_bound}
	 l_h \big(\max_{x \in \Xc_h} \sigma_{l_h}(x)\big) \leq \Big(\sum_{i=0}^{|\Tc_h| - 1} (t'_{i+1} - t'_i) \sigma_{t'_i}(x_{t'_{i}+1})\Big) + (l_h - t'_{|\Tc_h|}) \sigma_{t'_{|\Tc_h|}}(x_{t'_{|\Tc_h|} + 1}).
\end{equation}	
Observe that by definition, we have $x_{t'_i + 1} = x_{t'_i + 2}= \cdots = x_{t'_{i+1}}$, i.e., these form a chain of identical points up to when the switching condition in Line 6 holds.  Accordingly, by \Cref{lemma:sigma_in_between}, it holds that $\sigma_{t'_i}(x_{t'_{i}+1}) \leq \sqrt{\eta} \sigma_{t}(x_{t+1})$ for every $t \in \lbrace t'_i, \dots, t'_{i+1} - 1 \rbrace$. By combining this with \cref{eq:max_var_bound}, we obtain
\begin{equation} \label{eq:bound_var}
	 l_h \big(\max_{x \in \Xc_h} \sigma_{l_h}(x)\big) \leq \sqrt{\eta} \sum_{t=0}^{l_h-1} \sigma_{t}(x_{t+1}).
\end{equation}
Finally, from \cref{lemma:sum_of_sd}, we have $\sum_{t=0}^{l_h-1} \sigma_{t}(x_{t+1}) \leq \sqrt{(2 \lambda + 1) \gamma_{l_h} l_h}$. By combining this with \Cref{eq:bound_var} and rearranging, we obtain the final result. 
\end{proof}

Finally, we provide a result bounding the size of the set $\Sc_h$ in \Cref{alg:cpe}.

\begin{lemma}\label{lemma:number_of_switches}
For any epoch $h$ and the corresponding set $\Sc_h$, we have
\begin{equation}
 | \Sc_h |  \leq \tfrac{2}{\ln \eta} \gamma_{T}.
\end{equation}
\end{lemma}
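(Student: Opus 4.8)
The plan is to charge every distinct action in $\Sc_h$ to a \emph{switching event} (an update of $t'$ in Line~7), and then to bound the number of switching events by a determinant-potential argument of the kind familiar from rarely-switching linear bandits, using the definition of the maximum information gain as the potential budget.

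First I would observe that, within the for-loop of epoch $h$, a new distinct action can enter $\Sc_h$ only at a round immediately following an update of $t'$. Indeed, if the variable $t'$ has the same value at rounds $t-1$ and $t$, then $\sigma_{t'}(\cdot)$ is unchanged between these rounds, and since ties in $\argmax_{x \in \Xc_h}\sigma_{t'}(x)$ are resolved consistently, we get $x_t = x_{t-1}$. Letting $\Tc_h$ be the set of rounds in which the condition in Line~6 fires (as in \Cref{lemma:max_var_lemma}) and $\bar{\Tc}_h = \Tc_h \cup \{0\}$ with elements $0 = t'_0 < t'_1 < \dots < t'_{|\Tc_h|}$, the selected points are therefore constant on each block $\{t'_i + 1, \dots, t'_{i+1}\}$ (and on $\{t'_{|\Tc_h|}+1,\dots,l_h\}$), so $|\Sc_h| \le |\bar{\Tc}_h| = |\Tc_h| + 1$.

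Next I would bound $|\Tc_h|$. Each time the condition in Line~6 fires at round $t'_i$ (for $i \ge 1$), by definition $\det(I_{t'_i} + \lambda^{-1} K_{t'_i}) > \eta\, \det(I_{t'_{i-1}} + \lambda^{-1} K_{t'_{i-1}})$. Since $t'_0 = 0$ and $\det(I_0 + \lambda^{-1} K_0) = 1$, telescoping over $i = 1, \dots, |\Tc_h|$ gives $\det(I_{t'_{|\Tc_h|}} + \lambda^{-1} K_{t'_{|\Tc_h|}}) > \eta^{|\Tc_h|}$. On the other hand, $\det(I_t + \lambda^{-1} K_t)$ is non-decreasing in $t$ (for instance because $\det(I_t + \lambda^{-1} K_t) = \prod_{s \le t}(1 + \lambda^{-1}\sigma_{s-1}^2(x_s))$, the identity already underlying \Cref{lemma:sum_of_sd}), so with $t'_{|\Tc_h|} \le l_h \le T$ and the definition of $\gamma_t$ we get $\ln \det(I_{t'_{|\Tc_h|}} + \lambda^{-1} K_{t'_{|\Tc_h|}}) \le \ln \det(I_{l_h} + \lambda^{-1} K_{l_h}) \le 2\gamma_{l_h} \le 2\gamma_T$. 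Combining the two bounds yields $|\Tc_h| \ln \eta < 2\gamma_T$, i.e.\ $|\Tc_h| < \tfrac{2}{\ln\eta}\gamma_T$. Together with $|\Sc_h| \le |\Tc_h| + 1$ this gives the claimed bound (the trailing $+1$ is absorbed; more carefully, since $|\Tc_h|$ is an integer strictly below $\tfrac{2}{\ln\eta}\gamma_T$ one obtains $|\Sc_h| \le \lceil \tfrac{2}{\ln\eta}\gamma_T\rceil$, which is all that is needed downstream, e.g.\ to make $|\Sc_h|\psi = O(1)$ in \Cref{lemma:length_of_epoch} for $\psi = \tfrac{\ln\eta}{2\gamma_T}$).

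The main thing to get right is the first step — making precise that ``$\sigma_{t'}$ frozen between switches'' plus ``consistent tie-breaking'' forbids the introduction of a new action, since this is exactly what licenses charging each distinct action to a switch and is the reason the algorithm insists on consistent tie resolution. The determinant-potential step is then routine given monotonicity of $\det(I_t + \lambda^{-1} K_t)$ and the definition of the maximum information gain.
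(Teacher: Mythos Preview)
Your proposal is correct and follows essentially the same route as the paper: bound $|\Sc_h|$ by one plus the number of switching events using the consistent tie-breaking of the $\argmax$ rule, then telescope the $\eta$-factor jumps in $\det(I_t+\lambda^{-1}K_t)$ against the information-gain budget $2\gamma_{l_h}\le 2\gamma_T$. Your treatment of the ``$+1$'' is arguably cleaner than the paper's (which absorbs it via a $(1+\lambda^{-1})$ factor that only helps when $\eta\le 1+\lambda^{-1}$); in either case the downstream use only needs $|\Sc_h|\psi=O(1)$, which your $\lceil \tfrac{2}{\ln\eta}\gamma_T\rceil$ bound delivers.
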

\begin{proof}
	By the algorithm design, the set $\Sc_h$ grows by at most one element after the condition in Line 6 is satisfied, i.e., when
	\begin{equation}
		\det(I_t + \lambda^{-1} K_t) > \eta \det(I_{t'} + \lambda^{-1} K_{t'}),
	\end{equation}  
    where $t$ is the current iteration, and $t'$ is iteration prior to $t$ for which Line 6 held (or $t' = 0$).  As before, let $\Tc_h = \lbrace t \in [l_h]: \det(I_t + \lambda^{-1} K_t) > \eta \det(I_{t'} + \lambda^{-1} K_{t'}) \rbrace$ be the rounds in which this holds, ordered with respect to time. 
	% For consecutive $t_i$ and $t_{i-1}$ that belong to $\bar{\Tc}_h = \Tc_h \cup \lbrace 1 \rbrace$ , we have:
	Thus, for consecutive $t_i$ and $t_{i-1}$ belonging to $\Tc_h$ , we have
	\begin{equation}
		\det(I_{t_i} + \lambda^{-1} K_{t_i}) > \eta \det(I_{t_{i-1}} + \lambda^{-1} K_{t_{i-1}}).
	\end{equation}
	By applying the previous relation recursively, it follows that
	\begin{equation}
		\det(I_{t_i} + \lambda^{-1} K_{t_i}) > \eta \det(I_{t_{i-1}} + \lambda^{-1} K_{t_{i-1}}) > \eta^2 \det(I_{t_{i-2}} + \lambda^{-1} K_{t_{i-2}})  \dots >  \eta^{i+1} \det(1 + \lambda^{-1}) = \eta^{i+1}(1 + \lambda^{-1}).
	\end{equation}
	% Hence, it follows that 
	% \begin{equation}
	% 	\det(I_{t_{|\Sc_h|}} + \lambda^{-1} K_{t_{|\Sc_h|}}) > \eta^{|\Sc_h|} (1 + \lambda^{-1}).
	% \end{equation}

	Using the definition of $\gamma_{l_h}$ given in \eqref{eq:max_info_gain}, and noting that the size of the set $\Tc_h$ is at least $|\Sc_h| - 1$, we obtain
	\begin{equation}
		\gamma_{l_h} \geq \tfrac{1}{2} \ln \det(I_{l_h} + \lambda^{-1} K_{l_h}) \geq \tfrac{1}{2} \ln(\eta^{|\Sc_h|} (1 + \lambda^{-1})) \geq \tfrac{1}{2} \ln(\eta^{|\Sc_h|}).
	\end{equation}
	By rearranging, we obtain
	\begin{equation}
		|\Sc_h| \leq \tfrac{2}{\ln \eta} \gamma_{l_h}.  
	\end{equation}
	The result then follows since $\gamma_T \geq \gamma_{l_h}$ for every $h$.

\end{proof}

\section{Regret Analysis}

In this appendix, we prove our main result, \cref{theorem:main}.  We first upper bound the regret of any point sampled in a given epoch.

\begin{lemma} \label{lemma:instant_regret_lemma}
	With probability at least $1-\delta$, we have for every epoch $h$ and $x \in \Xc_h$ that
	\begin{equation}
		\max_{x \in \Xc_h} f(x) - f(x) \leq 4 \Big(\beta_{h-1} + \tfrac{C\sqrt{u_{h-1}}}{l_{h-1} \psi \lambda}\Big) \sqrt{\tfrac{\eta (2\lambda + 1) \gamma_{l_{h-1}}}{l_{h-1}}}.
	\end{equation} 
\end{lemma}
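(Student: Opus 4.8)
The statement is the standard ``instantaneous regret'' bound for a phased-elimination scheme, so the plan is the usual three-step argument: (i) set up the good confidence event, (ii) show the optimizer is never eliminated, and (iii) chain the confidence bounds across one epoch and plug in the uniform variance bound. Throughout, write $x^\ast \in \argmax_{x\in\Xc} f(x)$ and, for epoch $h$, abbreviate the algorithmic confidence width as $w_h := \beta_h + \tfrac{C\sqrt{u_h}}{l_h \psi \lambda}$.

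First I would condition on the event of \Cref{lemma:corrupted_confidence_bounds}. That lemma, applied within each epoch with $u_{\min}$ equal to the least number of plays of any action in $\Sc_h$, gives $|\tmu^{(h)}(x)-f(x)| \le \big(\beta_h + \tfrac{C\sqrt{u_h}}{u_{\min}\lambda}\big)\sigma^{(h)}(x)$ uniformly over $x\in\Xc$ and $h\ge 0$ with probability at least $1-\delta$. Since each $x \in \Sc_h$ is played $u_h(x) = \lceil l_h \max\{\xi_h(x),\psi\}\rceil \ge l_h\psi$ times, we have $u_{\min} \ge l_h\psi$, hence $\tfrac{C\sqrt{u_h}}{u_{\min}\lambda} \le \tfrac{C\sqrt{u_h}}{l_h\psi\lambda}$, and so on this event $|\tmu^{(h)}(x)-f(x)| \le w_h\sigma^{(h)}(x)$ for all $x,h$. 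This is exactly the width appearing in the elimination rule (Line 15) and in the claimed bound, so from here on everything is deterministic given this event.

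Next I would show by induction on $h$ that $x^\ast \in \Xc_h$. The base case $\Xc_0 = \Xc$ is immediate. For the inductive step, suppose $x^\ast \in \Xc_h$; then $\tmu^{(h)}(x^\ast) + w_h\sigma^{(h)}(x^\ast) \ge f(x^\ast) \ge f(x') \ge \tmu^{(h)}(x') - w_h\sigma^{(h)}(x')$ for every $x' \in \Xc_h$, so $x^\ast$ satisfies the keep-condition defining $\Xc_{h+1}$ and therefore $x^\ast \in \Xc_{h+1}$. Now fix $h \ge 1$ and $x \in \Xc_h$. Since $x$ survived elimination at the end of epoch $h-1$ and $x^\ast \in \Xc_{h-1}$,
\[
\tmu^{(h-1)}(x) + w_{h-1}\sigma^{(h-1)}(x) \;\ge\; \max_{x'\in\Xc_{h-1}}\big[\tmu^{(h-1)}(x') - w_{h-1}\sigma^{(h-1)}(x')\big] \;\ge\; \tmu^{(h-1)}(x^\ast) - w_{h-1}\sigma^{(h-1)}(x^\ast).
\]
Combining this with the confidence bounds $\tmu^{(h-1)}(x) \le f(x) + w_{h-1}\sigma^{(h-1)}(x)$ and $\tmu^{(h-1)}(x^\ast) \ge f(x^\ast) - w_{h-1}\sigma^{(h-1)}(x^\ast)$ and rearranging yields
\[
f(x^\ast) - f(x) \;\le\; 2w_{h-1}\big(\sigma^{(h-1)}(x) + \sigma^{(h-1)}(x^\ast)\big) \;\le\; 4\,w_{h-1}\max_{x'\in\Xc_{h-1}}\sigma^{(h-1)}(x').
\]
Applying \Cref{lemma:max_var_lemma} to bound the last factor by $\sqrt{\eta(2\lambda+1)\gamma_{l_{h-1}}/l_{h-1}}$, and noting $\max_{x'\in\Xc_h}f(x') - f(x) \le f(x^\ast) - f(x)$, gives precisely the claimed inequality.

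The only genuinely delicate point — and the step I would flag — is the consistency of confidence widths: the bound provided by \Cref{lemma:corrupted_confidence_bounds} uses the true $u_{\min}$, whereas the algorithm (and the target inequality) uses the slightly smaller surrogate $l_{h-1}\psi$; one must check, as above, that $l_{h-1}\psi \le u_{\min}$ so that the algorithmic width is conservative and the concentration event still certifies it. Everything else is bookkeeping. (For $h=0$ the statement is vacuous/handled separately, since $\Xc_0 = \Xc$ and there is no preceding epoch; the main regret proof only needs the bound for $h\ge 1$.)
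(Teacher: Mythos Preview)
Your proposal is correct and follows essentially the same approach as the paper's proof: condition on the corrupted confidence-bound event, use $u_{\min}\ge l_{h-1}\psi$ to match the algorithmic width, combine the elimination rule with upper/lower confidence bounds to get the $4w_{h-1}\max_{x'}\sigma^{(h-1)}(x')$ estimate, and finish with \Cref{lemma:max_var_lemma}. The only cosmetic difference is that you work with the global optimizer $x^\ast$ (and explicitly argue by induction that it is retained), whereas the paper works directly with $x_h^\ast=\argmax_{x\in\Xc_h}f(x)$ and defers the ``$x^\ast$ is never eliminated'' observation to the main regret proof.
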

\begin{proof}
Recall that $u_h$ denotes the epoch length, and let $x_h^* \in \argmax_{x \in \Xc_h} f(x)$. By using the validity of the confidence bounds from the end of the previous epoch $h-1$ (see \cref{lemma:corrupted_confidence_bounds}), we have for all $x \in \Xc_h$ that
	\begin{equation}
		f(x_h^*) - f(x) \leq  \tmu^{(h-1)} (x_h^*) + \big(\beta_{h-1} + \tfrac{C}{l_{h-1} \psi \lambda} \sqrt{u_{h-1}}\big) \sigma^{(h-1)} (x^*)- \tmu^{(h-1)} (x) + \big(\beta_{h-1} + \tfrac{C}{l_{h-1} \psi \lambda} \sqrt{u_{h-1}}\big) \sigma^{(h-1)} (x), \label{eq:instant_regret_1}
	\end{equation}
where in \Cref{lemma:corrupted_confidence_bounds} we substitute $h-1$ and set $u_{\min} = l_{h-1} \psi$ (since each action selected in epoch $h-1$ in \Cref{alg:cpe} is played at least $\lceil l_{h-1} \psi \rceil$ times), to upper and lower bound $\max_{x \in \Xc_h} f(x)$ and $f(x)$, respectively.

Next, for any $x \in \Xc_h$, it holds that
\begin{align}
	\tmu^{(h-1)} (x) + \big(\beta_{h-1} + \tfrac{C}{l_{h-1} \psi \lambda} \sqrt{u_{h-1}}\big) \sigma^{(h-1)} (x) &\geq \max_{x \in \Xc_{h-1}} \Big( \tmu^{(h-1)} (x) - \big(\beta_{h-1} + \tfrac{C}{l_{h-1} \psi \lambda} \sqrt{u_{h-1}}\big) \sigma^{(h-1)} (x) \Big) \label{eq:instant_regret_2} \\
	&\geq  \tmu^{(h-1)} (x_h^*) - \big(\beta_{h-1} + \tfrac{C}{l_{h-1} \psi \lambda} \sqrt{u_{h-1}}\big) \sigma^{(h-1)} (x_h^*), \label{eq:instant_regret_3}
\end{align}
where \cref{eq:instant_regret_2} follows from the elimination condition (see Line 15 in \cref{alg:cpe}), and \cref{eq:instant_regret_3} holds since $x_h^* \in \Xc_{h} \subseteq \Xc_{h-1}$. 

Combining \cref{eq:instant_regret_3} with \cref{eq:instant_regret_1}, we obtain
\begin{align}
	f(x_h^*) - f(x) &\leq   
	2\big(\beta_{h-1} + \tfrac{C}{l_{h-1} \psi \lambda} \sqrt{u_{h-1}}\big) \sigma^{(h-1)} (x_h^*) + 
	2\big(\beta_{h-1} + \tfrac{C}{l_{h-1} \psi \lambda} \sqrt{u_{h-1}}\big) \sigma^{(h-1)} (x). \\
	&\leq 4\big(\beta_{h-1} + \tfrac{C}{l_{h-1} \psi \lambda} \sqrt{u_{h-1}}\big) \max_{x \in \Xc_{h-1}} \sigma^{(h-1)} (x).
\end{align}
The desired result then follows by upper bounding $\max_{x \in \Xc_{h-1}} \sigma^{(h-1)} (x)$ according to \Cref{lemma:max_var_lemma}. 
\end{proof}
We are ready to prove our main theorem, which is restated as follows.
\mainthm*

\begin{proof}
Throughout the proof, we condition on the confidence bounds from \cref{lemma:corrupted_confidence_bounds} holding true.
We use $u_h(x)$ to denote the number of times action $x$ is played in epoch $h$, and bound the cumulative regret of \Cref{alg:cpe} as follows:
	\begin{align}
		R_T &= \sum_{h=0}^{H-1} \sum_{x \in \Sc_h} \big(f(x^*) - f(x)\big) u_h(x) \label{eq:regret_main_1}  \\
		&\leq  u_0 B  + \sum_{h=1}^{H-1} \sum_{x \in \Sc_h} \big(f(x^*) - f(x)\big) u_h(x) \label{eq:regret_main_2}  \\
		&\leq  u_0 B  + \sum_{h=1}^{H-1} \sum_{x \in \Sc_h} u_h(x) \cdot 4 \Big(\beta_{h-1} + \tfrac{C\sqrt{u_{h-1}}}{l_{h-1} \psi \lambda}\Big) \sqrt{\tfrac{\eta (2\lambda + 1) \gamma_{l_{h-1}}}{l_{h-1}}}. \label{eq:regret_main_3}
		%	&\leq  u_0 B  + \sum_{h=1}^{H-1} 2u_h  \Big(\beta_{u_{h-1}} + \tfrac{C\sqrt{u_{h-1}}}{l_{h-1} \psi \lambda}\Big) \sqrt{\tfrac{\eta (2\lambda + 1) \gamma_{T}}{l_{h-1}}}.
		% &\leq  u_0 B  + \sum_{h=1}^{H-1} 2l_h(2 + \psi |S_h|) \Big(\beta_{u_{h-1}} + \tfrac{C\sqrt{u_{h-1}}}{l_{h-1} \psi \lambda}\Big) \sqrt{\tfrac{\eta (2\lambda + 1) \gamma_{T}}{l_{h-1}}}  \\
		% &\leq  u_0 B  + \sum_{h=1}^{H-1} 2(2 + \psi |S_h|) \Big(2\beta_{T} \sqrt{\eta (2\lambda + 1) l_{h-1}\gamma_{T}} + \tfrac{2C\sqrt{l_{h-1}(2 + \psi |S_{h-1}|)}}{\psi \lambda} \sqrt{\tfrac{\eta (2\lambda + 1) \gamma_{T}}{l_{h-1}}}\Big) \\
		% &= u_0 B  + \sum_{h=1}^{H-1} 2(2 + \psi |S_h|) \Big(2\beta_{T} \sqrt{\eta (2\lambda + 1) l_{h-1}\gamma_{T}} + \tfrac{2C\sqrt{(2 + \psi |S_{h-1}|)\eta (2\lambda + 1) \gamma_{T}}}{\psi \lambda}\Big) \\
		% &= u_0 B  + \sum_{h=1}^{H-1} 2(2 + \psi |S_h|) \Big(2\beta_{T} \sqrt{3 \eta l_{h-1}\gamma_{T}} + \tfrac{2C\sqrt{3(2 + \psi |S_{h-1}|)\eta\gamma_{T}}}{\psi}\Big) \\
		% &\leq u_0 B  + \sum_{h=1}^{H-1} 2(2 + \psi |S_h|) \Big(2\beta_{T} \sqrt{3 \eta T \gamma_{T}} + \tfrac{2C\sqrt{3(2 + \psi |S_{h-1}|)\eta\gamma_{T}}}{\psi}\Big) \\
	\end{align}
	Here, \cref{eq:regret_main_1} follows since only points from $\Sc_h$ are queried by the algorithm (and each point $x \in \Sc_h$ is queried $u_h(x)$ times), \cref{eq:regret_main_2} follows since the bound on the RKHS norm implies the same bound on the maximal function value when the kernel $k(\cdot, \cdot)$ is normalized (namely, $k(x,x)\leq 1$ for every $x$):
	\begin{equation}	
	  |f(x)| = |\langle f, k(x,\cdot) \rangle_k| \leq \|f\|_{k} \|k(x,\cdot) \|_k  = \|f\|_{k} \langle k(x,\cdot), k(x,\cdot)  \rangle_k^{1/2} \leq B \cdot k(x,x)^{1/2} \leq B,
	\end{equation}
	and \cref{eq:regret_main_3} follows from \cref{lemma:instant_regret_lemma} and by noting that $f(x^*) = \max_{x \in \Xc_h} f(x)$ for every $h =0,1,\dots,H-1$ (i.e., since the confidence bounds of \cref{lemma:corrupted_confidence_bounds} are valid, the global maximizer never gets eliminated). Next, from \cref{eq:regret_main_3}, by noting that $\sum_{x \in \Sc_h} u_h(x) = u_h$, we have:
	\begin{align}
		R_T &\leq u_0 B  + \sum_{h=1}^{H-1} 4u_h \Big(\beta_{h-1} + \tfrac{C\sqrt{u_{h-1}}}{l_{h-1} \psi \lambda}\Big) \sqrt{\tfrac{\eta (2\lambda + 1) \gamma_{l_{h-1}}}{l_{h-1}}} \\
			&\leq u_0 B  + \sum_{h=1}^{H-1} 4l_h(2 + \psi |\Sc_h|) \Big(\beta_{h-1} + \tfrac{C\sqrt{l_{h-1}(2 + \psi |\Sc_{h-1}|)}}{l_{h-1} \psi \lambda}\Big) \sqrt{\tfrac{\eta (2\lambda + 1) \gamma_{l_{h-1}}}{l_{h-1}}} \label{eq:regret_main_4} \\
			&\leq u_0 B  + \sum_{h=1}^{H-1} 4l_h(2 + \psi |\Sc_h|) \Big(\beta_{\Hbar} + \tfrac{C\sqrt{l_{h-1}(2 + \psi |\Sc_{h-1}|)}}{l_{h-1} \psi \lambda}\Big) \sqrt{\tfrac{\eta (2\lambda + 1) \gamma_{T}}{l_{h-1}}} \label{eq:regret_main_5} \\
			&= u_0 B + \sum_{h=1}^{H-1} 8(2 + \psi |\Sc_h|) \Big( \beta_{\Hbar} \sqrt{\eta (2\lambda + 1) l_{h-1}\gamma_{T}} +  \tfrac{C\sqrt{(2 + \psi |\Sc_{h-1}|) \eta (2\lambda + 1) \gamma_{T}}}{\psi \lambda}\Big) \label{eq:regret_main_missed} \\
			&\leq u_0 B + \sum_{h=1}^{H-1} 8(2 + \psi |\Sc_h|) \Big( \beta_{\Hbar} \sqrt{\eta (2\lambda + 1) T\gamma_{T}} +  \tfrac{C\sqrt{(2 + \psi |\Sc_{h-1}|) \eta (2\lambda + 1) \gamma_{T}}}{\psi \lambda}\Big) \label{eq:regret_main_6} \\
			&\leq u_0 B +  8 \Hbar (2 +  \tfrac{2\psi}{\ln \eta} \gamma_{T}) \Big( \beta_{\Hbar} \sqrt{\eta (2\lambda + 1) T\gamma_{T}} +  \tfrac{C\sqrt{(2 +  \tfrac{2\psi}{\ln \eta} \gamma_{T}) \eta (2\lambda + 1) \gamma_{T}}}{\psi \lambda}\Big), \label{eq:regret_main_7}
	\end{align}
	where \cref{eq:regret_main_4} follows from the bound on $u_h$ in \cref{lemma:length_of_epoch}, \cref{eq:regret_main_5} from the monotonicity of $\beta_h$ in $h \in \{1,\dotsc,\Hbar\}$ and $\gamma_t \in \{1,\dotsc,T\}$ in $t$ (see \cref{lemma:Hbar} for the statement that $h \le \Hbar$), \cref{eq:regret_main_missed} by rearranging and using $l_h = 2 l_{h-1}$, \cref{eq:regret_main_6} by upper bounding $l_{h-1}$ by $T$, and \cref{eq:regret_main_7} from the bound on $|\Sc_h|$ in \cref{lemma:number_of_switches}.

	By setting, $\psi = \tfrac{\ln \eta}{2 \gamma_{T}}$ as in the theorem statement, it follows that
		% \begin{equation}
		% 	R_T \leq u_0 B  +  6H \Big(2\beta_{T} \sqrt{3 \eta T \gamma_{T}} + \tfrac{12C\sqrt{\eta} \gamma_{T}^{3/2}}{\ln \eta}\Big).
		% \end{equation}
	\begin{equation}
		R_T \leq  u_0 B + 24 \Hbar \Big( \beta_{\Hbar} \sqrt{\eta (2\lambda + 1) T\gamma_{T}} +  C\sqrt{\tfrac{12\eta (2\lambda + 1) \gamma_T^3}{\lambda^2(\ln \eta)^2}}\Big).
	\end{equation}
	Treating $\lambda > 0$ as a constant, it suffices to set the switching parameter $\eta$ to some constant value (above one), so we choose $\eta = e$ (Euler's number). Then, we note that $u_0 = O(1)$ by design in the algorithm (recall that $l_0 = 2$, and note that $\psi \le 1$ except possibly when $T$ is small), and we write our regret bound as
    %Then, from \cref{lemma:length_of_epoch}, we note that $u_0 \leq 8B$ is also bounded by a constant (here, we trivially assume that $\gamma_T \geq 1$), and we can then write our regret bound as
	\begin{equation}
		R_T \leq  O\big( \Hbar ( \beta_{\Hbar} \sqrt{T\gamma_{T}} +  C\gamma_T^{3/2})\big).
	\end{equation}
	%Finally, the total number of epochs $H$ is random, but it can be deterministically upper bounded as follows. From noting that $u_h \geq 2l_h\gamma_T^{-1}$, and $l_h$ is exponentially increasing with the number of epochs, we have that $H = O(\log T)$.
	By using the notation $\Otilde(\cdot)$ to hide the multiplicative $\Hbar = \log_2 T$ factor, the final result then follows:
	\begin{equation}
		R_T \leq  \Otilde\big(\beta_{\Hbar} \sqrt{T\gamma_{T}} +  C\gamma_T^{3/2}\big).
	\end{equation}
\end{proof}

\section{Alternative Approach: Reduction to Linear Bandits} \label{sec:linear}

In this section, we introduce an alternative method for corrupted kernelized bandit optimization, and discuss its limitations. We reduce the kernelized bandit problem of dimension $d$ to a linear bandit problem of dimension $D$\footnote{The notation $D$ for the continuous domain $[0,1]^d$ will not be used in this appendix, so it it safe to use $D$ for this dimension quantity.} using techniques from \cite{Tak20}, and then solve the corrupted linear bandit problem using a modified version of the Robust Phased Elimination algorithm \cite{bogunovic2021stochastic}.

We consider a finite set of $D$ actions $\Xc_D=\{s_1,\dots,s_D\}\subseteq \Xc$,and denote by $V(\Xc_D)$ the vector subspace of $\Hc_k$ spanned by $\{k(\cdot, s_i):s_i\in \Xc_D\}$.  Following \cite{Tak20}, we consider using the orthogonal projection $\Pi_D(f)$ of $f$ onto $V(\Xc_D)$ as an approximation of $f$, where $\Pi_D(f)$ is also the unique interpolant of $f$ on $\Xc_D$ in $V(\Xc_D)$, i.e., $\Pi_D(f)(s_i)=f(s_i)$ for $i=1,\dots,D$. To design this set $\Xc_D$, we use Algorithm \ref{algo:newton} (taken from \cite{Tak20}), which takes the kernel $k$, domain $\Xc$, and an admissible error $e$ as input, and outputs $\Xc_D$ along with the Newton basis of $V(\Xc_D)$. Recalling that $\|f\|_k\leq B$, we run Algorithm \ref{algo:newton} with admissible error $e=\Delta/B$ for some constant $\Delta>0$. We will discuss the choice of $\Delta$ later.

\begin{algorithm}
    \caption{Newton Basis Construction \cite{Tak20}}
    \label{algo:newton}
    \begin{algorithmic}[1]
        \INPUT Kernel $k$, domain $\Xc$, admissible error $e$ \\
        \hspace*{-4.2ex} {\bf Output:} $\Xc_D=\{s_1,\dots,s_D\}\subseteq\Xc$, Newton basis $N_1,\dots,N_D$ of $V(\Xc_D)$
        \STATE $s_1\gets \argmax_{x\in \Xc} k(x,x)$
        \STATE $N_1(x)\gets k(x,s_1)/\sqrt{k(s_1,s_1)}$
        \FOR{$D \gets 1, 2, \dots$} 
        \STATE Define $P_D^2(x) = k(x,x)-\sum_{i=1}^D N_i^2 (x)$
        \IF{$\max_{x\in \Xc}P_D^2(x)<e^2$}
        \STATE {\bf return} {$\{s_1,\dots,s_D\}$ and $\{N_1,\dots,N_D\}$}
        \ENDIF
        \STATE $s_{D+1}\gets\argmax_{x\in \Xc}P_D^2(x)$
        \STATE $u(x)\gets k(x,s_{D+1})-\sum_{i=1}^D N_i(s_{D+1})N_i(x)$
        \STATE $N_{D+1}(x)\gets u(x)/\sqrt{P_D^2(s_{D+1})}$
        \ENDFOR
    \end{algorithmic}
\end{algorithm}

By rearranging the equations in \cite{Tak20} (Theorem 6 therein), we have that the number of points returned by the algorithm is $D=O\big((\log\frac{1}{\Delta})^d\big)$ for kernels with infinite smoothness (in particular, the SE kernel), and $D=O(\Delta^{-d/\nu})$ for kernels with finite smoothness $\nu$ (in particular, the Mat\'ern-$\nu$ kernel).

Since the Newton basis $\{N_1,\dots,N_D\}$ returned is the Gram-Schmidt orthonormalization of the basis $\{k(\cdot, s_i):s_i\in \Xc_D\}$, we have for any $f\in\Hc_k$ and $x\in\Xc$ that
\begin{align}
    \bigg|f(x)-\sum_{i=1}^D \langle f, N_i \rangle N_i(x)\bigg|\leq \|f\|_k \cdot e \le \Delta
\end{align}
under the choice $e = \Delta / B$.
Hence, for any fixed black-box $f\in\Hc_k$ with $\|f\|_k \le B$, there exists a $\theta\in\mathbb{R}^D$ with $\|\theta\|_2 \leq B$ such that for any $x\in\Xc$,
\begin{align}
    |f(x)-\langle \theta, \xtilde\rangle| \leq \Delta,
\end{align}
where for any given point $x$, we define $\xtilde=[N_1(x),\dots,N_D(x)]^T$. Now, we can reduce the corrupted kernelized bandit problem to a variant of the corrupted linear bandit problem \cite{bogunovic2021stochastic} on the transformed domain $\widetilde{\Xc}=\{\xtilde:x\in\Xc\}$ of dimension $D$, where $|\ytilde_t-\langle \theta, \xtilde_t\rangle-c_t-\epsilon_t|\leq \Delta$ for $t=1,\dots,T$.

\subsection{A Variant of Robust Phased Elimination}

We apply Algorithm \ref{algo:rpe}, a variant of the Robust Phased Elimination algorithm for stochastic linear bandits \cite{bogunovic2021stochastic}, on the space $\widetilde{\Xc}$ of dimension $D$, where the only difference from the original algorithm is the confidence bound in the elimination rule.

\begin{algorithm}
    \caption{Robust Phased Elimination}
    \label{algo:rpe}
    \begin{algorithmic}[1]
        \INPUT Actions $\widetilde{\Xc}\subseteq \RR^D$, kernel $k$, admissible error $e$, confidence $\delta\in(0,1)$, truncation parameter $\alpha\in(0,1)$, time horizon $T$
        \STATE $h\gets 0, m_0\gets 4D(\log\log D +18),\Ac_0\gets \widetilde{\Xc}$.
        % \STATE $\widehat{C}_h\gets C$ for known $C$, or $\widehat{C}_h\gets\min\{\frac{\sqrt{T}}{m_0\log_2T},m_0\sqrt{D}2^{\log_2T-h}\}$ for unknown $C$.
        \STATE Compute design $\zeta_h:\Ac_h\to[0,1]$ such that
        \begin{align}
            \max_{\xtilde\in\Ac_h}||\xtilde||_{\Gamma(\zeta_h)^{-1}}^2\leq 2D\text{, and }|\text{supp}(\zeta_h)|\leq m_0,
        \end{align}
        where $\Gamma(\zeta_h)=\sum_{\xtilde\in\Ac_h}\zeta_h(\xtilde)\xtilde\xtilde^T$ (e.g., using Frank-Wolfe \cite{lattimore2019learning})
        \STATE $u_h(\xtilde)\gets0$ if $\zeta(\xtilde)=0$, and $u_h(\xtilde)\gets\lceil m_h \max\{\zeta_h(\xtilde),\alpha\}\rceil$ otherwise.
        \STATE Take each action $x$ such that $\xtilde\in\Ac_h$ exactly $u_h(\xtilde)$ times, and get rewards $\{\ytilde_t\}_{t=1}^{u_h}$, where $u_h=\sum_{\xtilde\in\Ac_h}u_h(\xtilde)$.
        \STATE Estimate the parameter vector $\widetilde{\theta}_h$:
        \begin{align}
            \widetilde{\theta}_h=\Gamma_h^{-1}\sum_{t=1}^{u_h} \xtilde_t u_h(\xtilde_t)^{-1}\sum_{s\in\mathcal{T}(\xtilde_t)}\ytilde_s,
        \end{align}
        where $\Gamma_h^{-1}=\sum_{\xtilde\in\Ac_h} u_h(\xtilde)\xtilde\xtilde^T$ and $\Tc(\xtilde)=\{s\in\{1,\dots,u_h\}:\xtilde_s=\xtilde\}$.
        \STATE Update the active set of actions:
        \begin{align}
            \Ac_{h+1}\gets\Bigg\{\xtilde\in\Ac_h:\max_{\xtilde'\in\Ac_h}\langle\widetilde{\theta}_h,\xtilde'-\xtilde\rangle\leq4\Delta\sqrt{D(1+\alpha m_0)}+4\sqrt{\frac{D}{m_h}\log\frac{1}{\delta}}+\frac{4C}{\alpha m_h}\sqrt{D(1+\alpha m_0)}\Bigg\}.\label{eq:elimination}
        \end{align}
        \STATE $m_{h+1}\gets 2m_h, h\gets h+1$ and return to step 3 (terminating after $T$ actions are played).
    \end{algorithmic}
\end{algorithm}

The analysis of Algorithm \ref{algo:rpe} is very similar to that of \cite{bogunovic2021stochastic}, so we heavily rely on their auxiliary results and only focus on explaining the differences here.  
With $\widetilde{\theta}_h$ denoting the estimate of $\theta$ based on the corrupted observations $\{\widetilde{y}_t\}_{t=1}^{u_h}$ in the algorithm, and $\widehat{\theta}_h$ denoting the estimate of $\theta$ based on $\{\langle \theta, \xtilde_t\rangle+c_t+\epsilon_t\}_{t=1}^{u_h}$ (i.e., the corrupted observations if the linear model were exact) in the original algorithm, we have for all $h\geq 0$ and $\xtilde\in\Ac_h$ that
\begin{align}
    |\langle \xtilde, \widetilde{\theta}_h-\widehat{\theta}_h\rangle|
    \leq \Big\lvert \xtilde^T\Gamma_h^{-1}\sum_{t=1}^{u_h} \xtilde_t\Delta\Big\rvert
    \le \Delta\sum_{t=1}^{u_h} \big|\langle \xtilde,\Gamma_h^{-1}\xtilde_t\rangle\big|
    \stackrel{(a)}{\leq} \Delta\sqrt{u_h}\lvert\lvert \xtilde\rvert\rvert_{\Gamma_h^{-1}}
    \stackrel{(b)}{\leq} 2\Delta\sqrt{D(1+\alpha m_0)},\label{eq:extra}
\end{align}
where (a) uses the definition of $\|\cdot\|_{\Gamma_h^{-1}}$ and the fact that the $\ell_1$-norm is upper bounded by the $\ell_2$-norm times the square root of the vector length, and (b) uses Lemmas 2 and 3 from \cite{bogunovic2021stochastic}. Hence, in a fixed epoch $h$, we have for all $\xtilde\in\Ac_h$ that
\begin{align}
    |\langle \xtilde, \widetilde{\theta}_h-\theta\rangle|&\leq |\langle \xtilde, \widetilde{\theta}_h-\widehat{\theta}_h\rangle|+|\langle \xtilde, \widehat{\theta}_h-\theta\rangle|\\
    &\leq 2\Delta\sqrt{D(1+\alpha m_0)}+2\sqrt{\frac{D}{m_h}\log\frac{1}{\delta}}+\frac{2C}{\alpha m_h}\sqrt{D(1+\alpha m_0)},\label{eq:conf}
\end{align}
where the first term uses \eqref{eq:extra}, and the remaining terms are obtained with probability at least $1-2|\Xc|\delta$ by Lemma 4 in \cite{bogunovic2021stochastic}. 

Defining $\xbar=\argmax_{\xbar\in\widetilde{\Xc}} \langle \theta, \xbar\rangle$, by a similar analysis to Section A.2 in \cite{bogunovic2021stochastic}, we can show that the elimination rule in \eqref{eq:elimination} retains $\xbar$ in a given epoch with probability at least $1-2|\Xc|\delta$. 
%Using \eqref{eq:conf} and \eqref{eq:elimination}, we also have for a fixed epoch $h\geq 1$ and every $\xtilde\in\Ac_h$ that
%\begin{align}
%	\langle \theta, \xbar-\xtilde \rangle \leq 4 \Bigg( 2\Delta\sqrt{D(1+\alpha m_0)}+2\sqrt{\frac{D}{m_{h-1}}\log\frac{1}{\delta}}+\frac{2C}{\alpha m_{h-1}}\sqrt{D(1+\alpha m_0)} \Bigg)\label{eq:simple}
%\end{align}
%with probability at least $1-2|\Xc|\delta$.
Recalling that $x^\ast=\argmax_{x\in\Xc}f(x)$, we have
\begin{align}
    f(x^\ast)=\langle \theta, \xtilde^\ast \rangle + f(x^\ast)-\langle \theta, \xtilde^\ast\rangle\leq \langle \theta, \xtilde^\ast\rangle+\Delta\leq \langle \theta, \xbar \rangle+\Delta.
\end{align}
Hence, the cumulative regret can be upper bounded as follows
\begin{align}
    R_T=\sum_{t=1}^T f(x^\ast)-f(x_t)\leq \sum_{t=1}^T ( \langle \theta, \xbar \rangle+\Delta ) - ( \langle \theta, \xtilde_t \rangle-\Delta )= \sum_{t=1}^T \langle \theta, \xbar-\xtilde_t \rangle + 2\Delta T.
\end{align}
Again following the analysis of Section A.2 in \cite{bogunovic2021stochastic}, using \eqref{eq:conf} and \eqref{eq:elimination}, we can then show that the cumulative regret is
\begin{align}
    R_T=\Otilde\Bigg(\Delta T \sqrt{D} +\sqrt{DT\log\frac{|\Xc|}{\delta}} + C D^{3/2}\Bigg) \label{eq:RT_linear}
\end{align}
with probability at least $1-\delta$.
%For unknown $C$, \cite[Appendix A.3]{bogunovic2021stochastic} has shown that there is $O(C^2)$ additional regret incurred. Hence, the cumulative regret is
%\begin{align}
%    R_T=\Otilde\Bigg(\Delta T \sqrt{D} +\sqrt{DT\log\frac{|\Xc|}{\delta}} + D^{3/2}\log T + C^2\Bigg)
%\end{align}
%with probability at least $1-\delta$.

\subsection{The choice of $\Delta$}

The only remaining step now is to find a proper choice of $\Delta$, which is what dictates the choice of $D$ (along with the kernel).  The choice of $\Delta$ can be optimized with respect to the kernel parameters, and the optimal scaling is achieved by equating the first terms in \eqref{eq:RT_linear} with one of the other two terms (whichever is larger).  We first consider the choice $\Delta=\frac{1}{\sqrt{T}}$, which equates the first two terms (up to the $\log\frac{|\Xc|}{\delta}$ factor).

With $\Delta=\frac{1}{\sqrt{T}}$, it is known from \cite{Tak20} (Corollary 7 therein) that Algorithm \ref{algo:newton} results in $D=O\big((\log T)^d\big)$ for the SE kernel and $D=O(T^{\frac{d}{2\nu}})$ for the Mat\'ern kernel. Hence, the cumulative regret of our method is upper bounded as follows:
\begin{itemize}
    \item For the SE kernel, 
    \begin{equation}
        R_T=\Otilde\Big(\sqrt{T(\log T)^d\log\frac{|\Xc|}{\delta}} + C (\log T)^\frac{3d}{2}\Big). \label{eq:lin_se}
    \end{equation}
    % \item for the SE kernel with unkown $C$, $R_T=\Otilde\big(\sqrt{T(\log T)^d\log\frac{|\Xc|}{\delta}} + (\log T)^\frac{3d+2}{2} + C^2\Big)$;
    \item For the Mat\'ern kernel, 
    \begin{equation}
        R_T=\Otilde\Big(\sqrt{T^\frac{d+2\nu}{2\nu}\log\frac{|\Xc|}{\delta}} + C T^\frac{3d}{4\nu} \Big). \label{eq:lin_mat1}
    \end{equation}
    % \item for the Mat\'ern kernel with unknown $C$, $R_T=\Otilde\big(\sqrt{T^\frac{d+2\nu}{2\nu}\log\frac{|\Xc|}{\delta}} + T^\frac{3d}{4\nu} \log T + C^2\Big)$.
\end{itemize}
%\begin{rem}
%    Our method and analysis can be extended to infinite action settings using a covering argument \cite{Lat20}.
%\end{rem}
For the Mat\'ern kernel, we can sometimes do better by equating the first and third terms in \eqref{eq:RT_linear}, whereas for the SE kernel this is never the case.  The exact optimal choice depends on how $C$ scales with respect to $T$, but to avoid unwieldy expressions, we focus here on the direct $T$ dependence in \eqref{eq:RT_linear} so treat $C$ as a constant.  Equating the first and third terms, and ignoring the $\log T$ term, we find that we should set $\Delta=1/T^\frac{\nu}{d+\nu}$, which yields $D=O(T^\frac{d}{d+\nu})$ \cite{Tak20}, and gives
\begin{equation}
    R_T=\Otilde\Big(\sqrt{T^\frac{2d+\nu}{d+\nu}\log\frac{|\Xc|}{\delta}} + C T^\frac{3d}{2(d+\nu)}\Big). \label{eq:lin_mat2}
\end{equation}
We compare \eqref{eq:lin_mat1} and \eqref{eq:lin_mat2} for various $(\nu,d)$ pairs below.

For the SE kernel, the bound \eqref{eq:lin_se} turns out to be strong, matching our main result (Section \ref{sec:alg}), though we believe that our algorithm's feature of directly using the GP model (i.e., avoiding linear approximations) is still desirable.

For the Mat\'ern kernel, however, the resulting bound is not as strong; in particular, the non-corrupted terms in both \eqref{eq:lin_mat1} and \eqref{eq:lin_mat2} are larger than the corresponding term $\sqrt{T \gamma_T} = \Otilde(T^{\frac{\nu+d}{2\nu + d}})$ in our main result.\footnote{For \eqref{eq:lin_mat1}, this is seen by writing $T^{\frac{d+2\nu}{2\nu}} = T^{1 + \frac{d}{2\nu}}$ and noting that $\frac{d}{2\nu}$ exceeds $\gamma_T = \Otilde(T^{\frac{d}{2\nu + d}})$.  For \eqref{eq:lin_mat2}, it is seen by writing $\sqrt{T^\frac{2d+\nu}{d+\nu}} = T^\frac{2d+\nu}{2d+2\nu}$, and noting that subtracting $d$ from both the numerator and denominator makes the fraction smaller.}  The same goes for the corrupted terms, with the root cause for both terms being that either choice of $D$ above is strictly higher than $\gamma_T$.  For the corrupted term, this is further highlighted by comparing the regimes in which the bound remains sublinear:
\begin{itemize}
    \item The term $T^\frac{3d}{4\nu}$ in \eqref{eq:lin_mat1} is sublinear when $\nu > \frac{3}{4}d$;
    \item The term $T^\frac{3d}{2(d+\nu)}$ in \eqref{eq:lin_mat2} is sublinear when $\nu > \frac{d}{2}$;
    \item The analogous term $\gamma_T^{3/2} = \Otilde(T^{\frac{3d}{4\nu+2d}})$ in the main body is sublinear under the milder condition $\nu > d/4$. 
\end{itemize}
Note that in general, we have for constant $C$ that \eqref{eq:lin_mat1} is a better bound than \eqref{eq:lin_mat2} when $\nu > d$, \eqref{eq:lin_mat2} is better than \eqref{eq:lin_mat1} when $\nu \in \big(\frac{d}{2},d\big)$, and both fail to be sublinear when $\nu \le \frac{d}{2}$.

\section{Supplementary Experimental Results}\label{sec:supp_exp}

This section contains the experimental results on $f_1$ with $C=100$ (Figure \ref{fig:f1_100}), and on Robot3D with $C=50$ (Figure \ref{fig:robot3d_50}).  The overall findings are generally similar to those in the main text, and are not repeated here.

\begin{figure*}[h!]
	\centering
    \minipage[t]{0.33\textwidth}
    \includegraphics[width=\linewidth]{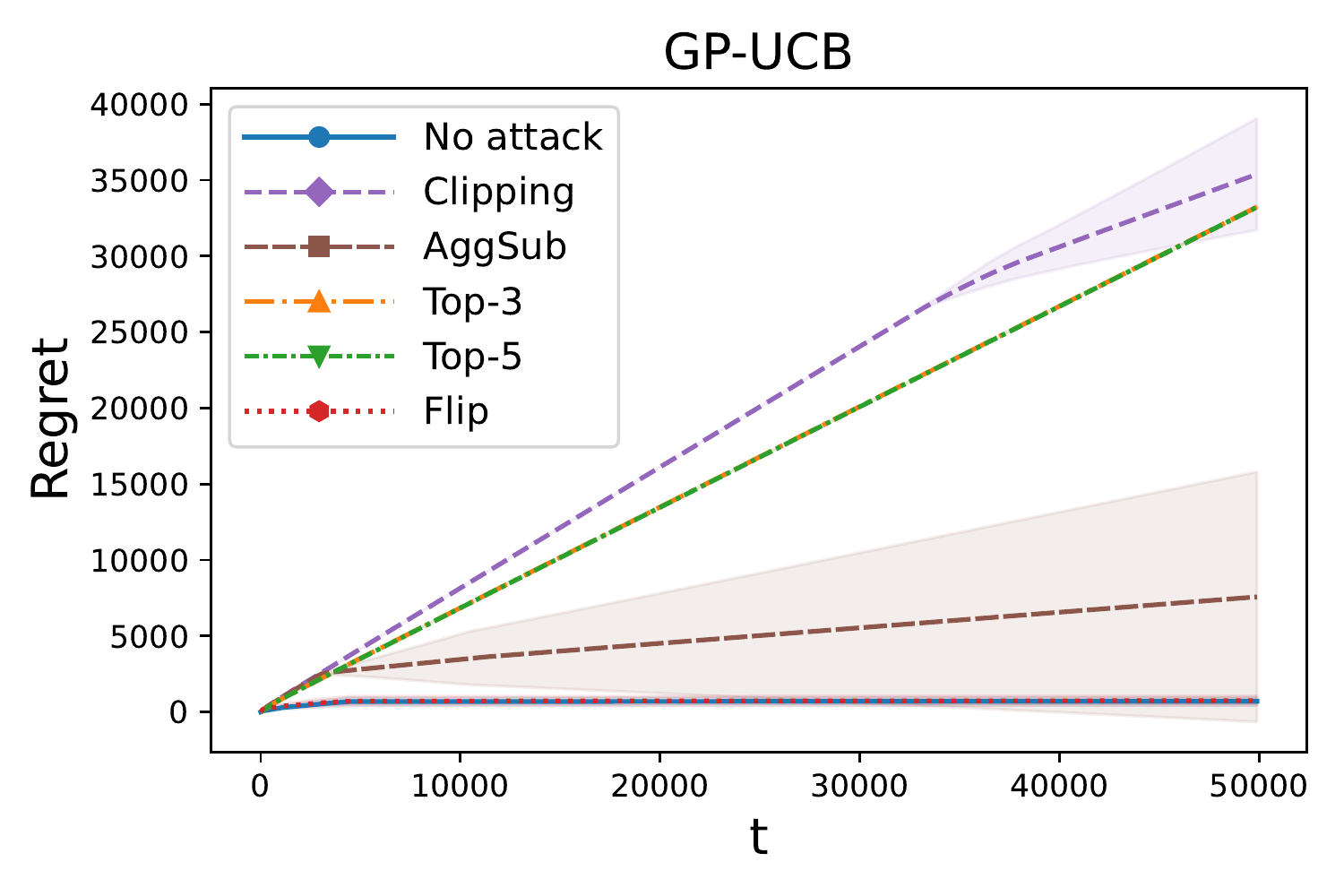}
    \endminipage
    \minipage[t]{0.33\textwidth}
    \includegraphics[width=\linewidth]{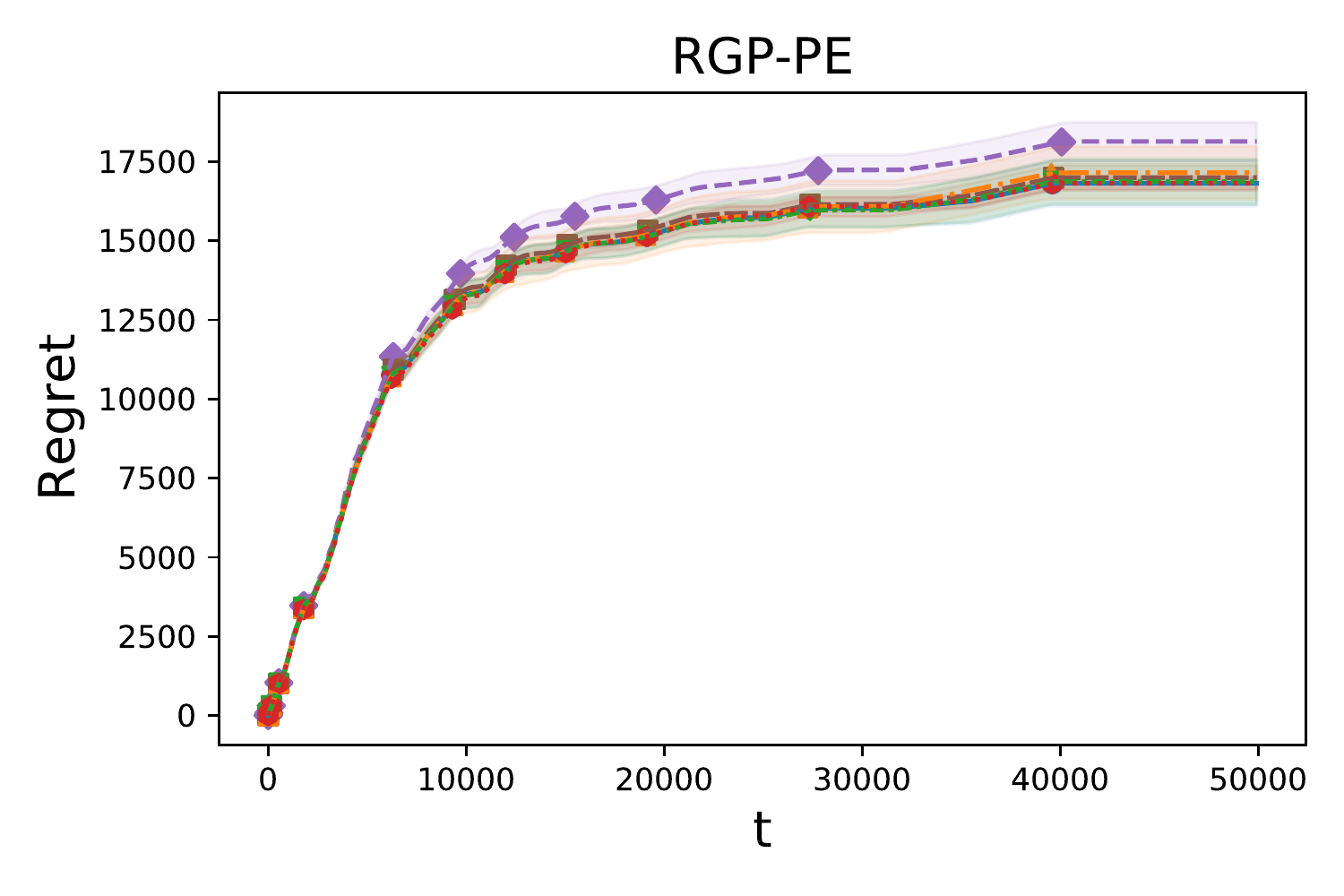}
    \endminipage
    \minipage[t]{0.33\textwidth}
    \includegraphics[width=\linewidth]{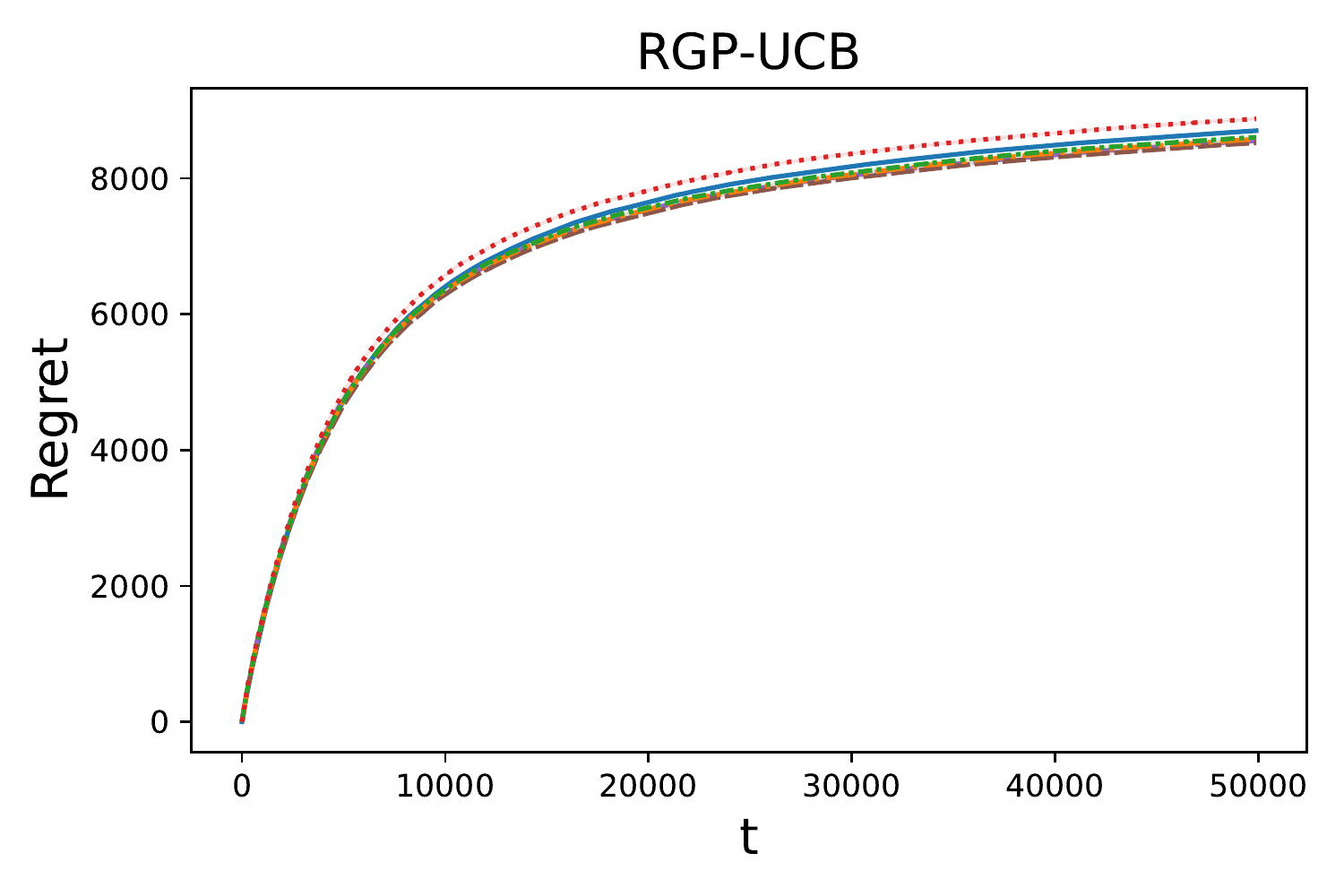}
    \endminipage
    \newline
    \minipage[t]{0.33\textwidth}
    \includegraphics[width=\linewidth]{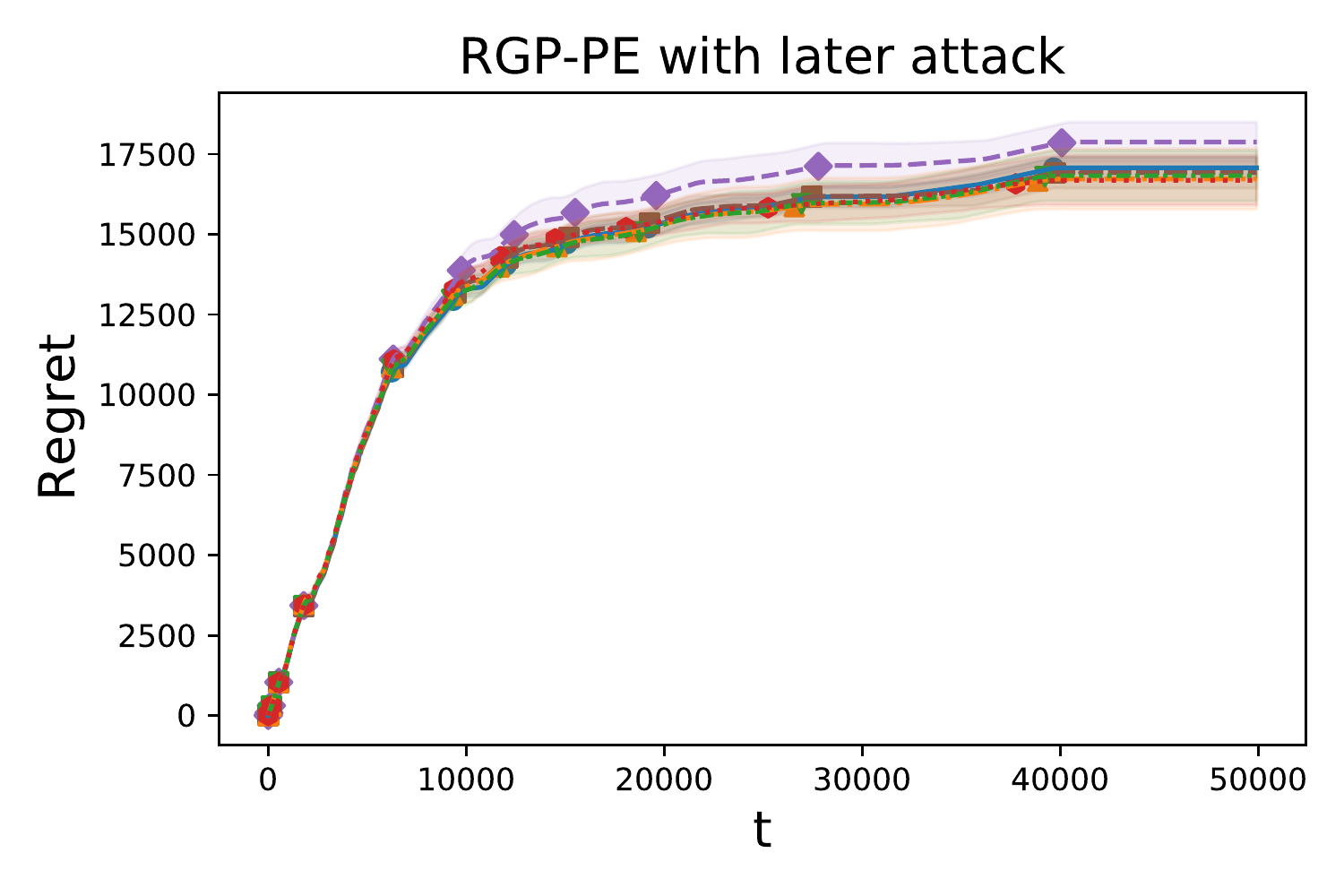}
    \endminipage
    \minipage[t]{0.33\textwidth}
    \includegraphics[width=\linewidth]{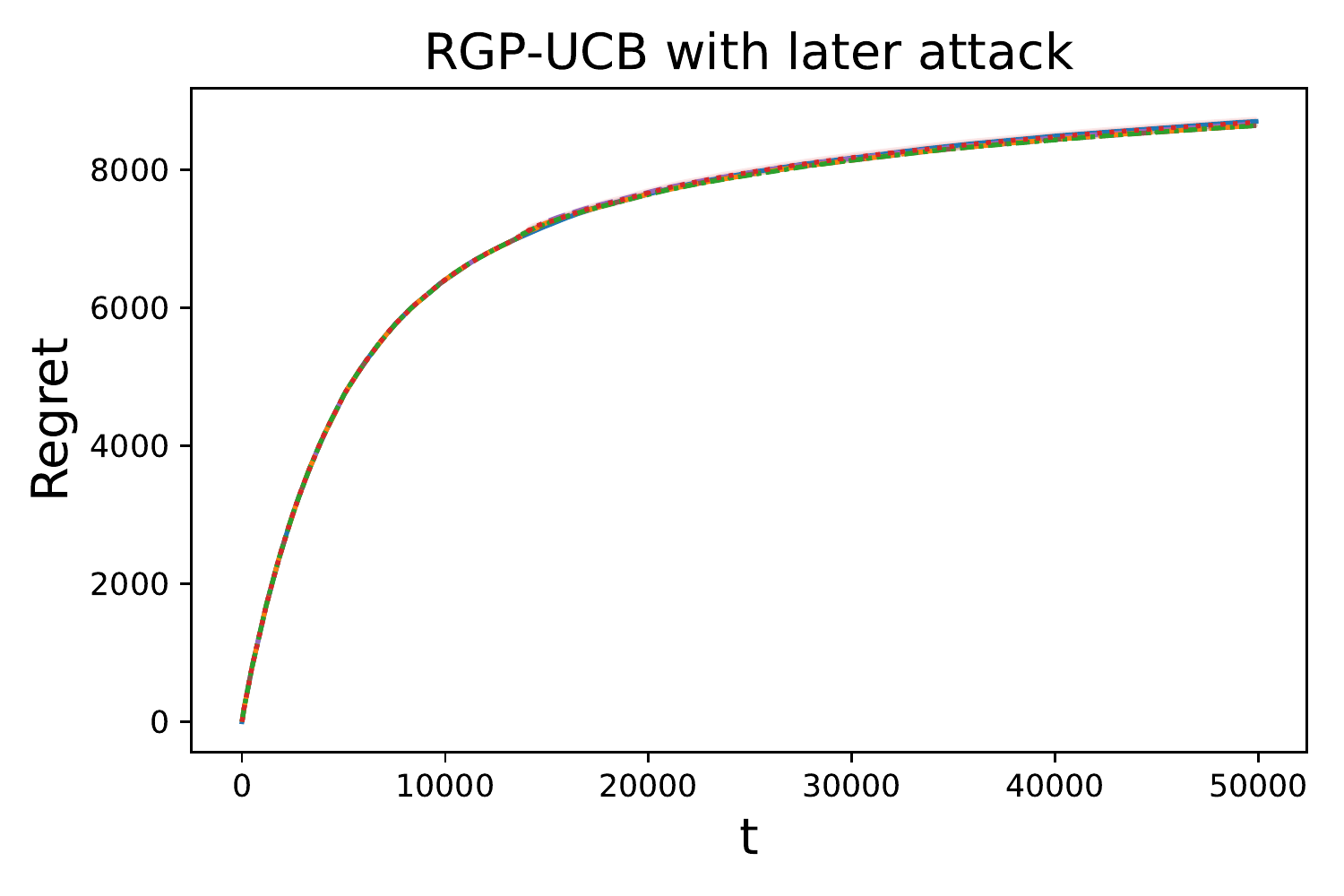}
    \endminipage
    \caption{Performance on $f_1$ with $C=100$. Note that for GP-UCB, the curves for Top-3 and Top-5 are indistinguishable, so only the latter is clearly visible.  Similar trends are observed to the case $C=50$ in Figure \ref{fig:f1_50}.}
    \label{fig:f1_100}
\end{figure*}

\begin{figure*}[h!]
	\centering
    \minipage[t]{0.33\textwidth}
    \includegraphics[width=\linewidth]{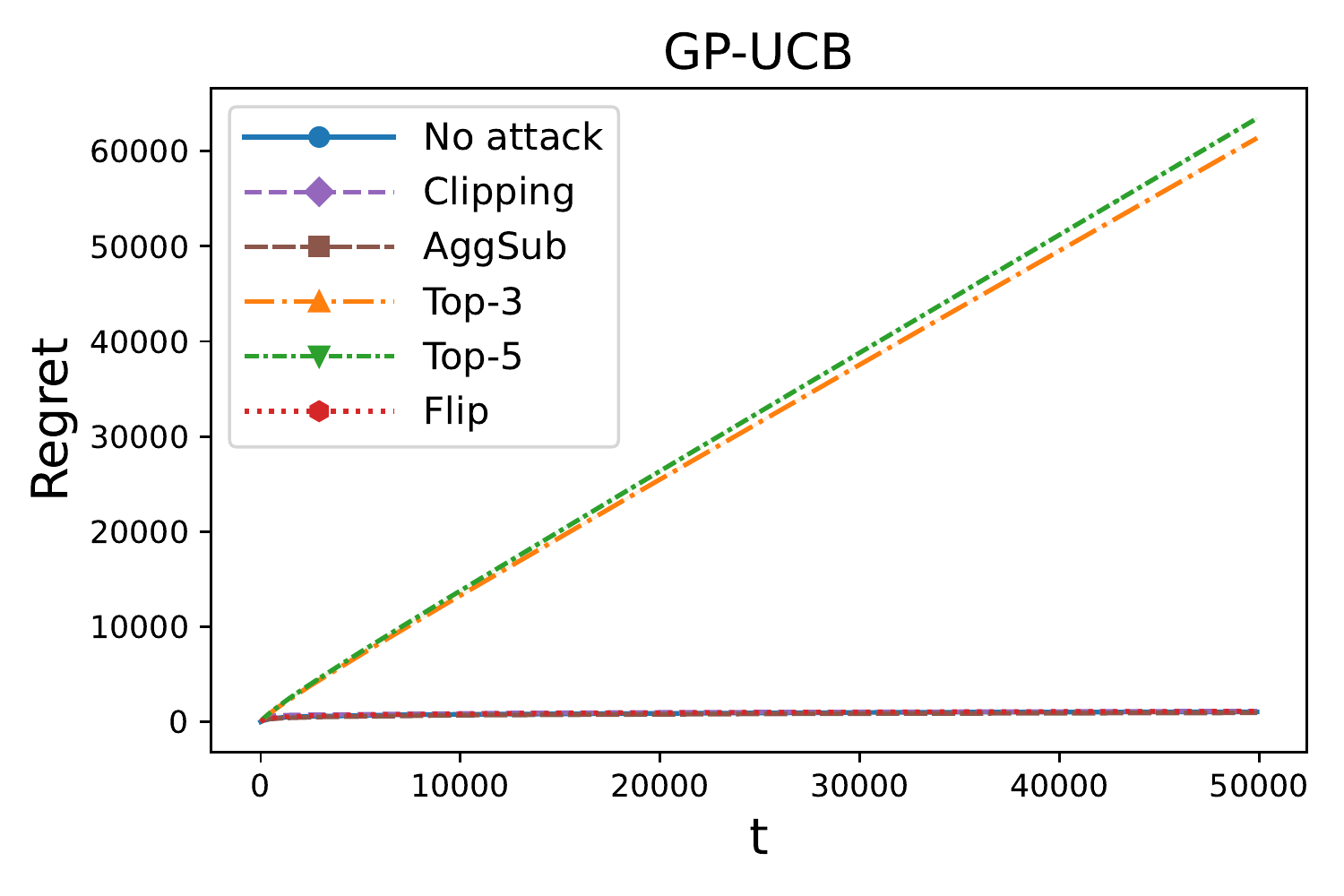}
    \endminipage
    \minipage[t]{0.33\textwidth}
    \includegraphics[width=\linewidth]{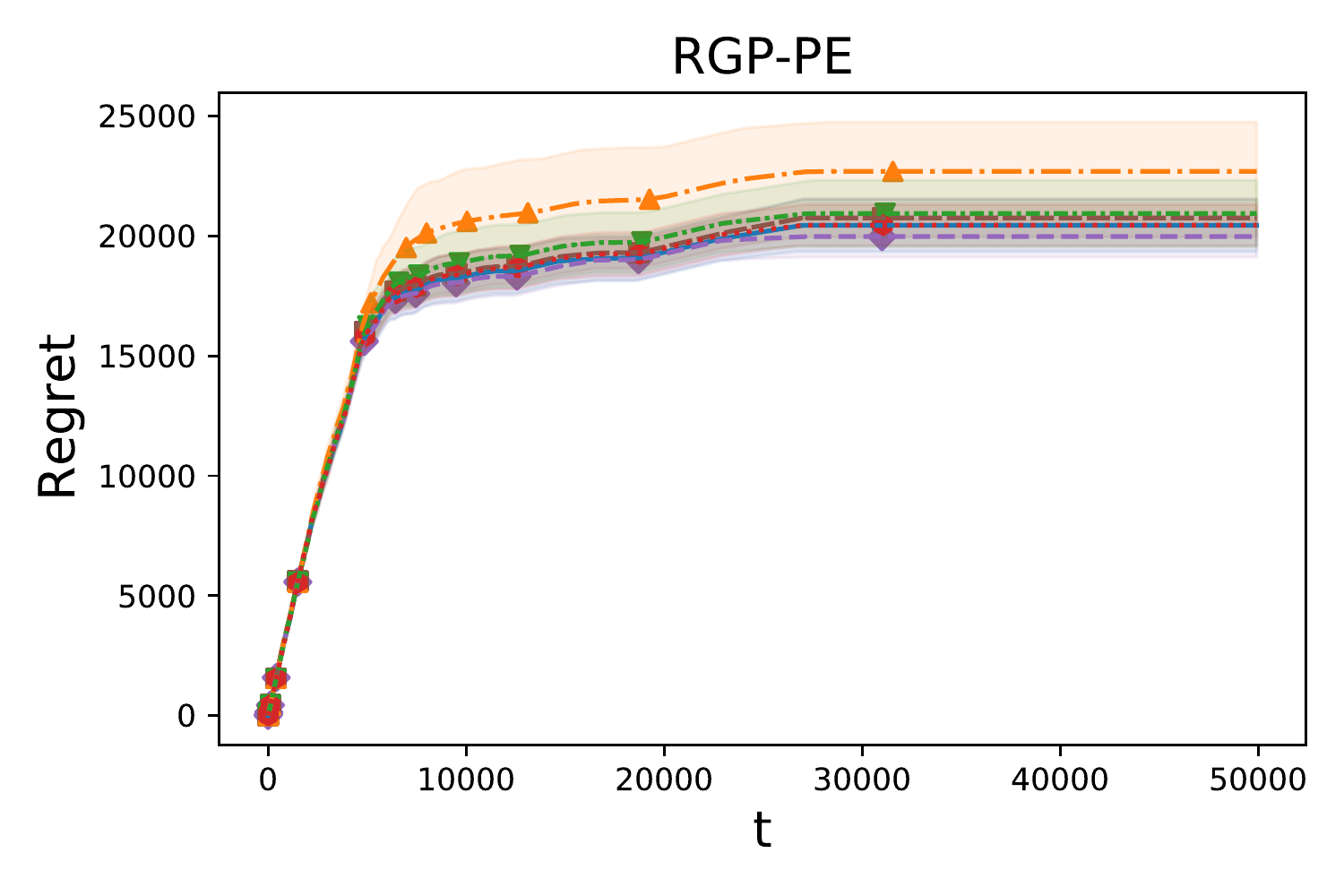}
    \endminipage
    \minipage[t]{0.33\textwidth}
    \includegraphics[width=\linewidth]{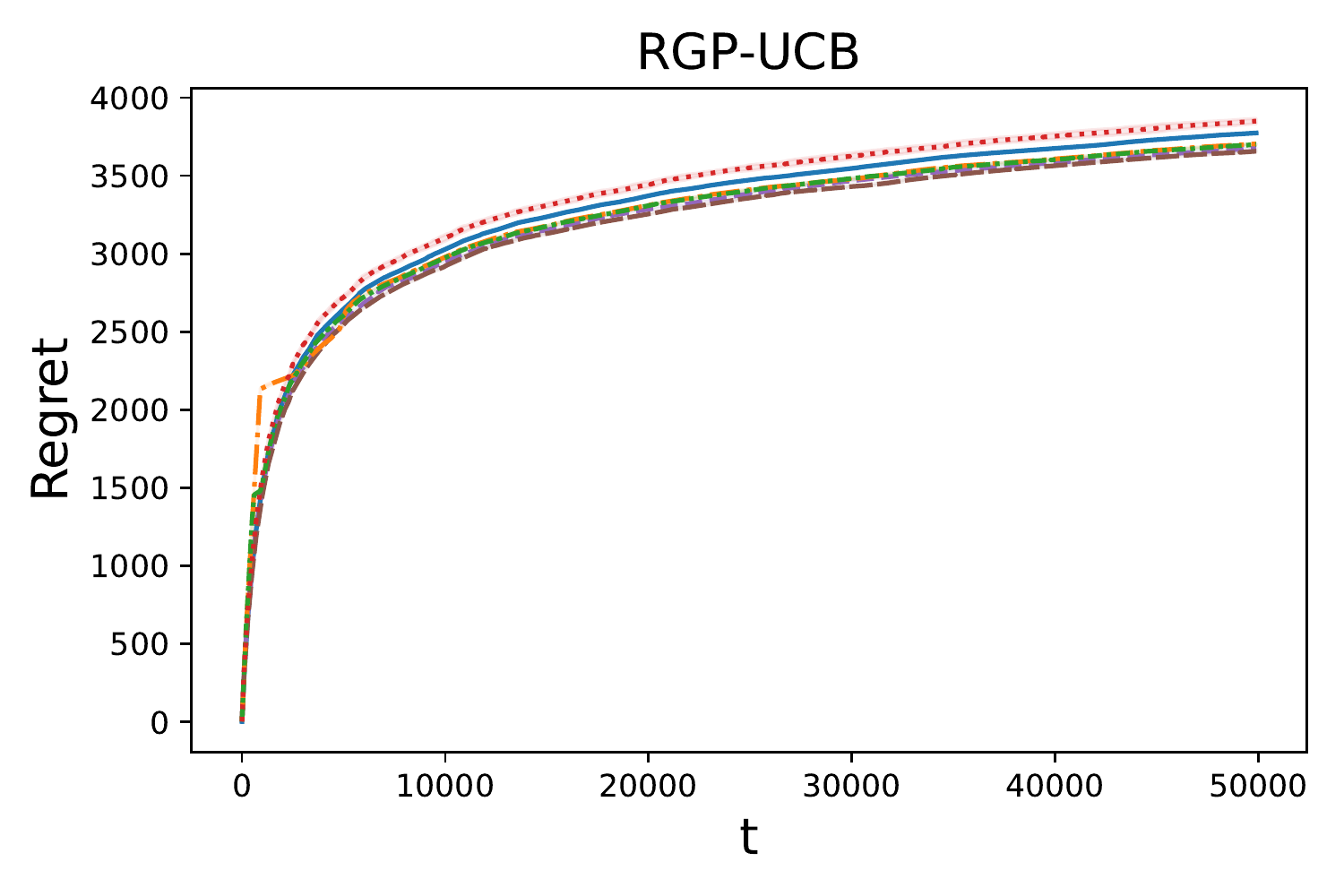}
    \endminipage
    \newline
    \minipage[t]{0.33\textwidth}
    \includegraphics[width=\linewidth]{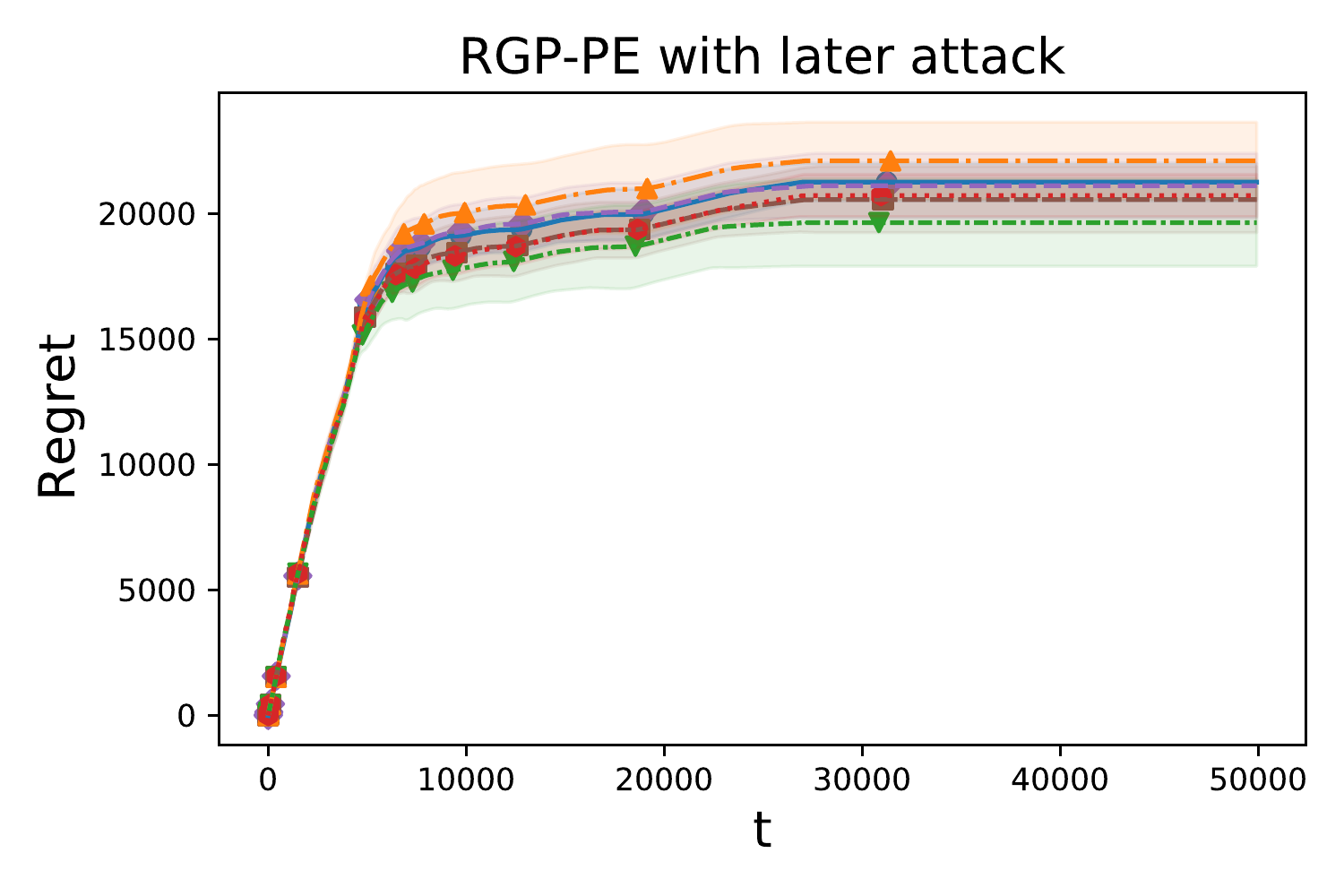}
    \endminipage
    \minipage[t]{0.33\textwidth}
    \includegraphics[width=\linewidth]{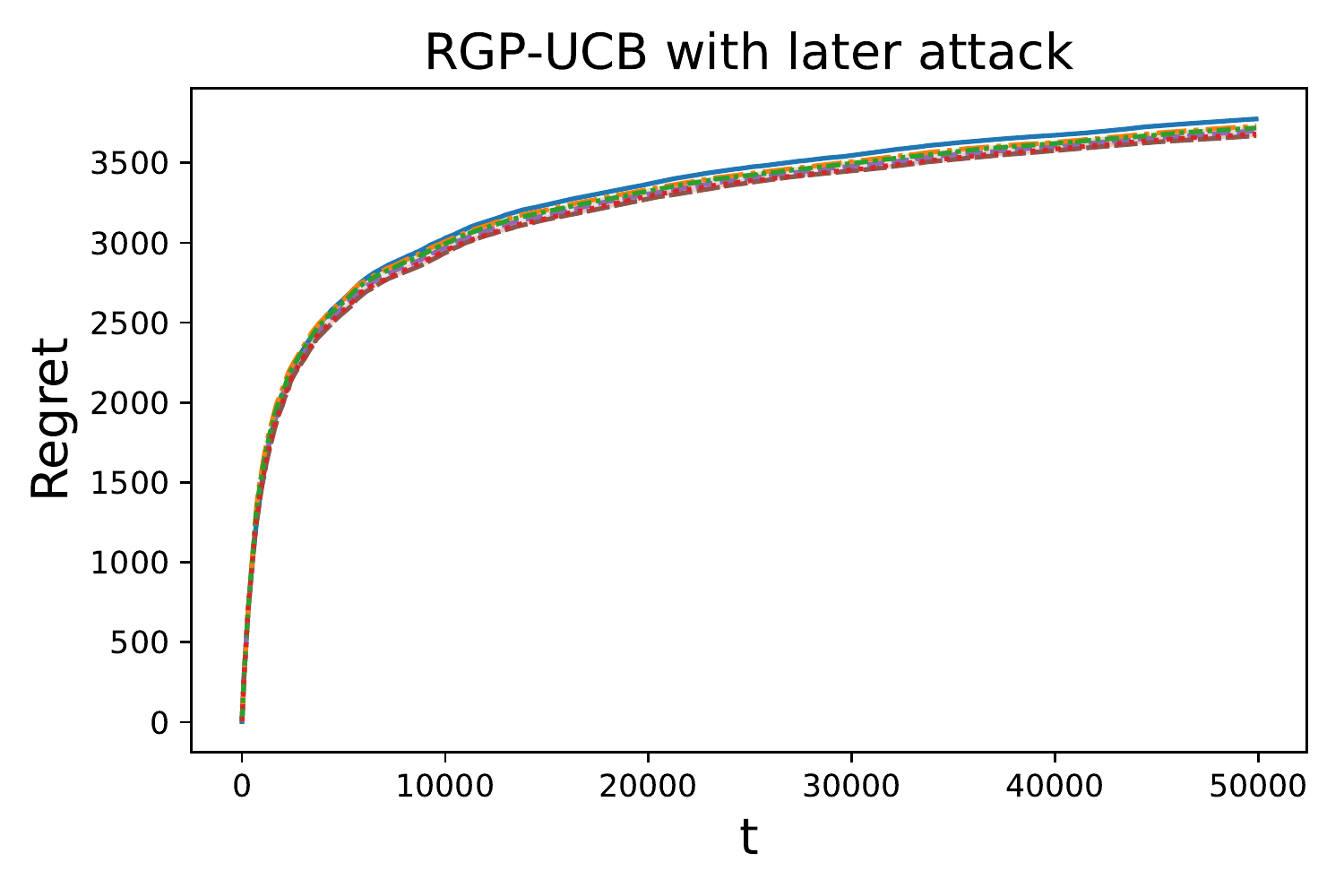}
    \endminipage
    \caption{Performance on Robot3D with $C=50$.  Similar trends are observed to the case $C=100$ in Figure \ref{fig:robot3d_100}.}
    \label{fig:robot3d_50}
\end{figure*}

\end{document}